\documentclass[12pt, reqno]{amsart}

\usepackage[margin=0.75in]{geometry}
\usepackage[
  bookmarks=true]{hyperref}
\usepackage[OT2, T1]{fontenc}
\usepackage[english]{babel}
\usepackage[utf8]{inputenc}
\usepackage{csquotes}
\usepackage[final]{microtype}
\usepackage{lmodern}
\usepackage{amsthm}
\usepackage{amssymb}
\usepackage{mathrsfs}
\usepackage{enumerate}
\usepackage{tikz-cd} 
\usepackage{tikz}
\usepackage{multicol}
\usepackage{multirow}
\usepackage{tikz-qtree}
\usepackage{rotating}
\usepackage{graphicx}
\usepackage{comment}
\usetikzlibrary{arrows,calc,matrix,trees,arrows.meta,positioning,decorations.pathreplacing,bending}

\newtheorem{theorem}{Theorem}[section]

\newtheorem{proposition}[theorem]{Proposition}
\newtheorem{lemma}[theorem]{Lemma}
\newtheorem{corollary}[theorem]{Corollary}

\theoremstyle{definition}

\newtheorem{remark}[theorem]{Remark}
\newtheorem{definition}[theorem]{Definition}
\newtheorem{example}[theorem]{Example}

\newcommand{\Z}{\mathbb{Z}}

\newcommand{\R}{\mathbb{R}}

\newcommand{\Top}{\text{Top}} 
\newcommand{\Ab}{\text{Ab}} 
\begin{document}
\title{A Functorial Formulation of Neighborhood Aggregating Deep Learning}

\author{Sun Woo Park}
\address{Max Planck Institute for Mathematics,  Vivatsgasse 7, 53111 Bonn, Germany}
\email{s.park@mpim-bonn.mpg.de}

\author{Yun Young Choi}
\address{SolverX, Gangseo-gu, Seoul 07801, Republic of Korea}
\email{young@solverx.ai}

\author{U Jin Choi}
\address{Korea Advanced Institute of Science and Technology, Department of Mathematical Sciences, 291 Daehak-ro, Yuseong-gu, Daejeon 34141, Republic of Korea}
\email{ujinchoi@kaist.ac.kr}

\author{Youngho Woo}
\address{National Institute for Mathematical Sciences, 463-1 Jeonmin-dong, Yuseong-gu, Daejeon, 34047, Republic of Korea}
\email{youngw@nims.re.kr}

\begin{abstract}
    We provide a mathematical interpretation of convolutional (or message passing) neural networks by using presheaves and copresheaves of the set of continuous functions over a topological space. Based on this interpretation, we formulate a theoretical heuristic which elaborates a number of empirical limitations of these neural networks by using obstructions on such sets of continuous functions over a topological space to be sheaves or copresheaves.
\end{abstract}

\maketitle
\tableofcontents

\section{Introduction}
The universal approximation theorem proves the effectiveness of neural networks as operators for approximating continuous functions of real numbers.

\begin{theorem}[Universal Approximation Theorem]
Given a compact subset $K \subset \mathbb{R}^n$, let $f: K \to \mathbb{R}^m$ be a continuous function. Let $\sigma: \mathbb{R} \to \mathbb{R}$ be a predetermined continuous function. Then the following two conditions are equivalent.
\begin{enumerate}
\item $\sigma$ is not a polynomial function.
\item For every $\epsilon > 0$, there exist positive numbers $k > 0$, matrices $A \in \mathbb{R}^{k \times n}$, $B \in \mathbb{R}^{k}$, and $C \in \mathbb{R}^{m \times k}$ such that
\begin{equation}
    \sup_{x \in K} \| f(x) - C \cdot \left( \sigma^{[k]} \left( A \cdot x + B \right) \right) \| < \epsilon.
\end{equation}
where $\cdot$ is the matrix multiplication operator, and $\sigma^{[k]}: \mathbb{R}^k \to \mathbb{R}^k$ is a function whose coordinate-wise functions are all equal to $\sigma$.
\end{enumerate}
\end{theorem}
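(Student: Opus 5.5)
The plan follows the classical Leshno--Lin--Pinkus--Schocken argument. Throughout, let $\Sigma_n$ denote the linear span of the functions $x \mapsto \sigma(a\cdot x + b)$ with $a \in \R^n$, $b \in \R$. Since the outer matrix $C$ acts coordinatewise, condition (2) says exactly that each coordinate function $f_j$ lies in the uniform closure of $\Sigma_n$ on $K$. So it suffices to treat $m=1$ and to prove: $\overline{\Sigma_n} = C(K)$ for every compact $K$ if and only if $\sigma$ is not a polynomial.

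\emph{(2)$\Rightarrow$(1).} Suppose $\sigma$ is a polynomial of degree $d$. Then every element of $\Sigma_n$ is a polynomial of degree at most $d$ on $\R^n$. These form a finite-dimensional, hence closed, subspace of $C(K)$. If $K$ has nonempty interior (e.g.\ a ball, which suffices to refute (2) for all $K$), this subspace does not contain $x_1^{d+1}$. Hence (2) fails.

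\emph{(1)$\Rightarrow$(2): reduction to $n=1$.} The span of ridge functions $g(a\cdot x)$, with $g \in C(\R)$ and $a\in\R^n$, is dense in $C(K)$. To see this, note that it contains all functions $e^{a\cdot x}$, and also all polynomials, since $(a\cdot x)^k$ spans the homogeneous polynomials of degree $k$. Density then follows from Stone--Weierstrass. So it suffices to show that $\Sigma_1$ is dense in $C([\alpha,\beta])$ for every interval. Given this, approximate each $g$ on the compact set $\{a\cdot x : x\in K\}$ and substitute $t=a\cdot x$.

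\emph{Key one-dimensional step.} First assume $\sigma \in C^\infty$. The difference quotients
\[
\frac{\sigma((w+h)t+b)-\sigma(wt+b)}{h}
\]
lie in $\Sigma_1$ and converge uniformly on compacts to $t\,\sigma'(wt+b)$. Iterating, $t^k \sigma^{(k)}(wt+b) \in \overline{\Sigma_1}$. Setting $w=0$ gives $t^k\sigma^{(k)}(b)\in\overline{\Sigma_1}$. Since $\sigma$ is not a polynomial, for each $k$ there is some $b$ with $\sigma^{(k)}(b)\ne 0$. Indeed, otherwise $\sigma^{(k)}\equiv 0$, which would make $\sigma$ a polynomial. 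Hence all monomials lie in $\overline{\Sigma_1}$, and Weierstrass finishes the smooth case.

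\emph{Main obstacle: general continuous $\sigma$.} For a general continuous $\sigma$, the plan is to mollify. For $\varphi\in C_c^\infty(\R)$, the convolution $\sigma*\varphi$ is a uniform-on-compacts limit of Riemann sums $\sum_i \sigma(t-y_i)\varphi(y_i)\Delta y_i$, so $\sigma*\varphi \in \overline{\Sigma_1}$. The same holds for all its dilations and shifts, so $\overline{\Sigma_{\sigma*\varphi}}\subseteq\overline{\Sigma_\sigma}$, and the smooth case applies to $\sigma*\varphi$ as long as it is not a polynomial. The hard part is showing that some mollification $\sigma*\varphi$ is non-polynomial. Suppose instead that $\sigma*\varphi$ is a polynomial for every $\varphi\in C_c^\infty$. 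I would first show that the degrees of these polynomials are uniformly bounded by some $d$. The tool is a Baire category argument on the Fréchet space $C_c^\infty([-1,1])$: it is the union over $k$ of the closed sets $\{\varphi : \deg(\sigma*\varphi)\le k\}$, so one of them has interior, and linearity spreads the bound to the whole space. Then $\sigma*\varphi_\epsilon$ is a polynomial of degree at most $d$ for an approximate identity $\varphi_\epsilon$. These converge to $\sigma$ uniformly on compacts, and polynomials of degree at most $d$ form a closed set, so $\sigma$ is a polynomial, a contradiction. This Baire step is where I expect the most care to be needed.
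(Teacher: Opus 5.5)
Your proof is correct. It is not the paper's route, because the paper gives no proof at all: it states the theorem in the introduction and refers to \cite{Cybenko89, Fu89, HSW89}.

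Those works establish only the direction (1)$\Rightarrow$(2), and only for sigmoidal or squashing activations. Cybenko argues by duality (Hahn--Banach plus the Riesz representation theorem, showing that a measure annihilating every $\sigma(a\cdot x+b)$ must vanish). Funahashi uses the Irie--Miyake Fourier integral representation. Hornik--Stinchcombe--White apply Stone--Weierstrass to trigonometric networks and approximate cosines by squashers.

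What you reconstruct is instead the Leshno--Lin--Pinkus--Schocken argument: reduction to $n=1$ via ridge functions, differentiation in the weight $w$ for smooth $\sigma$, and mollification plus a Baire category argument on $C_c^\infty([-1,1])$ for general continuous $\sigma$. This is the argument that actually matches the ``non-polynomial if and only if'' formulation as stated. It covers unbounded non-sigmoidal activations such as ReLU, which the cited results do not, and it supplies the converse direction. The price is the Baire step, which you handle correctly: the sets $\{\varphi : \sigma*\varphi \text{ is a polynomial of degree} \le k\}$ are closed linear subspaces of a Fréchet space, so one of them is the whole space.

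One point should be made explicit. As printed, with $K$ and $f$ fixed, the implication (2)$\Rightarrow$(1) is false: take $\sigma(t)=t$ with either $f$ affine or $K$ a single point. The equivalence only holds when (2) is required for every compact $K$ and every continuous $f$. Your reduction, and your choice of a ball with $f=x_1^{d+1}$, already adopt this reading. State it outright rather than leaving it to the parenthetical remark.
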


The above formulation, see for example \cite{Cybenko89, Fu89, HSW89}, shows that a neural network comprised of two fully connected layers of arbitrary depth generates a dense subset of the set of continuous functions from $\mathbb{R}^n$ to $\mathbb{R}^m$ under the compact open topology. Other formulations include the universal approximation theorem for fully connected neural networks with arbitrary number of layers and architectural variations of neural networks \cite{KT20,LPW17,PYLS21}. These formulations of universal approximation theorems rigorously model asymptotic properties of neural networks in approximating compactly supported continuous functions over the real numbers. 

However, the universal approximation theorem shows limited capability in analyzing empirical properties of variants of neural networks with finite layers of finite depths. Some of these empirical limitations and additional measures to improve such limitations, in particular for convolutional (or message passing) neural networks that analyze image data sets, graph data sets, or time series data, can be listed as follows.
\begin{enumerate}
    \item \textbf{Non-unique Gluing}: Convolutional (or message passing) neural networks with max pooling layers are vulnerable from recognizing a family of images comprised of juxtapositions of isomorphic local components as identical objects.
    \item \textbf{Adversarial Attacks}: Perturbation in input data sets often mislead convolutional (or message passing) neural networks to misidentify image or graph data sets.
    \item \textbf{Dataset Dependency}: Suitable architectural choices in constructing neural networks have to be chosen to produce state-of-the-art performance in analyzing innate properties of each data set.
    \item \textbf{Topological Inferences}: Topological data analysis techniques often enhances performances of convolutional (or message passing) neural networks in analyzing image or graph data sets.
\end{enumerate}

This paper intends to provide a mathematical framework which gives a theoretical heuristic on the common origins of these apparently disjoint empirical limitations of convolutional (or message passing) neural networks. The key overarching thesis of the paper can be summarized into two points as follows.

\begin{theorem}[Simplification of Theorems \ref{theorem:first_limitation}, \ref{theorem:second_limitation} \ref{theorem:third_limitation}, \ref{theorem:fourth_limitation}]
Let $X$ be a locally compact connected Hausdorff topological space. We denote by $DL^m$ a convolutional (or message passing) neural network with $m$ layers (or a discrete deep learning algorithm with $m$ layers satisfying neighborhood aggregating, see a combination of Definitions \ref{def:ml} and \ref{def:neighborhood_aggregate}).
\begin{enumerate}
    \item \textbf{Functorial Interpretation} Any $DL^m$ that accepts data set defined over $X$ as input approximates a global section of a presheaf and a copresheaf of the set of continuous functions over the space $X$.
    \item \textbf{Empirical Properties} The four aforementioned empirical properties of $DL^m$ originate from functorial properties of such presheaves and copresheaves.
\end{enumerate}
\end{theorem}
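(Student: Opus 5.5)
This statement condenses Theorems \ref{theorem:first_limitation}--\ref{theorem:fourth_limitation}, so I would prove it in two stages that mirror its two parts. The first stage fixes the objects. On a locally compact connected Hausdorff space $X$, define for each open $U \subseteq X$ the set $C(U) = C(U,\R^d)$ of continuous feature maps. Restriction makes $U \mapsto C(U)$ a presheaf $\mathcal{F}$. For the copresheaf $\mathcal{G}$, I would use compactly supported functions with extension by zero, $C_c(U) \to C_c(V)$ for $U \subseteq V$; local compactness is what makes $C_c(U)$ large enough, via Urysohn-type bump functions. Next, unpack Definitions \ref{def:ml} and \ref{def:neighborhood_aggregate}. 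A $DL^m$ acts on a finite sample $\{x_i\} \subset X$ together with a neighborhood system $\{U_i\}$. Each layer computes $h^{(\ell+1)}(x_i) = \sigma\bigl(A_\ell \cdot \mathrm{AGG}\{h^{(\ell)}(x_j) : x_j \in U_i\} + B_\ell\bigr)$.

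Part (1) then comes from reading each layer functorially. The aggregation at $x_i$ only uses data restricted to $U_i$, so it is a map $\mathcal{F}(U_i) \to \mathcal{F}(U_i)$ compatible with the restrictions. I would first choose a partition of unity subordinate to $\{U_i\}$ (local compactness plus Hausdorff suffices). This lets me interpolate the discrete layer outputs into genuine continuous functions $s_i \in C(U_i)$. Applying the Universal Approximation Theorem on each compact $\overline{U_i}$, with induction on $m$ since compositions of uniformly continuous approximants on compacta stay close, shows that the $s_i$ approximate any target local sections. They agree on overlaps up to $\epsilon$. Gluing them gives an approximate element of $\mathcal{F}(X)$. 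Dually, pushing the bump-weighted pieces forward through extension by zero gives an approximate element of $\mathcal{G}(X)$. Connectedness of $X$ ensures the nerve of the cover is connected, so the pieces combine into a single global section rather than a disjoint family.

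For part (2), each empirical property should match one failure of sheaf or cosheaf axioms.
\begin{itemize}
\item \textbf{Non-unique gluing.} Max pooling factors through a non-injective map on $\prod_i \mathcal{F}(U_i)$: it forgets which chart a value came from. So distinct global sections with isomorphic local restrictions become identified, and the locality axiom fails.
\item \textbf{Adversarial attacks.} The gluing is only $\epsilon$-compatible on overlaps. A small perturbation on one $U_i$ can make the approximate sections fail to glue, or glue to a far-away section, because the glued section is not continuous in the local data.
\item \textbf{Dataset dependency.} The choice of cover $\{U_i\}$, and of which functor, presheaf or copresheaf, the architecture models, changes the resulting global sections.
\item \textbf{Topological inferences.} The obstruction to exact gluing lives in a \v{C}ech-type invariant of the cover. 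Persistent-homology features supply precisely this missing data.
\end{itemize}

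The main obstacle is making the approximation in part (1) uniform across layers and compatible with restriction maps, because the UAT alone gives no control on overlaps. I would address this by fixing Lipschitz bounds on $\sigma$ and propagating error bounds inductively through the $m$ layers.
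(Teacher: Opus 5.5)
Your proposal has a genuine gap in the choice of functors, and it undermines part (2) entirely. Your $\mathcal{F}(U)=C(U,\R^d)$ with restriction and $\mathcal{G}(U)=C_c(U)$ with extension by zero are an honest sheaf and an honest cosheaf; the paper lists exactly these as its examples of a sheaf and a cosheaf. So none of the axioms you later want to fail actually fails. Two continuous functions on $X$ that agree on every member of a cover are equal. Hence your ``non-unique gluing'' bullet does not exhibit a failure of locality of $\mathcal{F}$. It only says that max pooling is a non-injective map, which is a property of one layer, not of the functor. Your other bullets are likewise assertions about the network, not consequences of functorial properties.

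The missing idea is the paper's functor. Push the constant sheaf on the finite sample set $A=\{x_i\}$ forward to get $i_{A,l}$, a sum of skyscrapers that is simultaneously a sheaf and a cosheaf. Then set $C^0(i_{A,l},\R^k)(U)=\{f:\R^{\sum_{x_i\in U}l_i}\to\R^k \text{ continuous}\}$. Restriction is precomposition with the zero-extension $i_{U,V}$, and corestriction is precomposition with the projection $res_{V,U}$. A section is thus a function of the \emph{data values} at the sample points, not a feature field on $X$. The linear part $\text{Hom}(i_{A,l},\R^k)$ is a sheaf and a cosheaf, but the continuous functor is neither (Proposition \ref{prop:sheaf_fail}). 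The function $\prod_j y_j$ restricts to $0$ on any open set missing a sample point. It is also not a sum of functions each independent of some variable.

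Every limitation theorem runs through this construction:
\begin{enumerate}
\item Non-linearity puts $DL^m_\nu$ in $C^0\setminus\text{Hom}$, where restriction to a cover is not injective. This gives $DL^m_\mu\neq DL^m_\nu$ with equal local restrictions (Theorem \ref{theorem:first_limitation}).
\item Strictness $\#\mathcal{A}_j>\#\mathcal{A}_{j+1}$ plus factorization through inclusion yields a positive-dimensional space of constant shifts $m_{\alpha_j}$ summing to zero over each aggregated group. These shifts are invisible to the next layer but can have arbitrarily large $\ell_p$-norm (Theorem \ref{theorem:second_limitation}).
\item Non-surjectivity of $\bigoplus_\alpha C^0(i_{A,l},\R^k)(U_\alpha)\to C^0(i_{A,l},\R^k)(X)$, together with the hypotheses on $F$, gives an unreachable global section (Theorem \ref{theorem:third_limitation}).
\item Flasqueness gives vanishing higher cohomology (Theorem \ref{theorem:fourth_limitation}).
\end{enumerate}

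Part (1) inherits the same misreading. In the paper it holds by construction. Definition \ref{def:ml} defines $DL^m_\nu$ as the image of $id+\nu$ under the composite of layer maps, so it lies in $C^0(i_{A,l},\R^k)(X)$. The di-natural transformation $id$, which exists because skyscrapers are both sheaves and cosheaves, then makes it a global section of $C^0(i_{A,l}^{op},\R^k)$ too. No interpolation, partition of unity, or layerwise error propagation is needed. Your UAT step does not apply in any case: the network approximates functions of its input vector, not functions on $\overline{U_i}\subset X$. That set need not be compact or Euclidean, and a fixed architecture cannot approximate arbitrary sections on it. The ``main obstacle'' you identify is an artifact of that model.

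Finally, your topological-inference bullet points the wrong way. The paper's claim is that there is \emph{no} gluing obstruction. The sheafification $C^{0,+}(i_{A,l},\R^k)$ and $\text{Hom}(i_{A,l},\R^k)$ are flasque, hence acyclic. So the network's sections carry no cohomological information about $X$, and that is why persistent homology adds something. Even for your $C(-,\R^d)$ on a paracompact $X$, the sheaf is fine, so its \v{C}ech obstructions vanish as well.
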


To identify convolutional (or message passing) neural network as a functor from the category of open subsets of $X$ to the category of real vector spaces, we define what is called the presheaf of dual cosheaves and copresheaf of dual sheaves. The local sections of these functors are continuous functions from $\mathbb{R}^n$ to $\mathbb{R}^m$ for some positive integers $n,m > 0$. The global sections correspond to continuous functions the convolutional (or message passing) neural network aims to approximate. Hence, we demonstrate that the functorial formulation of these neural networks provides a novel mathematical framework for deducing theoretical support for empirically verified architectural limitations such neural networks may possess, and formulating statements on the classes of continuous functions that can be approximated by convolutional (or message passing) neural networks with finitely many layers of finite depth.

\subsection{Related Studies}

There have been several previous studies which focus on utilizing cellular sheaves and sheaf theory to give a theoretical analysis on limitations of message passing neural networks. These include oversmoothing, analyzing heterophilic graphs, and effectively extracting global properties of the underlying geometric space of data sets \cite{CPWC22}. Hanson and Gebhart applied cellular sheaves to construct sheaf neural network, which generalizes diffusion operators underlying graph neural networks \cite{HG20}. The motivation to utilize sheaf theory to enhance graph neural networks or other forms of neural networks was further explored in subsequent works, such as the work by Bodnar et al. \cite{BG22} (sheaf convolutional network), Barbero et al. \cite{BB22, BB22b} (which incorporated connection Laplacians and attention mechanisms), He et al. \cite{HB24} (positional encoding), Braithwaite et al. (heterogeneous sheaf neural networks), Hajij et al. (copresheaf topological neural networks) \cite{HB25}, and Borgio et al. \cite{BS25} (polynomial neural sheaf diffusion).

The most relevant and recent groundbreaking work in applying sheaf theory and copresheaf theory to enhancing neural networks is the work by Hajij et al. on Copresheaf Topological Neural Networks (CTNN) \cite{HB25}. This is the first published work which uses the theory of copresheaves to provide a provide an overarching theoretical formulation of various types of deep learning architectures, such as convolutional neural networks, transformers, and message passing neural networks. The motivation for devising CTNN originates from comparisons between cellular sheaves and copresheaves on combinatorial complexes. They demonstrate that CTNN enhances performances in addressing many challenges in representation learning in comparison to conventional deep learning algorithms, such as preventing oversmoothing, analyzing heterophilic graphs, and analyzing non-Euclidean datasets.

\subsection{Novelty and limitations}
Unlike the previous studies which build upon the notion of cellular sheaves, our paper provides a theoretical formulation of convolutional (or message passing) neural networks by using the following three new perspectives.
\begin{itemize}
    \item \textbf{Skyscraper sheaves/cosheaves}: We use presheaves (or sheaves) and copresheaves (or cosheaves) to analyze deep learning techniques. To do so, we use presheaf / copresheaf of continuous functions induced from skyscraper cosheaves / sheaves to interpret deep learning architectures as approximators of their local sections. The key insight we use is the fact that the skyscraper sheaf is also a cosheaf. This overlapping duality allows us to assess the capabilities of convolutional (or message passing) neural networks in constructing vector representations by gluing locally defined vector representations.

    \item \textbf{Obstructions}: By using the language of presheaves and copresheaves, we can use our assessment on capabilities of convolutional (or message passing) neural networks to obtain theoretical limitations of these architectures. We formulate these limitations by considering obstructions for a presheaf to be a sheaf and a copresheaf to be a cosheaf. To elaborate, there are parts of the sheaf axioms (or cosheaf axioms) that convolutional (or message passing) neural networks violate. Such violations can be used to pinpoint limitations of these networks, such as non-unique gluing of local vector representations, adversarial attacks, and performance dependency on datasets. We also demonstrate trivial presheaf cohomology (and copresheaf homology) for these presheaves and copresheaves, thereby suggesting the merits of incorporating topological inferences when needed.

    \item \textbf{Examples and future directions}: We reformulate previously studied deep learning architectures using presheaf / copresheaf of continuous functions induced from skyscraper cosheaves / sheaves. Some examples include convolutional neural networks, message passing neural networks, recurrent neural networks, and attention-transformers. We also propose that other types of sheaves or cosheaves other than the presheaf / copresheaf of continuous functions may give rise to a zoo of novel deep learning algorithms that can surpass previously studied architectures.
\end{itemize}
As many theoretical analyses do, our theoretical framework does not encompass all deep learning techniques, and possess some limitations. Our framework, for example, does not address strengths or limitations that novel deep learning algorithms built upon sheaves or cosheaves other than presheaf / copresheaf of continuous functions may possess. Our framework also does not explain strengths or limitations on concurrently using multiples of deep learning algorithms together, such as multi-agent systems. Nevertheless, we hope that future research may focus on exploring whether sheaf or cosheaf theory can be utilized to effectively analyze different classes of deep learning techniques or different ways to collectively utilize a family of them.

\subsection{Organization}
We organize the paper in the following manner. Section \ref{section:sheaves_cosheaves} focuses on constructing a presheaf (or a copresheaf) of continuous functions induced from cosheaves (or sheaves), whose functorial properties we will analyze in Proposition \ref{prop:sheaf_fail}. Section \ref{section:deep_learning} utilizes the mathematical framework from Section \ref{section:sheaves_cosheaves} to define discrete deep learning algorithms with neighborhood aggregating layers, and provide a theoretical argument for why certain empirical, as will be shown in Theorems \ref{theorem:first_limitation}, \ref{theorem:second_limitation} \ref{theorem:third_limitation}, and \ref{theorem:fourth_limitation}. Section \ref{section:deep_learning_examples} discusses how certain variants of convolutional (or message passing) neural networks, message passing neural networks for graphs, and recurrent neural networks can be reformulated using the mathematical model provided in Section \ref{section:deep_learning}. We finish the manuscript with Section \ref{section:deep_learning_future}, where we briefly discuss some deep learning algorithms which overcome the aforementioned empirical drawbacks, such as attention-transformers, persistent homological techniques, and neural ODEs. We also discuss correspondences between deep learning algorithms that process dynamic time series data defined over graphs and those which process data defined over 2-dimensional spaces such as image data sets.


\subsection*{Acknowledgements}
The majority of the work was completed while the first and the second author were members of the National Institute for Mathematical Sciences (NIMS) up until August of 2022. Sun Woo Park, Yun Young Choi, and Youngho Woo were supported by the National Institute for Mathematical Sciences (NIMS) grant funded by the Korean Government (MSIT) B22920000. The first author would like to thank Max Planck Institute for Mathematics for providing its hospitality, during his stay in which some updates in the manuscript had been made. We would like to thank Asung Kil for constructive comments and suggestions.

\section{Sheaves and Cosheaves} \label{section:sheaves_cosheaves}

\subsection{Preliminary}
In this section, we give a brief review of sheaf and cosheaf theory, which are algebraic tools effective for representing topological properties of a space $X$ as real (or complex) vectors. This subsection closely follows Chapter 2 of \cite{Ha77}, Chapters 5 and 6 of \cite{Br97}, and Chapter 7 of \cite{Cu13}. Throughout this section, we denote by $X$ a locally compact connected Hausdorff topological space.

\begin{definition}[Presheaf]
A presheaf $\mathcal{F}$ of abelian groups over $X$ is a contravariant functor from the category of open subsets of a topological space $X$ to the category of abelian groups
\begin{equation}
    \mathcal{F}: \Top(X)^{Op} \to \Ab
\end{equation}
\end{definition}

\begin{definition}[Sheaf]
A sheaf $\mathcal{F}$ of abelian groups over $X$ is a presheaf of abelian groups over $X$ which satisfies the exact sequence 
\begin{equation} \label{eq:sheaf_axiom}
    0 \to \mathcal{F}(U) \to \prod_{\alpha} \mathcal{F}(U_\alpha) \to \prod_{\alpha, \beta} \mathcal{F}(U_\alpha \cap U_\beta)
\end{equation}
for all collections of open sets $\{U_\alpha\}$ with $U = \cup_{\alpha} U_\alpha$.
\end{definition}

\begin{definition}[copresheaf]
A copresheaf $\widetilde{\mathcal{F}}$ of abelian groups over $X$ is a covariant functor from the category of open subsets of a topological space $X$ to the category of abelian groups
\begin{equation}
    \widetilde{\mathcal{F}}: \Top(X) \to \Ab
\end{equation}
\end{definition}

\begin{definition}[Cosheaf]
A cosheaf $\widetilde{\mathcal{F}}$ of abelian groups over $X$ is a copresheaf of abelian groups over $X$ which satisfies the exact sequence
\begin{equation} \label{eq:cosheaf_axiom}
    \bigoplus_{\alpha, \beta} \widetilde{\mathcal{F}}(U_\alpha \cap U_\beta) \to \bigoplus_{\alpha} \widetilde{\mathcal{F}}(U_\alpha) \to \widetilde{\mathcal{F}}(U) \to 0
\end{equation}
for all collections of open sets $\{U_\alpha\}$ with $U = \cup_{\alpha} U_\alpha$.
\end{definition}

\begin{example}[Constant Sheaf / Cosheaf]
Fix an abelian group $A$. The constant sheaf/cosheaf $\hat{A}$ over $X$ is given by
\begin{equation}
    \hat{A}(U) = A
\end{equation}
for any open neighborhood $U$.
\end{example}

\begin{example}[Skyscraper Sheaf / Cosheaf]
Let $x \in X$ be a point. Fix an abelian group $A$. The skyscraper sheaf/cosheaf at $x$, denoted as $S_x$, is given by
\begin{equation}
    S_x(U) := \begin{cases}
    A & \text{ if } x \in U \\
    0 & \text{ otherwise}
    \end{cases}
\end{equation}
\end{example}
We note here that the function $S_x$ is both a sheaf and a cosheaf (see Section 6.27 of \cite{St21} and Definition 3.3.3 \cite{Cu13}).

\begin{example}
Let $X$ be a locally compact connected cell complex. The presheaf of continuous real valued functions $C^0_{X}$ defined as
\begin{equation}
    C^0_X(U) := \{f : U \to \mathbb{R}^l \; | \; f \text{ continuous} \}
\end{equation}
is a sheaf over $X$. The presheaf of $k$-differentiable functions $C^k_X$ is also a sheaf over $X$.
\end{example}

\begin{example}
Let $X$ be a locally compact connected cell complex. The copresheaf of compactly supported functions $\Omega^0_X$ defined as
\begin{equation}
    \Omega^0_X(U) := \{f : U \to \mathbb{R} \; | \; \text{supp}(f) \text{ compact } \}
\end{equation}
is a cosheaf over $X$. The copresheaf of differential $k$-forms $\Omega^k_X$ is also a cosheaf over $X$. One can use a partition of unity of a topological space $X$ to prove that the copresheaf $\Omega^k_X$ satisfies the cosheaf axiom (\ref{eq:cosheaf_axiom}).
\end{example}

\subsection{Presheaf of dual cosheaves and copresheaf of dual sheaves}
One of the central objects we will discuss in this manuscript is the presheaf of dual cosheaves and the copresheaf of dual sheaves.
\begin{definition}[Presheaf of Dual Cosheaves]
Let $\widetilde{\mathcal{F}}$ be a cosheaf of real vector spaces over $X$, and let $V$ be a fixed finite dimensional real vector space.
\begin{enumerate}
\item The presheaf of linear morphisms induced from $\widetilde{\mathcal{F}}$, denoted as $\text{Hom}(\widetilde{\mathcal{F}},V)$ is given by
\begin{equation}
    \text{Hom}(\widetilde{\mathcal{F}},V)(U) := Hom(\widetilde{\mathcal{F}}(U),V)
\end{equation}
\item The presheaf of continuous functions induced from $\widetilde{\mathcal{F}}$, denoted as $\mathcal{C}^0(\widetilde{\mathcal{F}},V)$ is given by
\begin{equation}
    \mathcal{C}^0(\widetilde{\mathcal{F}},V) := \{f : \widetilde{\mathcal{F}}(U) \to V \; | \; f \text{ continuous } \}
\end{equation}
\end{enumerate}
\end{definition}

\begin{definition}[copresheaf of Dual Sheaves]
Let $\mathcal{F}$ be a sheaf of real vector spaces over $X$, and let $V$ be a fixed finite dimensional real vector space.
\begin{enumerate}
\item The copresheaf of linear morphisms induced from $\mathcal{F}$, denoted as $\text{Hom}(\mathcal{F},V)$ is given by
\begin{equation}
    \text{Hom}(\mathcal{F},V)(U) := Hom(\mathcal{F}(U),V)
\end{equation}
\item The copresheaf of continuous functions induced from $\mathcal{F}$, denoted as $\mathcal{C}^0(\mathcal{F},V)$ is given by
\begin{equation}
    \mathcal{C}^0(\mathcal{F},V) := \{f : \mathcal{F}(U) \to V \; | \; f \text{ continuous } \}
\end{equation}
\end{enumerate}
\end{definition}

We immediately obtain that the presheaf $\text{Hom}(\widetilde{\mathcal{F}},\R^k)$ induced from the cosheaf $\widetilde{\mathcal{F}}$ is a sheaf, and the copresheaf $\text{Hom}(\mathcal{F},\R^k)$ is a cosheaf.
\begin{proposition}[\cite{Br97}, Proposition 5.1.10.]
Given a cosheaf $\widetilde{\mathcal{F}}$ and $k$ a fixed positive number, the presheaf $\text{Hom}(\widetilde{\mathcal{F}},\R^k)$ is a sheaf. Likewise, given a sheaf $\widetilde{\mathcal{F}}$, the copresheaf $\text{Hom}(\mathcal{F},\R^k)$ is a cosheaf.
\end{proposition}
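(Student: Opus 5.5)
The proof uses that $\text{Hom}(-,\R^k)$ is exact and turns the cosheaf sequence (\ref{eq:cosheaf_axiom}) into the sheaf sequence (\ref{eq:sheaf_axiom}). I treat the first assertion in detail; the second is the formal dual.

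\textbf{First assertion.} Fix an open $U$ and an open cover $\{U_\alpha\}$ of $U$. Since $\widetilde{\mathcal{F}}$ is a cosheaf, the sequence
\begin{equation*}
\bigoplus_{\alpha,\beta}\widetilde{\mathcal{F}}(U_\alpha\cap U_\beta)\xrightarrow{\ \partial\ }\bigoplus_\alpha \widetilde{\mathcal{F}}(U_\alpha)\xrightarrow{\ \epsilon\ }\widetilde{\mathcal{F}}(U)\to 0
\end{equation*}
is exact. Here $\partial$ is the difference of the two extension maps and $\epsilon$ is the sum of the extension maps.

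I apply the contravariant functor $\text{Hom}_{\R}(-,\R^k)$. The first key step is that this functor is left exact in the contravariant sense: it sends a right exact sequence $A\to B\to C\to 0$ to a left exact sequence $0\to \text{Hom}(C,\R^k)\to\text{Hom}(B,\R^k)\to\text{Hom}(A,\R^k)$. Over a field it is even exact, since every vector space is injective.

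The second key step is the identification
\begin{equation*}
\text{Hom}\Bigl(\bigoplus_\alpha W_\alpha,\R^k\Bigr)\cong\prod_\alpha\text{Hom}(W_\alpha,\R^k).
\end{equation*}
Combining the two steps yields the exact sequence
\begin{equation*}
0\to\text{Hom}(\widetilde{\mathcal{F}},\R^k)(U)\to\prod_\alpha\text{Hom}(\widetilde{\mathcal{F}},\R^k)(U_\alpha)\to\prod_{\alpha,\beta}\text{Hom}(\widetilde{\mathcal{F}},\R^k)(U_\alpha\cap U_\beta).
\end{equation*}
This is exactly (\ref{eq:sheaf_axiom}).

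\textbf{The main check.} The point needing care is that these transported maps really are the presheaf's restriction maps. For $V\subset U$, the restriction in $\text{Hom}(\widetilde{\mathcal{F}},\R^k)$ is precomposition with the extension $\widetilde{\mathcal{F}}(V)\to\widetilde{\mathcal{F}}(U)$; this is also what makes $\text{Hom}(\widetilde{\mathcal{F}},\R^k)$ a contravariant functor. Under the product identification, the dual of $\epsilon$ therefore becomes the product of restrictions, and the dual of $\partial$ becomes the difference of the two restrictions to $U_\alpha\cap U_\beta$. This bookkeeping of signs and indices is routine.

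\textbf{Second assertion.} Start from a sheaf $\mathcal{F}$ and the left exact sequence (\ref{eq:sheaf_axiom}) for $\mathcal{F}$. Dualizing is again exact, so it gives a right exact sequence
\begin{equation*}
\text{Hom}\Bigl(\prod_{\alpha,\beta}\mathcal{F}(U_\alpha\cap U_\beta),\R^k\Bigr)\to\text{Hom}\Bigl(\prod_\alpha \mathcal{F}(U_\alpha),\R^k\Bigr)\to\text{Hom}(\mathcal{F}(U),\R^k)\to 0.
\end{equation*}
Surjectivity at the right end holds because $\mathcal{F}(U)\hookrightarrow\prod_\alpha\mathcal{F}(U_\alpha)$ is injective, and any linear map defined on a subspace extends to the whole space.

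\textbf{The expected obstacle.} Here the dual of an infinite product is not the direct sum of the duals, so I cannot immediately read off (\ref{eq:cosheaf_axiom}). I would first handle finite covers, where products and sums coincide and the argument above is complete. For infinite covers I would either restrict to finite-dimensional stalks, or follow Bredon and interpret the copresheaf via the appropriate (e.g.\ compactly supported or continuous) duals, so that $\text{Hom}(\prod_\alpha,\R^k)$ may be replaced by $\bigoplus_\alpha$. I expect this infinite-product issue to be the only real difficulty.
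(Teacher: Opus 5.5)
Your argument for the first assertion is correct and is the paper's proof made explicit. The paper only says that $\text{Hom}(-,\R^k)$ is exact because $\R^k$ is injective. You supply what that leaves implicit: $\text{Hom}(\bigoplus_\alpha W_\alpha,\R^k)\cong\prod_\alpha\text{Hom}(W_\alpha,\R^k)$ for arbitrary index sets, and the dualized maps are the restriction maps. As you note, left exactness alone already suffices there.

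For the second assertion, the obstacle you flag cannot be removed by extra care. With the paper's definitions (algebraic $\text{Hom}$, cosheaf axiom required for all covers) the assertion is false, so your gap cannot be closed. The paper's one-line proof only appears to work because it silently identifies $\text{Hom}(\prod_\alpha W_\alpha,\R^k)$ with $\bigoplus_\alpha\text{Hom}(W_\alpha,\R^k)$, which is exactly the step you identified as invalid.

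For a counterexample, let $X=\R$ and let $\mathcal{F}$ be the sheaf of locally constant real functions. Cover $U=\bigsqcup_{n\geq 1}(n,n+\tfrac12)$ by its components $U_n$. Then $\mathcal{F}(U)=\R^{\mathbb{N}}$, $\mathcal{F}(U_n)=\R$, and $\mathcal{F}(U_n\cap U_m)=0$ for $n\neq m$. The image of $\bigoplus_n\text{Hom}(\mathcal{F}(U_n),\R)\to\text{Hom}(\R^{\mathbb{N}},\R)$ consists only of functionals depending on finitely many coordinates. Extending from a basis, you get a functional that kills every $e_n$ but not $(1,1,1,\dots)$, and it is not in that image. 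So the cosheaf sequence fails to be surjective on the right.

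What is actually true is what your argument proves:
\begin{enumerate}
\item the second assertion for finite covers;
\item for the sheaves $i_{A,l}$ with $A$ finite that the paper uses, the cosheaf property for all covers, since $\text{Hom}(i_{A,l},\R^k)$ is then a finite direct sum of skyscraper cosheaves with fibres $\text{Hom}(\R^{l_i},\R^k)$.
\end{enumerate}

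Of your suggested repairs, finite-dimensional stalks is the wrong hypothesis, since the example above has one-dimensional stalks. The useful hypothesis is finite-dimensional section spaces $\mathcal{F}(V)$. Then right surjectivity follows, because the functionals coming from $\bigoplus_\alpha\mathcal{F}(U_\alpha)^*$ have common kernel $\bigcap_\alpha\ker(\mathrm{res}_{U,U_\alpha})=0$ in the finite-dimensional space $\mathcal{F}(U)$. Middle exactness still needs a compactness (Mittag-Leffler type) argument. Passing to continuous or compactly supported duals changes the copresheaf, so it proves a different statement rather than this one.
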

\begin{proof}
The proposition follows immediately from the fact that $\text{Hom}(-,\R^k)$ is exact. Note that $\R^k$ is an injective $\R$-module, which makes the left-exact functor $\text{Hom}(-,\R^k)$ exact.
\end{proof}

In contrast, the presheaf ${C}^0(\mathcal{F},V)$ and copresheaf ${C}^0(\widetilde{\mathcal{F}},V)$ do not necessarily satisfy the sheaf (or cosheaf) axioms. We suspect that additional structural properties of the sheaf $\mathcal{F}$ (or the cosheaf $\widetilde{\mathcal{F}}$) may be required to ensure that these functors satisfy the respective axioms. In the following key proposition, we show that if $\mathcal{F}$ is a finite direct sum of skyscraper sheaves (or cosheaves), then the functors ${C}^0(\mathcal{F},V)$ do not satisfy the sheaf (or cosheaf) axioms.

Before we discuss further about the sheaf-theoretic properties of $C^0(\mathcal{F},V)$, let us recall that the procedure of constructing a data set defined over a topological space $X$ consists of taking measurements, possibly with some perturbations due to sensor noises, over finitely many points $\{x_i\}_i=1^N$ of $X$. These procedures can be described using the definition of pushforward of constant sheaves over a discrete set to the topological space.

\begin{definition}
Given a continuous map of topological spaces $f:X \to Y$, the pushforward sheaf $f_*(\mathcal{F})$ of a sheaf $\mathcal{F}$ over $X$ is a sheaf over $Y$ given by
\begin{equation}
    f_*(\mathcal{F})(V) := \mathcal{F}(f^{-1}(V)).
\end{equation}
\end{definition}

\begin{definition}
Given an open continuous map of topological spaces $f:X \to Y$, the pullback sheaf $f^{-1}(\mathcal{G})$ of a sheaf $\mathcal{G}$ over $Y$ is a sheaf over $X$ given by
\begin{equation}
    f^{-1}(\mathcal{G})(U) = \mathcal{G}(f(U)).
\end{equation}
\end{definition}

\begin{definition}
Let $A$ be a finite set of points, endowed with discrete topology. Denote by $C_{A,l}$ the constant sheaf (and cosheaf) of $l$-dimensional vectors. The presheaf (and the copresheaf) of $k$-dimensional convolutional (or message passing) neural networks, denoted as $C_0(i_{A,l},\mathbb{R}^k)$, is constructed as follows.
\begin{enumerate}
    \item Let $i_{A,l}$ be the pushforward sheaf of $C_{A,l}$ with respect to the inclusion map $i:A \to X$. If necessary, one can take the pullback of the pushforward sheaf of $C_{A,l}$ with respect to the inclusion map $i:A \to X$ and the universal covering map $\pi:\widetilde{X} \to X$.
    \item Let $Hom(i_{A,l},\mathbb{R}^k)$ be the presheaf of linear morphisms (or the copresheaf of linear morphisms) induced from the sheaf $i_{A,l}$.
    \item Let $C_0(i_{A,l},\mathbb{R}^k)$ be the presheaf of continuous functions (or the copresheaf of continuous functions) induced from the sheaf $i_{A,l}$.
\end{enumerate}
We refer to Figures \ref{fig:pushforward} and \ref{fig:induced_cont} for a visual demonstration of presheaves and copresheaves constructed in the definition.
\end{definition}

\begin{figure}
    \centering
    \includegraphics[width=120mm]{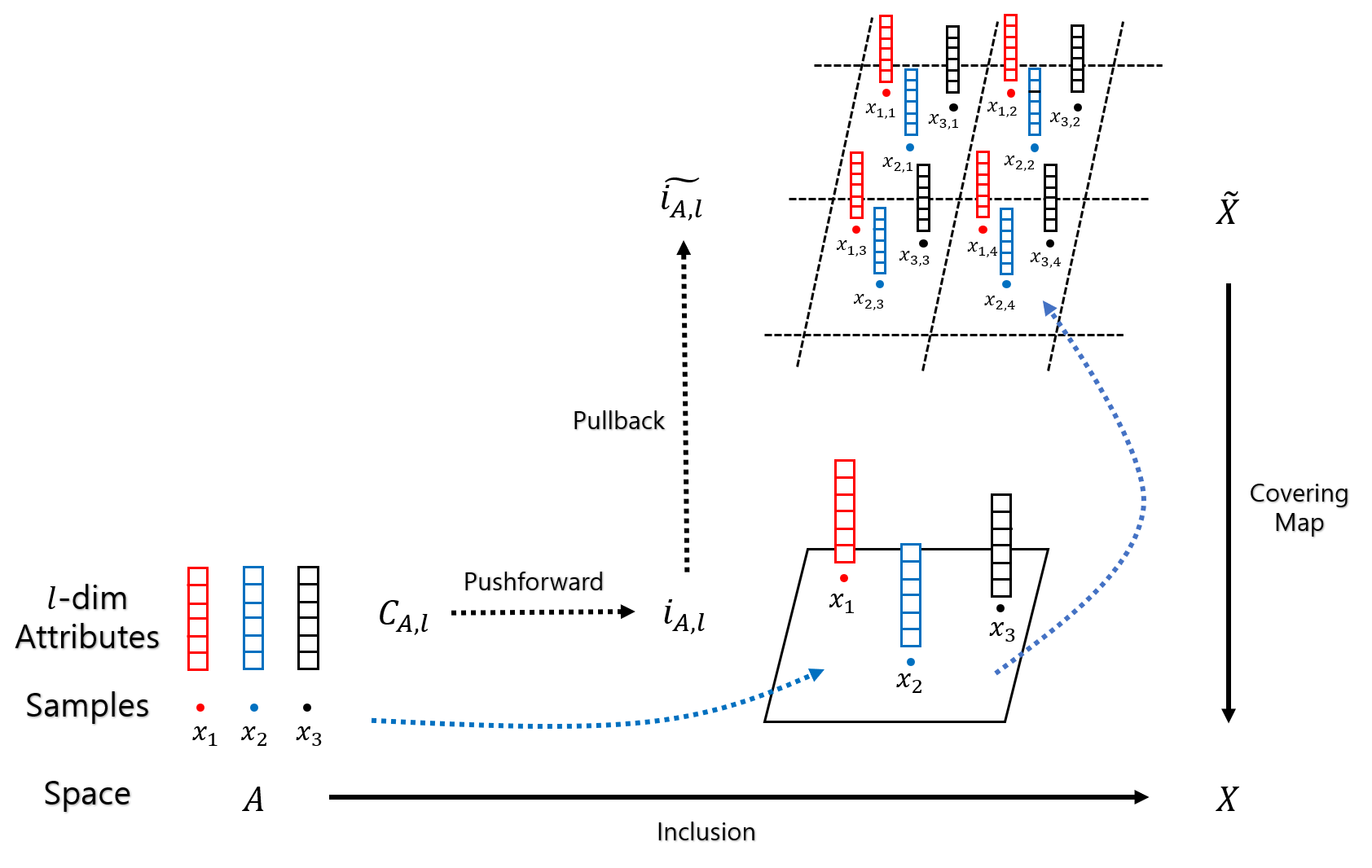}
    \caption{Construction of a pushforward of constant sheaves / cosheaves over discrete sets}
    \label{fig:pushforward}
\end{figure}    
    
\begin{figure}
    \centering
    \includegraphics[width=120mm]{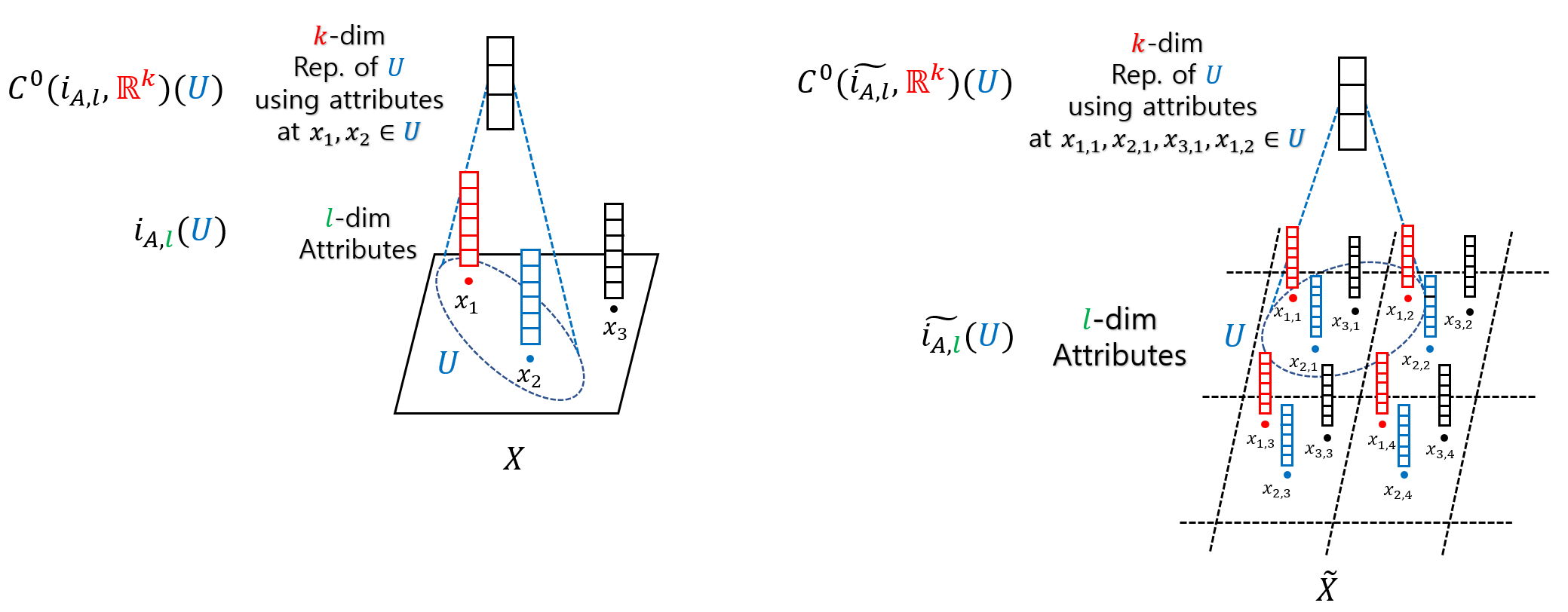}
    \caption{Construction of a functor describing the presheaf / copresheaf of continuous functions induced from the sheaf $i_{A,l}$ over a topological space $X$ and its universal cover $\widetilde{X}$.}
    \label{fig:induced_cont}
\end{figure}

A crucial property we will explore in this section is that the presheaf (and the copresheaf) of continuous functions induced from the pushforward sheaf (and the pushforward cosheaf) is not a sheaf (and a cosheaf). This property will play a key role in constructing a mathematical framework for understanding the empirical limitations of certain classes of deep learning algorithms in the upcoming sections.

\begin{proposition} \label{prop:sheaf_fail}
Let $X$ be a locally compact Hausdorff topological space. Fix a finite set of points $\{x_i\}_{i=1}^N$ in $X$. Let $S_{x_i}$ be the skyscraper sheaf (or cosheaf $S_{x_i}^{op}$) of real vector space at the point $x_i$, given by
\begin{equation}
    S_{x_i}(U) := \begin{cases}
    \R^{l_i} \text{ if } x_i \in U \\
    0 \text{ otherwise }
    \end{cases}
\end{equation}
\begin{enumerate}
    \item The copresheaf $C^0(i_{A,l},\R^k)$ is not a cosheaf.
    \item The presheaf $C^0(i_{A,l}^{op},\R^k)$ is not a sheaf. 
\end{enumerate}
\end{proposition}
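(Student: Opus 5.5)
We will exhibit explicit counterexamples. Since the statement only asserts that the axioms fail, one bad cover suffices. We take $A=\{x_1,x_2\}$ with $x_1\neq x_2$, and we take all $l_i = l$. Because $X$ is Hausdorff, there are disjoint open sets $U_1\ni x_1$ and $U_2\ni x_2$; set $U=U_1\cup U_2$. Then
\begin{equation}
    i_{A,l}(U_1)=\R^l,\quad i_{A,l}(U_2)=\R^l,\quad i_{A,l}(U)=\R^l\oplus\R^l,\quad i_{A,l}(U_1\cap U_2)=0 .
\end{equation}
The same values hold for $i_{A,l}^{op}$, since the skyscraper sheaf is also a cosheaf. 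In both cases the structure maps between $U_j$ and $U$ are the coordinate inclusion $\R^l\hookrightarrow\R^l\oplus\R^l$ or the coordinate projection $\R^l\oplus\R^l\to\R^l$.

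For (1), we compute the copresheaf $C^0(i_{A,l},\R^k)$. A continuous $f$ on $i(U_j)=\R^l$ is pushed forward along the restriction $i(U)\to i(U_j)$, so it goes to $f\circ \mathrm{pr}_j$. The cosheaf axiom would require the map
\begin{equation}
    C^0(\R^l,\R^k)\oplus C^0(\R^l,\R^k)\to C^0(\R^l\oplus\R^l,\R^k),\qquad (f_1,f_2)\mapsto f_1\circ \mathrm{pr}_1+f_2\circ\mathrm{pr}_2
\end{equation}
to be surjective. We show that it is not. Take $k=1$ and $g(a,b)=\langle a,e\rangle\langle b,e\rangle$ for a unit vector $e$. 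If $g=f_1(a)+f_2(b)$, evaluating at $(0,0)$, $(a,0)$, $(0,b)$ and $(a,b)$ gives $g(a,b)-g(a,0)-g(0,b)+g(0,0)=0$. But this alternating sum equals $\langle a,e\rangle\langle b,e\rangle\neq0$ for $a=b=e$, a contradiction. So $\bigoplus_\alpha\to C^0(\cdot)(U)\to 0$ is not exact, and the copresheaf is not a cosheaf.

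For (2), in the presheaf $C^0(i^{op}_{A,l},\R^k)$ the restriction from $U$ to $U_j$ is precomposition with the inclusion $\iota_j$, i.e.\ $g\mapsto g\circ\iota_j$. Since $U_1\cap U_2=\emptyset$, the term over the intersection is $C^0(0,\R^k)\cong\R^k$, and both restrictions to it evaluate a function at $0$. The function $g$ from (1) is nonzero, yet $g\circ\iota_1=0$ and $g\circ\iota_2=0$. Hence the map $C^0(U)\to\prod_j C^0(U_j)$ is not injective, so locality (uniqueness of gluing) fails and the sequence (\ref{eq:sheaf_axiom}) is not exact at its first term. Gluing existence also fails, since we can pick $f_1,f_2$ with $f_1(0)\neq f_2(0)$. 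Either failure suffices.

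The only real obstacle is bookkeeping. We must pin down the variance conventions, namely which of $i_{A,l}$ and $i_{A,l}^{op}$ is being dualized and in which direction the induced maps go. We must also read "exactness" of (\ref{eq:sheaf_axiom}) and (\ref{eq:cosheaf_axiom}) for these sets of nonlinear maps as the usual equalizer and coequalizer-type conditions; they are vector spaces under pointwise addition, so linear exactness makes sense. Once the structure maps are fixed as projections and inclusions, the non-additively-separable function $g$ witnesses both failures. The case $l_i$ distinct, or $N>2$, is handled by the same argument using any two of the points.
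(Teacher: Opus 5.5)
Your proof is correct and follows the paper's approach. In both, a product of coordinates coming from different points is the witness: it shows that $\bigoplus_\alpha C^0(i_{A,l},\R^k)(U_\alpha)\to C^0(i_{A,l},\R^k)(U)$ is not surjective, and it shows non-injectivity of the restriction map for $C^0(i_{A,l}^{op},\R^k)$. Your identity $g(a,b)-g(a,0)-g(0,b)+g(0,0)=0$ rigorously justifies the non-surjectivity, which the paper only asserts.

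Your side remark that gluing existence also fails is wrong, though harmless, since you rely only on the injectivity failure. A pair with $f_1(0)\neq f_2(0)$ is not compatible on $U_1\cap U_2$, so no gluing is demanded for it. Every compatible pair, with $f_1(0)=f_2(0)$, does glue via $g(a,b)=f_1(a)+f_2(b)-f_1(0)$, so (\ref{eq:sheaf_axiom}) is exact at the middle term for this cover.
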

\begin{proof}

\medskip
\textbf{The copresheaf $C^0(i_{A,l},\R^k)$}
\medskip

Let us regard the functor $i_{A,l}$ as the sheaf of real vector spaces over $X$. The inclusion of open subsets $U \to V$ induces a projection map of real vector spaces.
\begin{align}
\begin{split}
    res_{V,U}: i_{A,l}(V) &\to i_{A,l}(U) \\
    (y_1,y_2, \cdots, y_{\sum_{x_i \in V} l_i}) &\mapsto (y_1,y_2, \cdots, y_{\sum_{x_i \in U} l_i})
\end{split}
\end{align}

As for the copresheaf $C^0(i_{A,l},\R^k)$, the inclusion $U \to V$ induces a morphism of functions
\begin{align}
    i_{U,V}: C^0(i_{A,l},\R^k)(U) &\to C^0(i_{A,l}, \R^k)(V) \\
    f &\mapsto f \circ res_{V,U}
\end{align}

We refer to Figures \ref{fig:res} and \ref{fig:ind} for illustrations on how the morphisms $res_{V,U}$ and $i_{U,V}$ given two open sets $U \subset V$ are constructed. 
\begin{figure}
        \centering
        \includegraphics[width=100mm]{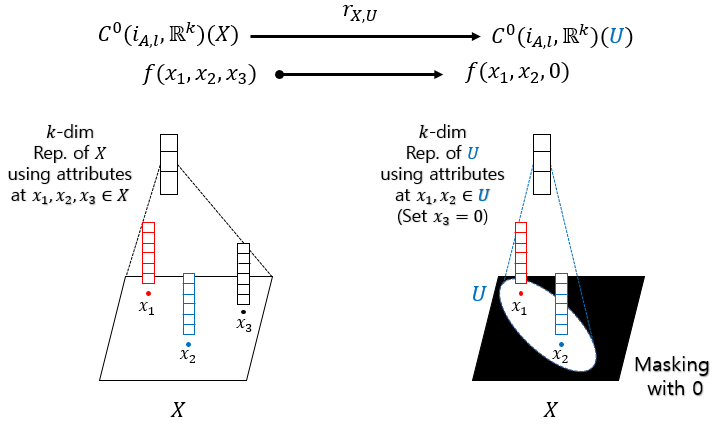}
        \caption{Definition of the restriction map $res_{X,U}$}
        \label{fig:res}
\end{figure}
\begin{figure}
        \centering
        \includegraphics[width=100mm]{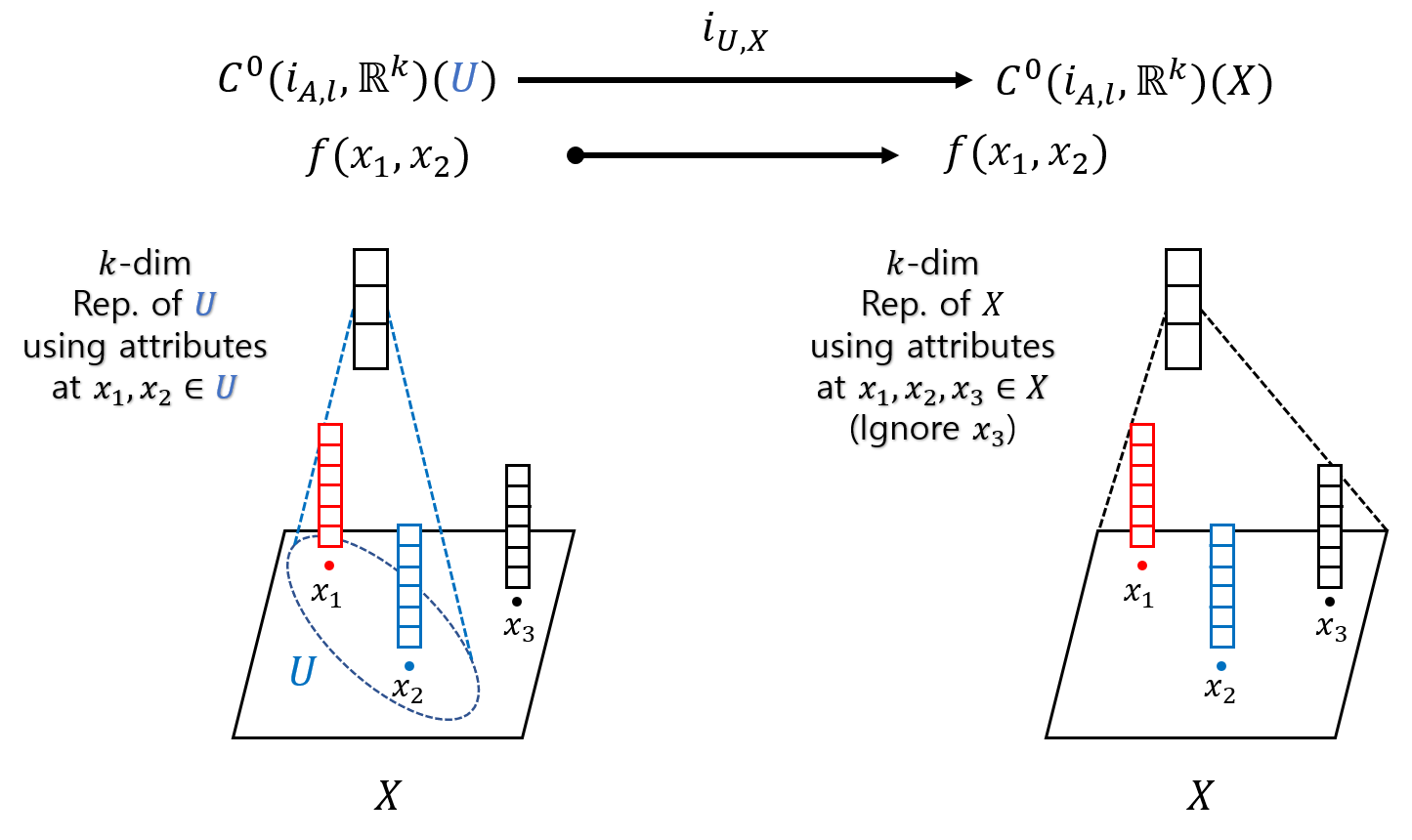}
        \caption{Definition of the inclusion map $i_{U,X}$}
        \label{fig:ind}
\end{figure}

Let $U \subset X$ be any open neighborhood. Without loss of generality, suppose the open set $U$ contains all the points $\{x_1,x_2,\cdots,x_N\}$. Let $\{U_\alpha\}_\alpha$ be an open cover of $U$ such that there exists a unique point $x_\alpha \in \{x_i\}_{i=1}^N$ such that $x_\alpha \in U_\alpha$. Then the morphism
\begin{equation} \label{equation:copresheaf_map}
    \bigoplus_{\alpha} C^0(i_{A,l},\R^k)(U_\alpha) \to C^0(i_{A,l},\R^k)(U)
\end{equation}
is not surjective. Consider the function 
\begin{equation}
    f(y_1, \cdots, y_{\sum_{i=1}^N l_i}) = \left( \prod_{j=1}^{\sum_{i=1}^N l_i} y_j, \cdots, \prod_{j=1}^{\sum_{i=1}^N l_i} y_j \right)
\end{equation}
Then there doesn't exist a collection of functions $\{g_i: \R^{l_i} \to \R^k \}_{i=1}^N$ such that $f = \sum_\alpha g_\alpha \circ res_{U_\alpha,U}$. Indeed, for each $i$-th coordinate of $f$, $g_i \neq \prod_{j=1}^{\sum_{i=1}^N l_i} y_j$.

\medskip
\textbf{The presheaf $C^0(i_{A,l},\R^k)$}
\medskip

If we consider $S_{x_i}$ as the skyscraper cosheaf of real vector space at the point $x_i$, the presheaf $C^0(i_{A,l}, \R^k)$ is not a sheaf. The inclusion of open subsets $U \to V$ induces an injective linear map of vector spaces
\begin{align}
    \begin{split}
        i_{U,V}: i_{A,l}(U) &\to i_{A,l}(V) \\
        (y_1, y_2, \cdots, y_{\sum_{x_i \in U} l_i}) &\mapsto (y_1, y_2, \cdots, y_{\sum_{x_i \in U} l_i}, 0, \cdots, 0)
    \end{split}
\end{align}
where the image $i_{U,V}(\mathbb{R}^{\sum_{x_i \in U} l_i})$ is equal to the subspace $\mathbb{R}^{\sum_{x_i \in U} l_i} \times \{0\}$ of $\mathbb{R}^{\sum_{x_i \in V} l_i}$.

As for the presheaf $C^0(i_{A,l}, \mathbb{R}^k)$, the inclusion $U \to V$ induces a morphism of functions
\begin{align}
    \begin{split}
        res_{V,U}: C^0(i_{A,l}, \mathbb{R}^k)(V) &\to C^0(i_{A,l}, \mathbb{R}^k)(U) \\
        \{f: \R^{\sum_{x_i \in V} l_i} \to \R^k \} &\mapsto \{f \circ i_{U,V}: \R^{\sum_{x_i \in U} l_i} \to \R^{\sum_{x_i \in V} l_i} \to \R^k \}
    \end{split}
\end{align}

Let $U \subset X$ be any open neighborhood.  Without loss of generality, suppose the open set $U$ contains all the points $\{x_1, x_2, \cdots, x_N\}$. Let $\{U_\alpha\}_\alpha$ be an open cover of $U$ such that there exists a point $x_\alpha \in \{x_i\}_{i=1}^N$ such that $x_\alpha \not\in U_\alpha$. Then the morphism
\begin{align} \label{equation:presheaf_map}
\begin{split}
    C^0(i_{A,l},\R^k)(U) &\to \prod_{\alpha} C^0(i_{A,l}, \R^k)(U_\alpha) \\
    f &\mapsto (f \circ i_{U_\alpha,U})_{\alpha}
\end{split}
\end{align}
is not injective. We can use the same function $f: \R^{\sum_{i=1}^N l_i} \to \R^k$ defined as
\begin{equation}
    f(y_1, \cdots, y_{\sum_{i=1}^N l_i}) = \left( \prod_{j=1}^{\sum_{i=1}^N l_i} y_j, \cdots, \prod_{j=1}^{\sum_{i=1}^N l_i} y_j \right)
\end{equation}
to show that it is a non-zero function whereas for any $\alpha$, $f \circ i_{U_\alpha,U} = 0$.
\end{proof}

\begin{remark} \label{remark:sheaf_fail}
Proposition \ref{prop:sheaf_fail} shows that for any collections of open sets $\{U_\alpha\}_\alpha$ with $U = \cup_{\alpha} U_\alpha$, the presheaf (or the copresheaf) $C^0(i_{A,l}, \R^k)$ satisfies the following two relations:
\begin{align} \label{eqn:fail_condition}
    \begin{split}
        \text{Ker} \left( C^0(i_{A,l}, \R^k)(U) \to \prod_{\alpha} C^0(i_{A,l}, \R^k)(U_\alpha) \right) & \neq 0 \text{ (Locality) } \\
        \text{Im} \left( \oplus_{\alpha} C^0(i_{A,l}, \R^k)(U_\alpha) \to C^0(i_{A,l}, \R^k)(U) \right) &\neq C^0(i_{A,l}, \R^k)(U) \text{ (Surjectivity) }
    \end{split}
\end{align}
However, for any finite collections of open sets $\{U_\alpha\}_{\alpha=1}^n$ with $U = \cup_{\alpha=1}^n U_\alpha$, one can prove that $C^0(i_{A,l}, \R^k)$ satisfies the following two ``gluing'' conditions. We note that the first condition corresponds to the presheaf $C^0(i_{A,l},\R^k)$, whereas the second condition correpsonds to the cosheaf $C^0(i_{A,l},\R^k)$.
\begin{align} \label{eqn:succeed_condition}
    \begin{split}
        \text{Im} \biggl( C^0(i_{A,l}, \R^k)(U) &\to \prod_{\alpha} C^0(i_{A,l}, \R^k)(U_{\alpha}) \biggr) \\
        &= \text{Ker} \biggl( \prod_{\alpha} C^0(i_{A,l}, \R^k)(U_\alpha) \to \prod_{\alpha, \beta} C^0(i_{A,l}, \R^k)(U_\alpha \cap U_\beta) \biggr), \\
        \text{Ker} \biggl( \oplus_{\alpha} C^0(i_{A,l}, \R^k)(U_{\alpha}) &\to  C^0(i_{A,l}, \R^k)(U) \biggr) \\
        &= \text{Im} \biggl( \oplus_{\alpha, \beta} C^0(i_{A,l}, \R^k)(U_\alpha \cap U_\beta) \to \oplus_{\alpha} C^0(i_{A,l}, \R^k)(U_\alpha) \biggr).
    \end{split}
\end{align}
We give a sketch of the proof of the claim above for the case $\{U_\alpha\}_{\alpha=1}^3$, the proof for the general case of which we omit in this manuscript. By definition, one can easily check that the left hand sides of the equations above are contained in the right hand sides of the equations. To prove the converse of the first equation above, we denote by $f_i \in C^0(i_{A,l},\R^k)(U_i)$ the local sections defined over $U_i$'s. Denote by $f_{i,j} := res_{U_i,U_i\cap U_j}(f_i)$. Suppose that
\begin{equation*}
    (f_1,f_2,f_3) \in \text{Ker} \left( \prod_{\alpha=1}^3 C^0(i_{A,l}, \R^k)(U_\alpha) \to \prod_{1 \leq \alpha < \beta \leq 3} C^0(i_{A,l}, \R^k)(U_\alpha \cap U_\beta) \right)
\end{equation*}
Then by definition, $f_{i,j} = f_{j,i}$ for all $1 \leq i < j \leq 3$. We can hence define the local section $f_{i,j,k} := res_{U_i \cap U_j, U_i \cap U_j \cap U_k}(f_{i,j})$ defined over the intersection of three open sets $U_i \cap U_j \cap U_k$. Then $f_{1,2,3} = f_{1,3,2} = f_{2,3,1}$. Define a function $f \in C^0(i_{A,l},\R^k)(U)$ by
\begin{equation}
    f := f_1 + f_2 + f_3 - f_{1,2} - f_{1,3} - f_{2,3} + f_{1,2,3},
\end{equation}
where we extend functions $f_i, f_{i,j}, f_{i,j,k}$ to $U$ via extension by zero. Then we obtain that $f$ restricts to local sections $(f_1,f_2,f_3)$ in $\prod_{\alpha=1}^3 C^0(i_{A,l},\R^k)(U_\alpha)$. 

The procedure of the proof for verifying the statement for copresheaf $C^0(i_{A,l},\R^k)$ is analogous. We also present the exemplary proof for the case $\{U_\alpha\}_{\alpha=1}^3$. By definition, the left hand side of the cosheaf condition contains the right hand side of the cosheaf condition. We check the converse statement. For each $\alpha = 1, 2, 3$, let $f_\alpha: \mathbb{R}^{l_\alpha} \to \mathbb{R}^k \in C^0(i_{A,l}, \R^k)(U_{\alpha})$ for each $\alpha = 1, 2, 3$, and denote by $y_1^{\alpha}, y_2^{\alpha}, \cdots, y_{l_\alpha}^{\alpha}$ the coordinates of $\mathbb{R}^{l_\alpha}$. Suppose we have
\begin{equation*}
    (f_1,f_2,f_3) \in \text{Ker} \biggl( \oplus_{\alpha} C^0(i_{A,l}, \R^k)(U_{\alpha}) \to  C^0(i_{A,l}, \R^k)(U) \biggr).
\end{equation*}
Then $f := \sum_{i=1}^3 f_i$ is the zero function over $U$. We consider the case where $\alpha = 1$. The other cases follow analogously. The fact that $f = 0$ implies that $f_1$ is independent from the variable $y_i^1$ if $y_i^1$ does not lie in the image of the following two projection maps:
\begin{align*}
    res_{U_1, U_1 \cap U_2}: i_{A,l}(U_1) &\to i_{A,l}(U_1 \cap U_2) \\
    (y_1^1, y_2^1, \cdots, y_{l_1}^1) &\mapsto (y_1, \cdots, y_{\sum_{x_i \in U_1 \cap U_{2}} l_i}), \\
    res_{U_1, U_1 \cap U_3}: i_{A,l}(U_1) &\to i_{A,l}(U_1 \cap U_3) \\
    (y_1^1, y_2^1, \cdots, y_{l_1}^1) &\mapsto (y_1, \cdots, y_{\sum_{x_i \in U_1 \cap U_3} l_i}).
\end{align*}
Therefore, $f_1$ is a continuous function depending only on variables lying in the image of $res_{U_1, U_1 \cap U_2}$ and $res_{U_1, U_1 \cap U_3}$. Analogous statements can be achieved for functions $f_2$ and $f_3$. 

Taking this new fact into consideration, we use $f = 0$ again to obtain that $f_1$ is a sum of two functions $f_{1,2} + f_{1,3}$, where $f_{1,2}$ is a function in $C^0(i_{A,l},\R^k)(U_1 \cap U_2)$, and $f_{1,3}$ is a function in $C^0(i_{A,l},\R^k)(U_1 \cap U_3)$. If not, then $f_1$ has a summand $g$ which is a non-zero function not in $C^0(i_{A,l},\R^k)(U_2)$ and $C^0(i_{A,l},\R^k)(U_3)$. Since the function $g$ cannot be canceled out with respect to taking the operation $f_1 + f_2 + f_3$, we obtain a contradiction that $f = 0$. Likewise, one can obtain that $f_2$ is a sum of two functions $f_{2,1} + f_{2,3}$, and $f_3$ is a sum of two functions $f_{3,1} + f_{3,2}$, where $f_{i,j}$'s are functions in $C^0(i_{A,l},\R^k)(U_i \cap U_j)$. Because $f = 0$, we have $f_{1,2} = -f_{2,1}$, $f_{1,3} = -f_{3,1}$, and $f_{2,3} = -f_{3,2}$. Therefore, the tuple $(f_1,f_2,f_3)$ can be rewritten as $\sum_{(i,j) \in \{(1,2),(1,3),(2,3)\}} i_{U_i \cap U_j, U_i}(f_{i,j}) - i_{U_i \cap U_j, U_j}(f_{i,j})$, proving the desired claim.
\end{remark}

\section{Deep Learning Techniques} \label{section:deep_learning}
\subsection{Functorial Interpretation}
The dual nature of skyscraper sheaf and cosheaf as both a covariant and a contravariant functor is the unique property difficult to impose on arbitrary sheaves. The dual relation of such functors can be described using the di-natural transformation between a covariant and a contravariant functor.
\begin{definition}
Let $\mathcal{C}, \mathcal{D}$ be abelian categories. Let $\mathcal{F}: \mathcal{C} \to \mathcal{D}$ be a covariant functor, and let $\mathcal{G}: \mathcal{C}^{op} \to \mathcal{D}$ be a contravariant functor. A di-natural transformation $\eta: \mathcal{F} \to \mathcal{G}$ is a family of morphisms $\{\eta_x: \mathcal{F}(x) \to \mathcal{G}(x)\}_{x \in \text{Obj}(\mathcal{C})}$ such that for every morphism $\{x \to y \} \in \text{Mor}(\mathcal{C})$, the following diagram commutes:
\begin{center}
\begin{tikzcd}
\mathcal{F}(x) \arrow[r, "\eta_x"] \arrow[d]
& \mathcal{G}(x) \\
\mathcal{F}(y) \arrow[r, "\eta_y" ]
& \mathcal{G}(y) \arrow[u]
\end{tikzcd}
\end{center}
In a similar manner, one can also define a di-natural transformation $\eta: \mathcal{G} \to \mathcal{F}$.
\end{definition}

\begin{example}
Fix a point $x \in X$. Denote by $S_x$ the skyscraper cosheaf at $x$, and denote by $S_x^{op}$ the skyscraper sheaf at $x$. The collection of identity functions $\{id_U: S_x(U) \to S_x^{op}(U)\}_{U \subset X \text{, open }}$ defines a di-natural transformation $id: S_x \to S_x^{op}$. Likewise, there exists a di-natural transformation $id: C^0(i_{A,l}^{op},\R^k) \to C^0(i_{A,l},\R^k)$.
\end{example}

Using these mathematical formulations, we define the deep learning technique as a global section of $C^{0}(i_{A,l}, \R^k)$, both as a presheaf and a copresheaf, over a locally compact topological space $X$.
\begin{definition}[Discrete Deep Learning Technique] \label{def:ml}
Let $X$ be a locally compact topological space. Fix a finite set of points $\chi_X := \{x_1,x_2,\cdots,x_N\} \subset X$. Consider a sequence of collections of finite open subsets
\begin{equation}
    \left\{ \{U_{\alpha_0}^0\}_{\alpha_0=1}^N, \{U_{\alpha_1}^1\}_{\alpha_1 \in \mathcal{A}_1}, \cdots, \{U_{\alpha_m}^m\}_{\alpha_m \in \mathcal{A}_m}, X \right\}
\end{equation}
such that for any $i$, $x_i \in U_{i}^0$ and $x_j \not\in U_j^0$ if $j \neq i$. (Note that each collection does not necessarily have to be an open cover of $X$). Let $S_{x_i}$ be a skyscraper sheaf (or cosheaf) at the point $x_i \in X$ of real vector space of dimension $k_0$.

A representation $DL_\nu^m$ obtained from a discrete deep learning algorithm with $m$ layers equipped with a fixed collection of pointwise deviations $\nu := \{\nu_i: \R^{k_0} \to \R^{k_0}\}_{i=1}^N$ at $x_i$'s is a global section of $C^{0}(i_{A,l},\R^k)$ obtained from the following procedure.
\begin{itemize}
    \item We identify $DL_\nu^m$, as an element of the copresheaf $C^{0}(i_{A,l},\R^k)$, with the image of the element $id + \nu := (id + \nu_1, \cdots, id + \nu_N)$ under the following composition of functions.
    \begin{tiny}
    \begin{align} \label{eq:ML_layers}
    \begin{split}
        \prod_{\alpha_0=1}^N C^0(i_{A,l},\R^{k_0})(U_{\alpha_0}^0) \to \prod_{\alpha_1 \in \mathcal{A}_1} C^0(i_{A,l},\R^{k_1})(U_{\alpha_1}^1) \to &\cdots \to \prod_{\alpha_m \in \mathcal{A}_m} C^0(i_{A,l},\R^{k_m})(U_{\alpha_m}^m) \to C^0(i_{A,l},\R^{k})(X) \\
        id + \nu := (id + \nu_1, \cdots, id + \nu_N) \mapsto (f_{1,1}, f_{1,2}, \cdots, f_{1,|\mathcal{A}_1|}) \mapsto &\cdots \mapsto (f_{m,1}, f_{m,2}, \cdots, f_{m,|\mathcal{A}_m|}) \mapsto DL_\nu^m
    \end{split}
    \end{align}
    \end{tiny}
    
    \item Each function, possibly non-linear,
    \begin{equation}
        \psi_{i+1}: \prod_{\alpha_i \in \mathcal{A}_i} C^0(i_{A,l},\R^{k_i})(U_{\alpha_i}^i) \to \prod_{\alpha_{i+1} \in \mathcal{A}_{i+1}} C^0(i_{A,l},\R^{k_{i+1}})(U_{\alpha_{i+1}}^{i+1})
    \end{equation}
    corresponds to the $i+1$-th layer of the discrete deep learning technique. We denote by $(f_{i,1}, f_{i,2}, \cdots, f_{i,|\mathcal{A}_i|})$ the image obtained after applying $i$ layers of the discrete deep learning technique, i.e.
    \begin{equation}
        (f_{i,1}, f_{i,2}, \cdots, f_{i,|\mathcal{A}_i|}) = (\psi_i \circ \psi_{i-1} \circ \cdots \circ \psi_0) (id + \nu_1, \cdots, id + \nu_N)
    \end{equation}
    
    \item We identify $DL_\nu^m$ as a global section of the presheaf $C^0(i_{A,l}^{op}, \R^k)$ under the composition of the di-natural transformation.
    \begin{align}
    \begin{split}
        C^0(i_{A,l}^{op},\R^k)(X) \to  &C^0(i_{A,l},\R^k)(X) \\
        DL_\nu^m \mapsto &id(DL_\nu^m)
    \end{split}
    \end{align}
\end{itemize}
A discrete deep learning algorithm $DL^m$ composed of $m$ layers is a function
\begin{equation}
    \prod_{\alpha_0 = 1}^N C^0(i_{A,l}, \R^{k_0})(U_{\alpha_0}^0) \to C^0(i_{A,l}, \R^k)(X)
\end{equation}
obtained from the aforementioned procedure.
\end{definition}

\begin{definition} [Neighborhood aggregating axioms] \label{def:neighborhood_aggregate}
A neighborhood aggregating layer $\psi_n$ of a discrete deep learning technique $DL_\nu^m$ is a function 
\begin{equation*}
    \psi_n: \prod_{\alpha_{n-1} \in \mathcal{A}_{n-1}} C^0(i_{A,l}^{op}, \mathbb{R}^{k_{n-1}})(U_{\alpha_{n-1}}^{n-1}) \to \prod_{\alpha_{n} \in \mathcal{A}_{n}} C^0(i_{A,l}^{op}, \mathbb{R}^{k_{n}})(U_{\alpha_{n}}^n)
\end{equation*}
whose associated collections of finitely many open subsets
\begin{equation*}
    \{U_{\alpha_{n-1}}^{n-1}\}_{\alpha_{n-1} \in \mathcal{A}_{n-1}}, \{U_{\alpha_{n}}^{n}\}_{\alpha_{n} \in \mathcal{A}_{n}}
\end{equation*}
satisfy the following four neighborhood aggregating axioms.
\begin{enumerate} \label{equation:deep_learning_axioms}
    \item \textbf{Locality}: There exists a point $x_j \in \chi_X$ such that $x_j \not\in U_{\alpha_n}^n$ for every $\alpha_n \in \mathcal{A}_n$.
    \item \textbf{Strictness}: $\# \mathcal{A}_{n-1} > \# \mathcal{A}_{n}$.
    \item \textbf{Non-triviality}: For each $\alpha_n \in \mathcal{A}_n$, there exists a proper subset $\widetilde{\mathcal{A}}_{\alpha_n, n-1} \subsetneq \mathcal{A}_n$ such that
    \begin{equation}
    U_{\alpha_n}^n = \bigcup_{\alpha_{n-1} \in \widetilde{\mathcal{A}}_{\alpha_n, n-1}} U_{\alpha_{n-1}}^{n-1}
    \end{equation}
    \item \textbf{Distinctness}: For any $\alpha_{i}, \alpha_j \in \mathcal{A}_n$, 
    \begin{equation}
        U_{\alpha_i}^n \neq U_{\alpha_j}^n
    \end{equation}
\end{enumerate}
\end{definition}

\begin{remark}
We note that axioms (2) and (3) imply that there exists an $\alpha_n \in \mathcal{A}_n$ whose any proper subset $\widetilde{A}_{\alpha_n,n-1}$ satisfying axiom (3) is of size at least $2$.
\end{remark}

\begin{remark}
The set of pointwise deviations $\{\nu_i: \R^{l_i} \to \R^{l_i}\}_{i=1}^N$ can be considered as either sensor noises in detecting signals at each point $x_i \in X$, or pointwise differences between two input data defined over $X$. One can hence regard the collection of real representations obtained from applying a discrete deep learning algorithm $DL^m$ to a given dataset as the image
\begin{equation}
    DL^m(\{id + \nu\}_{\nu \in \mathcal{V}})
\end{equation}
for some collections of deviations $\mathcal{V}$ such that every element $\nu \in \mathcal{V}$ is comprised of constant functions.
\end{remark}

\begin{remark}
Using the restriction morphism for the presheaf $C^0(i_{A,l}^{op}, \R^k)$, one may define a representation $DL_{\nu,U}^m$ of an open subset $U \subset X$ induced from the representation $DL_\nu^m$ of $X$ defined as
\begin{equation}
    DL_{\nu,U}^m := res_{X,U} \circ DL_\nu^m
\end{equation}
We note that the representations $DL_{\nu,U}^m$ are not necessarily identical to those resulting from imposing neural layers of $DL_\nu^m$ (\ref{eq:ML_layers}).
\end{remark}

\begin{figure}
            \centering
            \includegraphics[width=120mm]{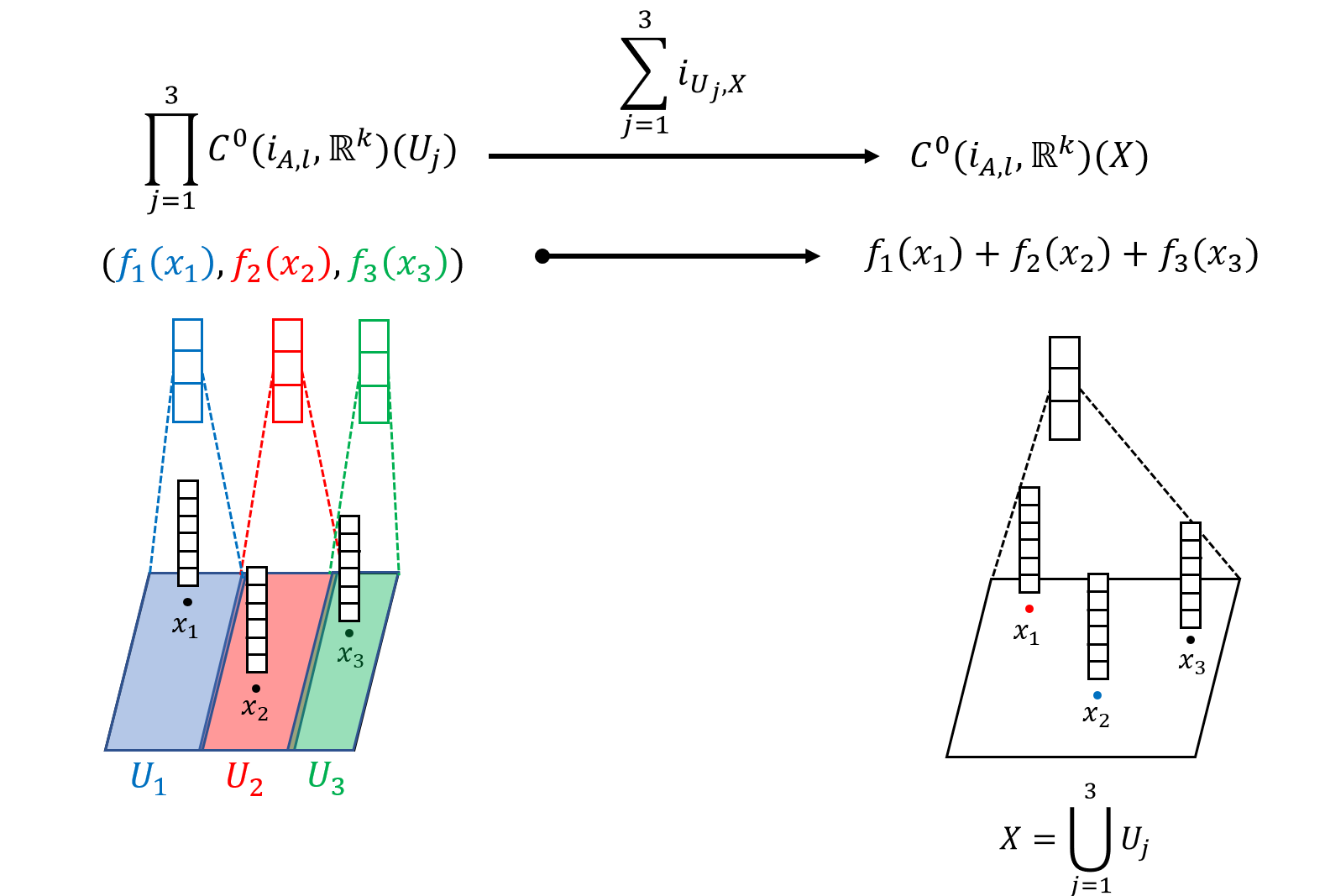}
            \caption{Summation by inclusion maps, which defines the surjectivity condition of the cosheaf axiom}
\end{figure}
\begin{figure}
            \centering
            \includegraphics[width=120mm]{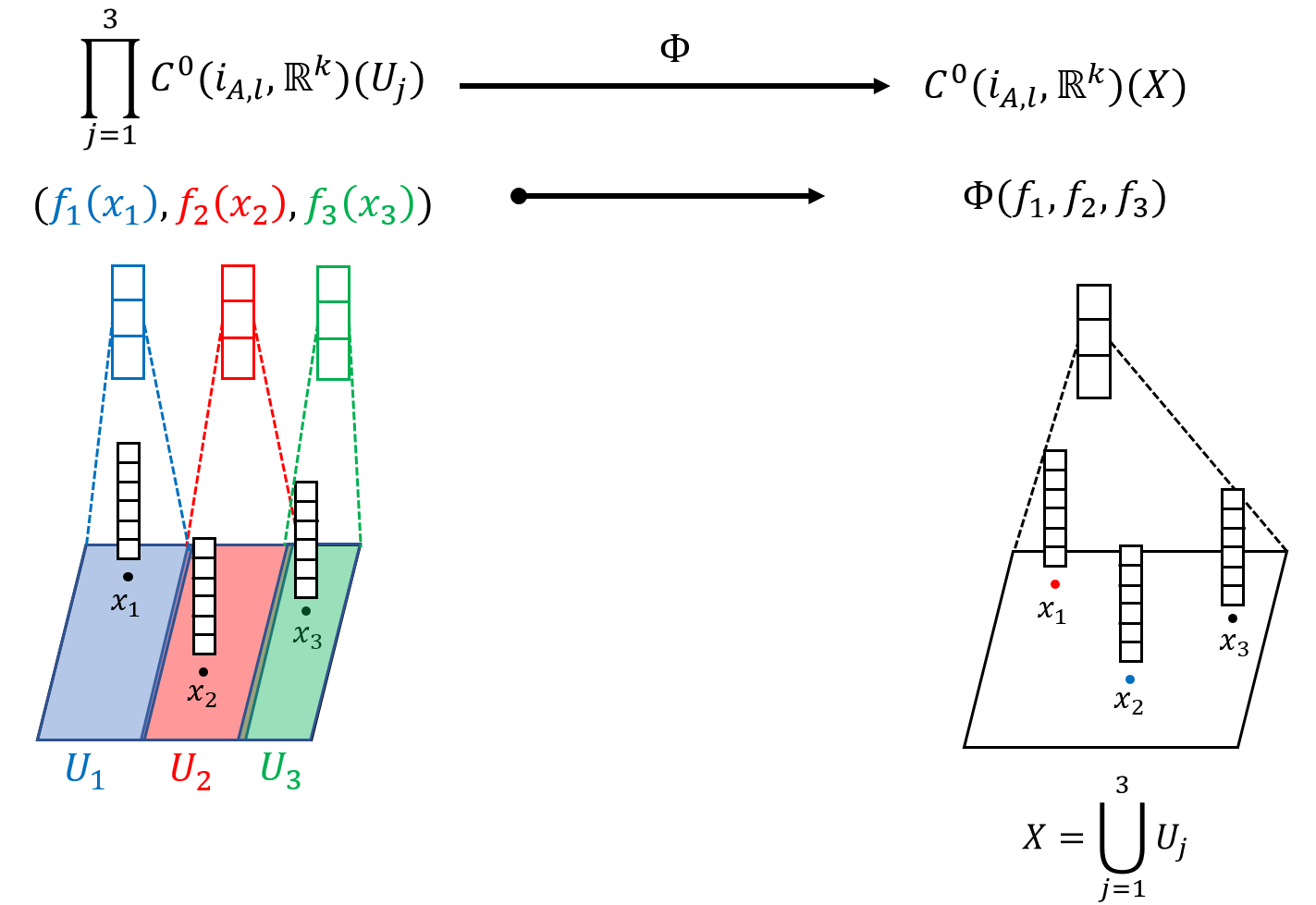}
            \caption{An example of a layer of a convolutional (or message passing) neural network}
\end{figure}

\begin{definition}[Factorization through inclusion] \label{definition:fact_inc}
Let $DL^m$ be a discrete deep learning algorithm over $X$ composed of $m$ layers. We say that the $i$-th layer satisfying axiom (3) of the neighborhood aggregating axiom 
\begin{equation}
    \psi_i: \prod_{\alpha_i \in \mathcal{A}_i} C^0(i_{A,l},\R^{k_i})(U_{\alpha_i}^i) \to \prod_{\alpha_{i+1} \in \mathcal{A}_{i+1}} C^0(i_{A,l},\R^{k_{i+1}})(U_{\alpha_{i+1}}^{i+1})
\end{equation}
factors through the inclusion map if there exists a function for each $\alpha_i \in \mathcal{A}_i$
\begin{equation}
    \varphi_{\alpha_i}: C^0(i_{A,l},\R^{k_i})(U_{\alpha_i}^i) \to C^0(i_{A,l},\R^{k_{i+1}})(U_{\alpha_{i}}^{i}),
\end{equation}
and a collection of continuous activation functions
\begin{equation}
    F_{\alpha_{i+1}}: C^0(i_{A,l},\R^{k_{i+1}})(U_{\alpha_{i+1}}^{i+1}) \to C^0(i_{A,l},\R^{k_{i+1}})(U_{\alpha_{i+1}}^{i+1})
\end{equation}
such that 
\begin{equation}
    \psi_i = \left( F_{\alpha_{i+1}} \circ \sum_{\alpha_j \in \mathcal{A}_i} i_{U_{\alpha_i}, U_{\alpha_{i+1}}} \circ \prod_{\alpha_i \in \mathcal{A}_i}\varphi_{\alpha_i} \right)_{\alpha_{i+1} \in \mathcal{A}_{i+1}}
\end{equation}
\end{definition}

\begin{remark}
The definition provided above generalizes the sum-decomposition of continuous functions $f: \mathbb{R}^n \to \mathbb{R}$ proposed in \cite{ZK17} and \cite{WFE19}. A function $f: \mathbb{R}^n \to \mathbb{R}$ is said to be sum-decomposable if there exist functions $\phi: \mathbb{R} \to Z$ and $\rho: Z \to \mathbb{R}$ for some topological space $Z$ such that
\begin{equation}
    f(x_1,\cdots,x_n) = \rho(\sum_{i=1}^n \phi(x_i))
\end{equation}
\end{remark}

\begin{example}
Any fully connected layer $L_i$ with non-linear activation functions factors through the inclusion map.
\end{example}

\begin{remark} \label{remark:comm_diag}
Using the di-natural transformation $id: C^0(i_{A,l}, \R^k) \to C^0(i_{A,l}^{op}, \R^k)$, it holds that the following diagram commutes for any neighborhood aggregating $i$-th layer $\psi_i$ that factors through inclusion.
\begin{tiny}
\begin{center}
\begin{tikzcd}
& \prod_{\alpha_i \in \mathcal{A}_i} C^0(i_{A,l},\R^{k_{i+1}})(U_{\alpha_i}^i) \arrow[r] \arrow[dd, anchor=center, xshift=-20pt, "\left( F_{\alpha_{i+1}} \circ \sum_{\alpha_i} i_{U_{\alpha_i}, U_{\alpha_{i+1}}} \right)_{\alpha_{i+1}}"] & \prod_{\alpha_i \in \mathcal{A}_i} C^0(i_{A,l}^{op},\R^{k_{i+1}})(U_{\alpha_i}^i) \\
\prod_{\alpha_i \in \mathcal{A}_i} C^0(i_{A,l},\R^{k_i})(U_{\alpha_i}^i) \arrow[ur, dotted, "\prod_{\alpha_i \in \mathcal{A}_i} \varphi_{\alpha_i}"] \arrow[dr, "\psi_i"]& & \\
& \prod_{\alpha_{i+1} \in \mathcal{A}_{i+1}} C^0(i_{A,l},\R^{k_{i+1}})(U_{\alpha_{i+1}}^{i+1}) \arrow[r, "id" ] & \prod_{\alpha_{i+1} \in \mathcal{A}_{i+1}} C^0(i_{A,l}^{op},\R^{k_{i+1}})(U_{\alpha_{i+1}}^{i+1}) \arrow[uu, anchor=center, xshift=40pt, "\left( \sum_{\alpha_{i+1}} \text{res}_{U_{\alpha_{i+1}}, U_{\alpha_{i}}} \right)_{\alpha_i}"]
\end{tikzcd}
\end{center}
\end{tiny}

\end{remark}

\subsection{Architectural Limitations}
We have all the key ingredients to prove a number of empirically verified architectural limitations of discrete deep learning techniques obtained from analyzing a collection of data sets, as aforementioned in the introduction. The following two theorems state that discrete deep learning techniques with non-linear layers may misidentify the characteristics of a collection of data. Interestingly, the factorability of the neighborhood aggregating $i$-th layer of the discrete deep learning algorithm affects whether the architecture is subject to non-unique gluing of local data or adversarial attacks.

\begin{theorem} [Non-unique local explainability] \label{theorem:first_limitation}
Let $DL^m$ be a discrete deep learning algorithm over $X$ composed of $m$ layers such that at least one of the layers is non-linear or does not factor through the inclusion map. Let $\{U_{\alpha}\}_\alpha$ be any open cover of $X$ such that there exists a point $x_\alpha \in \{x_i\}_{i=1}^N$ such that $x_\alpha \not\in U_\alpha$. Then there exist deviations $\mu, \nu$ such that $DL_\mu^m \neq DL_\nu^m$ whereas for every $\alpha$, $DL_{\mu,U_\alpha}^m = DL_{\nu,U_\alpha}^m$ for every $\alpha$.
\end{theorem}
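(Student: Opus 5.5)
The plan is to reduce the statement to the failure of locality for the presheaf $C^0(i_{A,l}^{op},\R^k)$ established in Proposition \ref{prop:sheaf_fail}, applied to the cover $\{U_\alpha\}_\alpha$. Regard $DL^m_\mu$ and $DL^m_\nu$ as global sections of $C^0(i_{A,l}^{op},\R^k)(X)$ via the di-natural transformation $id$ of Definition \ref{def:ml}. By definition, $DL^m_{\nu,U_\alpha} = res_{X,U_\alpha}\circ DL^m_\nu$, which is precomposition with the zero-extension $i_{U_\alpha,X}$. So it suffices to produce deviations $\mu,\nu$ such that the difference $g := DL^m_\mu - DL^m_\nu$ is a nonzero function on $\R^{\sum_i l_i}$ vanishing on every coordinate subspace $\R^{\sum_{x_i\in U_\alpha} l_i}\times\{0\}$. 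The hypothesis that each $U_\alpha$ omits some $x_{\alpha}$ is exactly what makes every such subspace proper, as in (\ref{equation:presheaf_map}).

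First I will compute what the restrictions actually see. On the image of $i_{U_\alpha,X}$ the coordinates at the omitted point $x_\alpha$ are zero. A deviation that differs only in how it treats those coordinates, when they are nonzero, is therefore invisible after restriction. I will take $\nu=0$ and choose $\mu$ to agree with $\nu$ wherever all input coordinates of some block are zero. For instance, set $\mu_i(y)=y+h_i(y)$, where $h_i$ is built from the product-type function used in the proof of Proposition \ref{prop:sheaf_fail}, so that $h_i$ vanishes whenever any block is zero. The deviation is defined pointwise, but its effect propagates through the layers to a function of all the coordinates.

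Second, I need $DL^m_\mu\neq DL^m_\nu$ while their restrictions agree. Agreement on each $U_\alpha$ follows by tracing the composite (\ref{eq:ML_layers}): on inputs with the omitted block set to zero, $id+\mu$ and $id+\nu$ coincide, so all layers produce identical outputs there. Distinctness is where the hypothesis is used. If every layer were linear and factored through inclusion, then by Remark \ref{remark:comm_diag} the network would be a sum over the cover of locally supported terms, and the cosheaf-type gluing of Remark \ref{remark:sheaf_fail} would force any section vanishing on all $U_\alpha$ to vanish globally. A non-linear layer, or one not factoring through inclusion, lets the output depend on products of coordinates from different points. I will pick the deviations so that such a cross term, of the form $\prod_j y_j$, survives to the output, giving $g\neq 0$ at a point where all coordinates are nonzero.

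The main obstacle is this last step. For arbitrary layers $\psi_i$ there is no a priori guarantee that a nonzero perturbation of the input survives to $DL^m$, since a layer could collapse it. I would first treat the case of a single offending layer $\psi_i$, choosing $\mu$ so that the input to that layer differs by a multi-point product term, and then argue that a non-injective non-linear layer admits some perturbation supported away from every coordinate subspace. Where needed, I would fall back on the existential freedom in choosing $\mu$, with the product function $f$ of Proposition \ref{prop:sheaf_fail} as the witness of non-injectivity.
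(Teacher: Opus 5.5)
Your opening reduction is the same as the paper's: a suitable pair $\mu,\nu$ amounts to a nonzero difference $DL^m_\mu-DL^m_\nu$ in the kernel of $C^0(i_{A,l}^{op},\R^k)(X)\to\prod_\alpha C^0(i_{A,l}^{op},\R^k)(U_\alpha)$. Equivalently, the difference vanishes on each coordinate subspace $\R^{\sum_{x_i\in U_\alpha}l_i}\times\{0\}$, and Proposition~\ref{prop:sheaf_fail} shows this kernel is nonzero. The construction you build on top of this fails at the agreement step. A deviation $\mu_i\colon\R^{k_0}\to\R^{k_0}$ sees only the block at $x_i$, so a nonzero $h_i$ cannot vanish whenever some other block is zero. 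For the same reason, a product of coordinates at different points cannot be placed inside a deviation. On $\R^{\sum_{x_i\in U_\alpha}l_i}\times\{0\}$ the tuples $id+\mu$ and $id+\nu$ agree only if $\mu_i=\nu_i$ on all of $\R^{k_0}$ for every $x_i\in U_\alpha$. Since the $U_\alpha$ cover $X$, this forces $\mu=\nu$. Moreover, the layers $\psi_n$ are maps between spaces of sections, not maps on input vectors, so Definition~\ref{def:ml} does not give you that ``identical inputs on a subspace produce identical outputs there''. Any agreement on the $U_\alpha$ must come from the network collapsing differences on the coordinate subspaces, not from the deviations.

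The distinctness step, which you rightly flag, is a genuine missing idea, and the hypothesis alone cannot supply it. Your linear-case remark only shows that non-linearity is necessary: a linear difference vanishing on subspaces that span the whole space is zero, since $\text{Hom}(i_{A,l}^{op},\R^k)$ is a sheaf. It does not show that a non-linear layer creates cross terms, and a non-linear layer can be injective. For example, let each layer post-compose with the coordinatewise map $t\mapsto t+t^3$, and let the last layer merely assemble the blocks. Then $DL^m_\nu(v)=G\bigl((v_i+\nu_i(v_i))_{i}\bigr)$ with $G$ injective. Agreement on every $U_\alpha$ then forces $\mu_i=\nu_i$ for all $i$, hence $DL^m_\mu=DL^m_\nu$. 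What is missing is an assumption guaranteeing that the set of outputs $\{DL^m_\nu\}_\nu$ contains two sections whose difference is a nonzero element of that kernel. The product function witnesses non-injectivity of the restriction map on all of $C^0(i_{A,l}^{op},\R^k)(X)$, not on the image of the network. The paper's own proof stops at exactly the same point. It combines $DL^m_\nu\notin\text{Hom}(i_{A,l}^{op},\R^k)(X)$ with the non-injectivity from Proposition~\ref{prop:sheaf_fail} and never addresses this bridge, so following it will not close your gap.
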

\begin{proof}
The condition that there exists a layer $\psi_i$ which is non-linear or does not factor through the inclusion map implies that there exists a deviation $\nu$ such that \begin{equation}
    DL_\nu^m \in C^0(i_{A,l}^{op},\R^k)(X) \setminus \text{Hom}(i_{A,l}^{op}, \R^k)(X).
\end{equation}
Here we use the fact that elements in $\text{Hom}(i_{A,l}^{op}, \R^k)(X)$ are linear maps between $\R$-vector spaces. Choose an open cover $\{U_{\alpha_i}\}_{\alpha_i \in \mathcal{A}_i}$ used from the $i$-th layer of the discrete deep learning algorithm. Proposition \ref{prop:sheaf_fail} demonstrates that the morphism
\begin{equation}
    C^0(i_{A,l}^{op}, \R^k)(X) \to C^0(i_{A,l}, \R^k)(X) \to \prod_\alpha C^0(i_{A,l}, \R^k)(U_\alpha)
\end{equation}
is not injective, whereas the presheaf $\text{Hom}(i_{A,l}^{op}, \R^k)(X)$ satisfies the sheaf axioms.
\end{proof}
\begin{remark}
As shown in Remark \ref{remark:sheaf_fail}, the discrete deep learning algorithms from Theorem \ref{theorem:first_limitation} fails the locality condition (\ref{eqn:fail_condition}) but satisfies the gluing condition (\ref{eqn:succeed_condition}) given a finite collection of open cover $\{U_\alpha\}_\alpha$ of $X$. Figures \ref{fig:first_limitation_fail} and \ref{fig:first_limitation_succeed} illustrates how such discrete deep learning algorithms fail and satisfy the sheaf axioms given a finite collection of open sets. 
\end{remark}

\begin{figure}
                \centering
                \includegraphics[width=90mm]{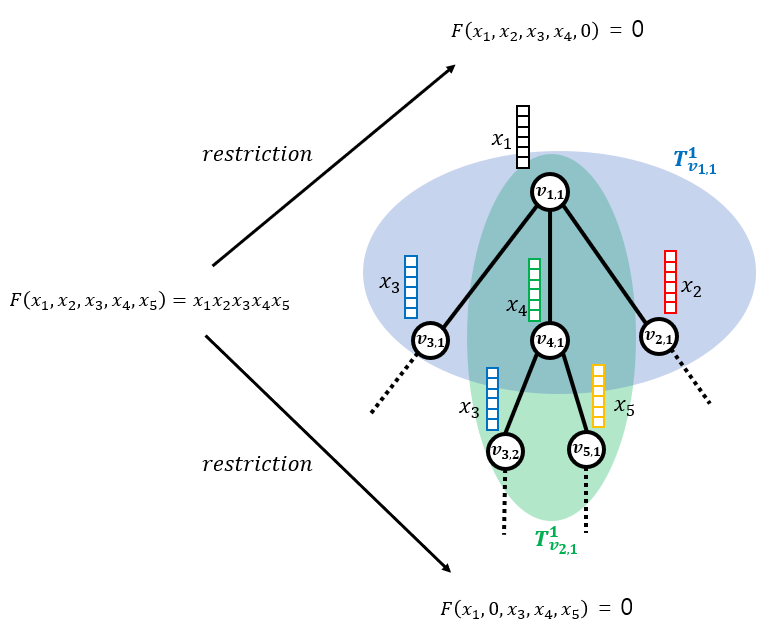}
                \caption{Presheaf: Does not satisfy the locality condition}
                \label{fig:first_limitation_fail}
\end{figure}
\begin{figure}
                \centering
                \includegraphics[width=135mm]{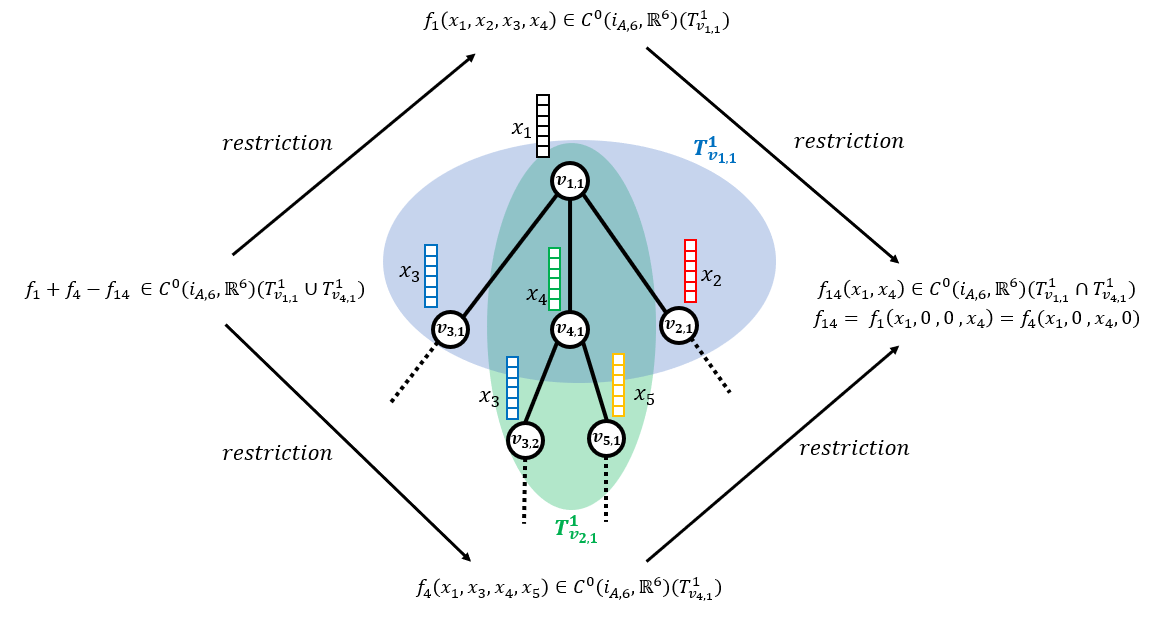}
                \caption{Presheaf: Satisfies the gluing condition}
                \label{fig:first_limitation_succeed}
\end{figure}

\begin{theorem} [Adversarial Attack]\label{theorem:second_limitation}
    Let $DL^m$ be a discrete deep learning algorithm over $X$ composed of $m$ layers. Suppose there exists $1 \leq j \leq m$ such that the $j$-th layer $\psi_j$ satisfies axioms (2) and (3) of the neighborhood aggregating axiom and factors through the inclusion map. Then for any $\delta > 0$, there exist infinitely many products of local sections
    \begin{equation}
        g := (g_1, g_2, \cdots, g_{|\mathcal{A}_j|}) \in \prod_{\alpha_j \in \mathcal{A}_j} C^0(i_{A,l}^{op}, \R^{k_{j+1}})(U_{\alpha_j}^j)
    \end{equation}
    such that for any $f \in \prod_{\alpha_0 \in \mathcal{A}_0} C^0(i_{A,l}^{op}, \mathbb{R}^{k_0})(U_{\alpha_0}^0)$,
    \begin{equation}
        (\psi_{j+1} \circ \cdots \circ \psi_1) \circ (f) =\left( F_{\alpha_{i+1}} \circ \sum_{\substack{\alpha_j \in \mathcal{A}_j }} i_{U_{\alpha_j}, U_{\alpha_{j+1}}} \right)_{\alpha_{j+1} \in \mathcal{A}_{j+1}} \circ (g)
    \end{equation}
    whereas for any vector $v \in \R^{k_0 \times N}$ and $p > 0$,
    \begin{equation}
        \left| \left( \left(\prod_{\alpha_i \in \mathcal{A}_i} \varphi_{\alpha_i} \circ \psi_{j} \circ \cdots \circ \psi_1 \right) \circ (f) \right)(v) - g(v) \right|_p > \delta
    \end{equation}
\end{theorem}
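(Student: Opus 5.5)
The idea is to use the failure of injectivity, in the cosheaf direction, of the summation map that appears in the factorization of $\psi_j$. By Definition \ref{definition:fact_inc}, $\psi_j = \bigl( F_{\alpha_{j+1}} \circ \sum_{\alpha_j} i_{U_{\alpha_j},U_{\alpha_{j+1}}} \bigr)_{\alpha_{j+1}} \circ \prod_{\alpha_j}\varphi_{\alpha_j}$. Fix an input $f$ and set $h := \bigl(\prod_{\alpha_j}\varphi_{\alpha_j}\bigr)\circ \psi_{j-1}\circ\cdots\circ\psi_1 (f)$, a tuple of local sections over the $U_{\alpha_j}^j$. Any $g = h + k$ with $k$ in the kernel of the summation map
\begin{equation*}
\Sigma := \Bigl(\sum_{\alpha_j} i_{U_{\alpha_j},U_{\alpha_{j+1}}}\Bigr)_{\alpha_{j+1}} : \prod_{\alpha_j} C^0(i_{A,l},\R^{k_{j+1}})(U_{\alpha_j}^j) \to \prod_{\alpha_{j+1}} C^0(i_{A,l},\R^{k_{j+1}})(U_{\alpha_{j+1}}^{j+1})
\end{equation*}
produces the same output after applying the $F_{\alpha_{j+1}}$. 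So the first displayed equality holds automatically, and the task reduces to building infinitely many $k \in \ker \Sigma$ with $|k(v)|_p > \delta$ at the relevant evaluation vectors $v$.

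To build such $k$, I would use axioms (2) and (3). By the remark following Definition \ref{def:neighborhood_aggregate}, some $\alpha_{j+1}$ has $U_{\alpha_{j+1}}^{j+1}$ equal to a union of at least two sets $U_{\beta}^j, U_{\gamma}^j$ with $\beta \neq \gamma$. The plan is to take a continuous function $c$ on the common data coordinates, i.e.\ an element of $C^0(i_{A,l},\R^{k_{j+1}})(U_\beta^j\cap U_\gamma^j)$; in the degenerate case, where the intersection contains no $x_i$, $c$ is a constant section. Then put $k := i_{U_\beta\cap U_\gamma, U_\beta}(c) - i_{U_\beta\cap U_\gamma,U_\gamma}(c)$. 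This is the image of the first map in the cosheaf sequence (\ref{eq:cosheaf_axiom}), and composing with the inclusions into $U_{\alpha_{j+1}}^{j+1}$ cancels, so $k \in \ker\Sigma$, as in the three-set computation of Remark \ref{remark:sheaf_fail}. Choosing $c \equiv t\cdot(1,\dots,1)$ with $t$ ranging over the infinitely many reals with $|t| > \delta$ gives infinitely many distinct $g = h+k$, and at every $v$ we get $|g(v)-h(v)|_p = |k(v)|_p \ge |t| > \delta$.

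The main obstacle is that the cancellation must hold simultaneously for every $\alpha_{j+1}$ whose defining union contains $U_\beta^j$ or $U_\gamma^j$, not just for the chosen one. If $U_\beta^j$ contributes to some $U_{\alpha'}^{j+1}$ that $U_\gamma^j$ does not, then $k$ fails to vanish there. I would first try to pick $\beta,\gamma$ so that they always appear together. If that fails, I would instead take $k$ to be an alternating sum of constants $t_{\alpha_j}$ satisfying the linear system "sum over each $\widetilde{\mathcal{A}}_{\alpha_{j+1},j}$ equals $0$". That system has $\#\mathcal{A}_{j+1}$ equations in $\#\mathcal{A}_j$ unknowns, and by strictness (axiom (2)) $\#\mathcal{A}_j > \#\mathcal{A}_{j+1}$, so it has a nonzero solution line. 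Scaling that solution yields infinitely many $k$ with norm exceeding $\delta$, and this dimension count is the step I expect to make the argument rigorous.
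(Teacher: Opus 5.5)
Your proposal is correct, and the fallback step is essentially the paper's proof: the paper perturbs each $\varphi_{\alpha_j}\circ f_{\alpha_j}$ by a constant vector $m_{\alpha_j}\in\R^{k_{j+1}}$ subject to $\sum_{U_{\alpha_j}\subset U_{\alpha_{j+1}}} m_{\alpha_j}=0$ for every $\alpha_{j+1}$, uses $\#\mathcal{A}_j>\#\mathcal{A}_{j+1}$ to get a positive-dimensional solution space, and scales the solution so that the $\ell_p$ distance $\bigl(\sum_{\alpha_j}|m_{\alpha_j}|^p\bigr)^{1/p}$ exceeds $\delta$. Your preliminary construction via a section on $U_\beta^j\cap U_\gamma^j$ is therefore unnecessary; you can go straight to the constant-vector linear system, which already handles the simultaneous cancellation over all $\alpha_{j+1}$ that you correctly identified as the crux.
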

\begin{proof}
We observe that the morphism
\begin{equation}
     \left( \sum_{\alpha_j \in \mathcal{A}_j} i_{U_{\alpha_j}, U_{\alpha_{j+1}}} \right)_{\alpha_{j+1} \in \mathcal{A}_{j+1}}: \prod_{\alpha_j \in \mathcal{A}_j} C^0(i_{A,l}^{op}, \R^{k_{j+1}})(U_{\alpha_j}) \to \prod_{\alpha_{j+1} \in \mathcal{A}_{j+1}}C^0(i_{A,l}^{op}, \R^{k_{j+1}})(U_{\alpha_{j+1}})
\end{equation}
from (\ref{equation:presheaf_map}) is not injective. For each open neighborhood $U_{\alpha_j}$, let $f_{\alpha_j} \in C^0(i_{A,l}^{op}, \R^{k_{j}})(U_{\alpha_j})$ be a local section over $U_{\alpha_j}$. Given a vector $v_{\alpha_j} \in \R^{k_{j+1}}$, we denote by $\varphi_{\alpha_j} \circ f_{\alpha_j} + v_{\alpha_j}$ the function
\begin{equation}
    (\varphi_{\alpha_j} \circ f_{\alpha_j} + v_{\alpha_j}) (y_1, y_2, \cdots, y_{k_{j+1}}) = \varphi_{\alpha_j} (f_{\alpha_j}(y_1, y_2, \cdots, y_{k_{j+1}})) + v_{\alpha_j},
\end{equation}
where $\varphi_{\alpha_j}$ is the inclusion map for the index $\alpha_j \in \mathcal{A}_j$ from Definition \ref{definition:fact_inc}.
For each $\alpha_{j+1}$, there exists any set of vectors $\{m_\alpha\}_\alpha \subset \R^{k_{j+1}}$ such that 
\begin{equation} \label{equation:adv_attack_constraint}
    \sum_{\substack{\alpha_j \in \mathcal{A}_j \\ U_{\alpha_j} \subset U_{\alpha_{j+1}}}} m_{\alpha_j} = 0.
\end{equation}
Hence the following equation holds:
\begin{equation}
    \sum_{\alpha_j \in \mathcal{A}_j} i_{U_{\alpha_j}, U_{\alpha_{j+1}}} \circ (\varphi_{\alpha_j} \circ f_{\alpha_j} + m_{\alpha_j}) = \sum_{\alpha_j \in \mathcal{A}_j} i_{U_{\alpha_j}, U_{\alpha_{j+1}}} \circ \varphi_{\alpha_j} \circ f_{\alpha_j}.
\end{equation}
We hence obtain that
\begin{align}
\begin{split}
    \psi_{j+1} \circ \left( \prod_{\alpha_j \in \mathcal{A}_j} f_{\alpha_j} \right) &= \left( F_{\alpha_{j+1}} \circ \sum_{\alpha_j \in \mathcal{A}_j} i_{U_{\alpha_j}, U_{\alpha_{j+1}}} \circ \prod_{\alpha_j \in \mathcal{A}_j} \varphi_{\alpha_j} \right)_{\alpha_{j+1} \in \mathcal{A}_{j+1}} \circ \left( \prod_{\alpha_j \in \mathcal{A}_j} f_{\alpha_j} \right) \\ 
    &= \left( F_{\alpha_{j+1}} \circ \sum_{\alpha_j \in \mathcal{A}_j} i_{U_{\alpha_j}, U_{\alpha_{j+1}}} \right)_{\alpha_{j+1} \in \mathcal{A}_{j+1}} \circ \left( \prod_{\alpha_j \in \mathcal{A}_j} \left( \varphi_{\alpha_j} \circ f_{\alpha_j} + m_{\alpha_j} \right) \right)
\end{split}
\end{align}
Because $\# \mathcal{A}_j > \# \mathcal{A}_{j+1}$, the subspace of vectors $\{m_{\alpha_j}\}_{\alpha_j \in \mathcal{A}_j}$ that satisfies (\ref{equation:adv_attack_constraint}) for every $\alpha_{j+1} \in \mathcal{A}_{j+1}$ has positive dimension. For such vectors $\{m_{\alpha_j}\}_{\alpha_j \in \mathcal{A}_j}$, the following equation holds for any vector $v \in \mathbb{R}^{k_0 \times N}$ and $p > 0$.
\begin{equation}
    \left| \prod_{\alpha_j \in \mathcal{A}_j} (\varphi_{\alpha_j} \circ f_{\alpha_j})(v) - \prod_{\alpha_j \in \mathcal{A}_j}  \left( \varphi_{\alpha_j} \circ f_{\alpha_j} + m_{\alpha_j} \right) (v) \right|_p = \left( \sum_{\alpha_j \in \mathcal{A}_j} |m_{\alpha_j}|^p \right)^{\frac{1}{p}}.
\end{equation}
The statement of the theorem follows by inductively defining $\prod_{\alpha_j \in \mathcal{A}_j}f_{\alpha_j}$ with respect to $j$, setting $g = \prod_{\alpha_j \in \mathcal{A}_j}(\varphi_{\alpha_j} \circ f_{\alpha_j} + m_{\alpha_j})$, and choosing vectors $\{m_{\alpha_j}\}_{\alpha_j \in \mathcal{A}_j}$ whose $\ell_p$ norm is greater than $\delta$.
\end{proof}

We now prove that under certain constraints on the final layer of the discrete deep learning algorithm, the state-of-the-art performances in analyzing all collections of data may not be achievable from a predetermined discrete deep learning architecture.
\begin{theorem}[Dataset Dependency]\label{theorem:third_limitation}
Let $DL^m$ be a discrete deep learning algorithm over $X$ composed of $m$ layers such that the last layer $\psi_m$ is a neighborhood aggregating layer that factors through inclusion, i.e.
\begin{equation}
    \psi_m = F \circ \left( \sum_{\alpha_{m} \in \mathcal{A}_m} i_{U_{\alpha_m},X} \circ \prod_{\alpha_m \in \mathcal{A}_m} \varphi_{\alpha_m} \right)
\end{equation}
Suppose further that the predetermined continuous function $F: \mathbb{R}^k \to \mathbb{R}^k$ satisfies either one of the following conditions:
\begin{enumerate}
    \item $F$ is not surjective.
    \item $F$ is not open.
    \item $F$ is open and bijective.
\end{enumerate}
Then there exists a function $f \in C^0(i_{A,l}, \R^k)(X)$ such that for every deviations $\nu$,
\begin{equation}
    f \neq DL_\nu^m.
\end{equation}
\end{theorem}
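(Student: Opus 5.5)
The plan is to exploit the fact that every output of $DL^m$ factors through the fixed map $F$. Concretely, $DL_\nu^m = F\circ h_\nu$ with
\[
h_\nu := \sum_{\alpha_m \in \mathcal{A}_m} i_{U_{\alpha_m},X}\bigl(\varphi_{\alpha_m}(f_{m-1,\alpha_m})\bigr).
\]
Here $h_\nu$ is a sum of functions, each depending only on the coordinates of the points $x_i \in U_{\alpha_m}^m$. So the image of $DL^m$ lies in the set $\{F\circ h : h \in \mathcal{S}\}$, where $\mathcal{S}$ is the image of the summation map (\ref{equation:copresheaf_map}). The goal is then to produce a continuous $f:\R^{\sum l_i}\to\R^k$ outside this set. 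The argument splits according to the three hypotheses on $F$.

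\textbf{Case (1), $F$ not surjective.} Pick $w\in\R^k\setminus F(\R^k)$ and let $f\equiv w$ be the constant function. Then every $DL_\nu^m$ takes values in $F(\R^k)$, so $f\neq DL_\nu^m$ for every $\nu$. This case is immediate.

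\textbf{Case (2), $F$ not open.} Choose an open set $W$ with $F(W)$ not open, a point $w_0\in F(W)$ that is not interior to $F(W)$, and a sequence $w_n\to w_0$ with $w_n\notin F(W)$. I would then build a continuous $f$ whose values near some input point run through the $w_n$, arranged so that any $h$ with $F\circ h=f$ would have to keep $h$ inside $W$ by continuity. This yields a contradiction. I expect this step to be the main obstacle: without properness or injectivity of $F$, nothing forces a preimage path to stay in $W$. A cleaner fallback is to reduce to the structure of $\mathcal{S}$ as in the third case, treating non-openness only as a way to rule out $F$ being a homeomorphism onto its image.

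\textbf{Case (3), $F$ open and bijective.} Here $F$ is a homeomorphism, so $F\circ h = f$ is equivalent to $h=F^{-1}\circ f$. It therefore suffices to find $f$ with $F^{-1}\circ f\notin\mathcal{S}$. By the Locality axiom (1) there is a point $x_j$ lying in no $U_{\alpha_m}^m$, so every element of $\mathcal{S}$ is independent of the $x_j$-coordinates. Take $f := F\circ g$, where $g$ is the product function from the proof of Proposition \ref{prop:sheaf_fail}. This $g$ genuinely depends on the coordinates of $x_j$, so $g\notin\mathcal{S}$ and hence $f\neq DL_\nu^m$ for all $\nu$.

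The same locality observation also gives a uniform route for Case (2) whenever $F$ is injective: $F\circ h$ is then independent of the $x_j$-coordinates, and the choice $f = F\circ g$ (or even the product function itself) again works. I would therefore present Case (3) via non-surjectivity of (\ref{equation:copresheaf_map}), and handle Case (2) by this locality argument together with the $w_n$ construction where $F$ fails to be injective.
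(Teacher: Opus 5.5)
Your Cases (1) and (3) are correct and coincide with the paper's proof. For (1), a constant value outside $F(\R^k)$ is never attained. For (3), an open bijection $F$ is a homeomorphism, so everything reduces to the non-surjectivity of $\sum_{\alpha_m} i_{U_{\alpha_m},X}$ from Proposition~\ref{prop:sheaf_fail}, and your witness $F\circ g$ makes this explicit.

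The gap is Case (2) when $F$ is not injective. Your $w_n$ construction cannot work as a local argument, because a lift $h$ may take its value at the critical input point anywhere in $F^{-1}(w_0)$, not necessarily in $W$. A concrete case: take $k=1$ and $F(x)=x^3-3x$. This $F$ is not open at $x=-1$ (a local maximum with value $2$). But $F(2)=2$ and $F'(2)=9>0$, so any $f$ taking values near $2$ lifts locally through the branch at $x=2$. Forcing values $w_n\notin F(W)$ therefore produces no contradiction.

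The missing observation is already in your last paragraph, but you attached a hypothesis it does not need. If every $h\in\mathcal S$ is independent of the coordinates at $x_j$, then $F\circ h$ is independent of them for \emph{every} map $F$, injective or not. So $f:=g$ (your product function), or simply $y\mapsto(y_{j,1},0,\dots,0)$ with $y_{j,1}$ a coordinate at $x_j$, is never equal to $DL^m_\nu$. This settles all three cases at once. It also shows the conditions on $F$ are superfluous once a point $x_j$ outside every $U^m_{\alpha_m}$ is available.

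You should state that you apply Locality to the family $\{U^m_{\alpha_m}\}$ feeding $\psi_m$. In the paper's indexing the last layer lands in $\{X\}$, and the paper itself relies on Proposition~\ref{prop:sheaf_fail} here instead.

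If you would rather follow the paper's route for (2), the claim is that post-composition with a non-open $F$ is not surjective on all of $C^0(i_{A,l},\R^k)(X)$. The justification must then be global:
\begin{enumerate}
\item Assume $\sum_i l_i\ge k$. A lift of $y\mapsto(y_1,\dots,y_k)$ restricts to a continuous section $s$ of $F$.
\item Since $s$ is injective, $s(\R^k)$ is open by invariance of domain.
\item $s(\R^k)$ is also closed: if $s(w_n)\to p$, then $w_n=F(s(w_n))\to F(p)$, hence $p=s(F(p))$.
\item So $s$ is a homeomorphism of $\R^k$ and $F=s^{-1}$ is open, a contradiction.
\end{enumerate}
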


\begin{proof}
Suppose the last layer $\psi_m$ of the discrete deep learning algorithm factors through inclusion, as shown in Definition \ref{definition:fact_inc}. Observe that any function
\begin{equation}
    F: \mathbb{R}^k \to \mathbb{R}^k
\end{equation}
induces a function of sets
\begin{align} \label{equation:ind_func}
\begin{split}
    F:  C^0(i_{A,l}, \mathbb{R}^k)(X) &\to C^0(i_{A,l}, \mathbb{R}^k)(X) \\
    g &\mapsto F \circ g
\end{split}
\end{align}
If $F$ is not surjective or not open, then the induced function from (\ref{equation:ind_func}) is not surjective because the set-theoretical right inverse of $F$ either does not exist or is not continuous. If $F$ is open and bijective, then $F$ is a homeomorphism. Observe that (\ref{equation:ind_func}) is a bijective function. But because $\psi_m$ factors through inclusion, $\psi_m$ is not surjective. Indeed, from the proof of Proposition \ref{prop:sheaf_fail}, the morphism
\begin{equation}
    \left( \sum_{\alpha_m \in \mathcal{A}_m} i_{U_{\alpha_m}, X} \right): \oplus_{\alpha} C^0(i_{A,l}^{op}, \R^k)(U_\alpha) \to C^0(i_{A,l}^{op}, \R^k)(X)
\end{equation}
from (\ref{equation:presheaf_map}) is not surjective.
\end{proof}

\begin{remark}
We note that any discrete deep learning algorithm whose last layer is a fully connected layer with non-linear activation functions satisfy either one of the conditions provided above. Indeed, any function $f:\mathbb{R} \to \mathbb{R}$ which is open and surjective is in fact a homeomorphism. Such result, however, is not necessarily the case for any continuous functions $f: \mathbb{R}^k \to \mathbb{R}^k$ for any $k \geq 3$. In fact, there always exists a surjective continuous open function $F: \mathbb{R}^m \to \mathbb{R}^n$ for any $n \geq m \geq 3$ which is not a homeomorphism if $n > m$. This seemingly surprising fact is a result of Whitehead's theorem \cite{Ha02} and John Walsh's results on the existence of surjective open continuous maps between manifolds whose induced morphism on their fundamental groups are surjective \cite{Wa75}, see for instance the proof provided by Moishe Kohan on \cite{ko19}.

The technical condition shows that as long as $F: \mathbb{R}^k \to \mathbb{R}^k$ is not a continuous open surjection, then the discrete deep learning algorithm whose last layer factors through inclusion cannot achieve state-of-the-art performance in analyzing all collections of data. If, however, $F: \mathbb{R}^k \to \mathbb{R}^k$ is an open surjection which is not injective, then the function of sets $F: C^0(i_{A,l}^{op}, \mathbb{R}^k)(X) \to C^0(i_{A,l}^{op}, \mathbb{R}^k)(X)$ is surjective but not injective, because the right inverse of $F$ is continuous. If it is the case that
\begin{equation}
    F^{-1} \left( C^0(i_{A,l}^{op}, \mathbb{R}^k)(X) \right) = \sum_{\alpha_m \in \mathcal{A}_m} i_{U_{\alpha_m}, X} \left( C^0( i_{A,l}^{op}, \mathbb{R}^k)(U_{\alpha_m}) \right)
\end{equation}
then the function of sets $F: C^0(i_{A,l}^{op}, \mathbb{R}^k)(X) \to C^0(i_{A,l}^{op}, \mathbb{R}^k)(X)$ is surjective, providing a counterexample to Theorem \ref{theorem:third_limitation}. 
\end{remark}
\begin{remark}
Similar to Remark \ref{remark:sheaf_fail}, the neighborhood aggregating discrete deep learning algorithms from Theorem \ref{theorem:second_limitation} fails the surjectivity condition (\ref{eqn:fail_condition}) but satisfies the gluing condition (\ref{eqn:succeed_condition}) given a finite collection of open cover $\{U_\alpha\}_\alpha$ of $X$. Figures \ref{fig:second_limitation_fail} and \ref{fig:second_limitation_succeed} illustrates how such neighborhood aggregating algorithms fail and satisfy the sheaf axioms given a finite collection of open sets. 
\end{remark}
\begin{figure}
    \centering
    \includegraphics[width=90mm]{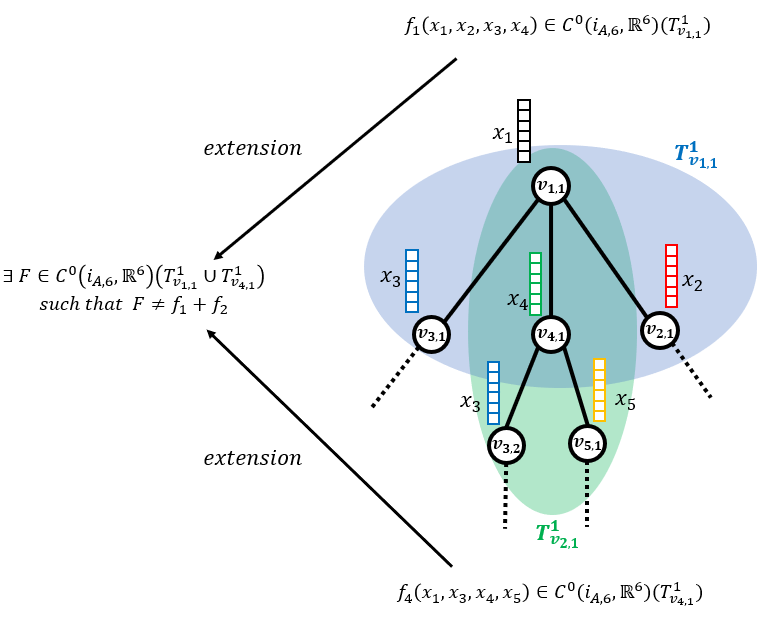}
    \caption{Copresheaf: Does not satisfy the surjectivity condition}
    \label{fig:second_limitation_fail}
\end{figure}
\begin{figure}
    \centering
    \includegraphics[width=135mm]{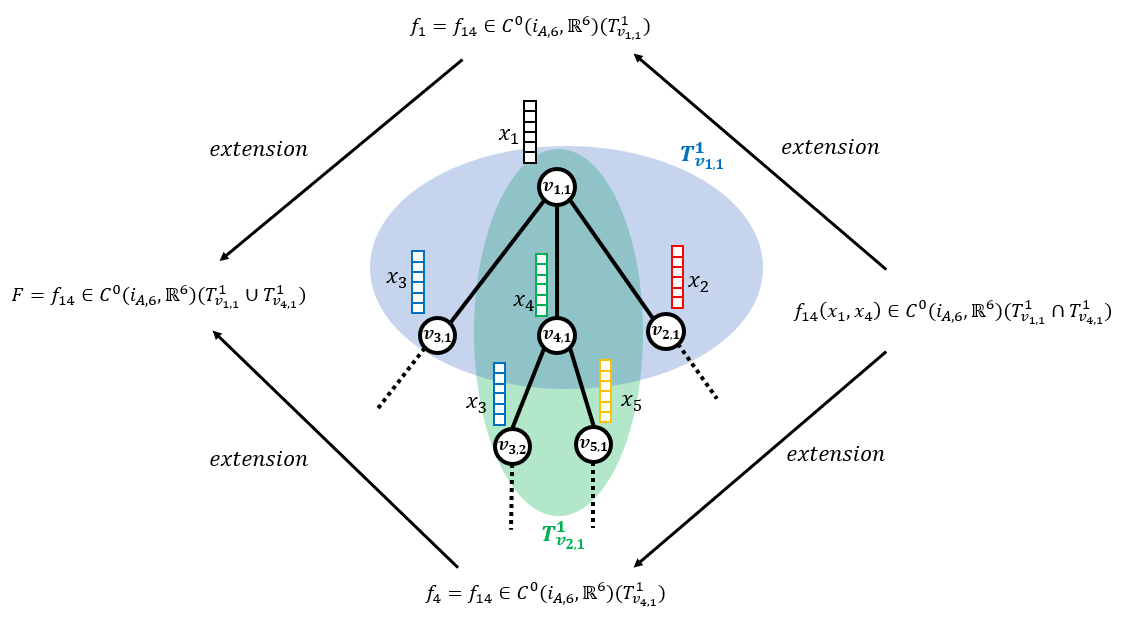}
    \caption{Copresheaf: Satisfies the gluing condition}
    \label{fig:second_limitation_succeed}
\end{figure}

\subsection{Cohomological Properties}
The aforementioned sections demonstrate that the architectural limitations suggested in Theorems \ref{theorem:first_limitation}, \ref{theorem:second_limitation}, and \ref{theorem:third_limitation} originate from the property that the presheaf $C^0(i_{A,l},\R^k)$ (and the cosheaf) fails to be a sheaf (and the cosheaf). Nevertheless, there is a canonical method - the sheafification of a presheaf - which constructs a sheaf associated to a presheaf. This construction would then allow neighborhood aggregating deep learning algorithms to avoid Theorem \ref{theorem:first_limitation}, whose proof relied on the fact that the presheaf of our interest is not a sheaf. 
\begin{definition}
Let $\mathcal{F}$ be a presheaf of abelian groups over $X$. The stalk of the sheaf at a point $x \in X$ is the direct limit of the groups $\mathcal{F}(U)$ for all open sets $U$ containing $x$.
\begin{equation}
    \mathcal{F}_x := \varinjlim_{x \in U} \mathcal{F}(U)
\end{equation}
\end{definition}

\begin{definition}[\cite{Ha77}, Proposition-Definition 2.1.2]
Let $\mathcal{F}$ be a presheaf over $X$. Define a contravariant functor $\mathcal{F}^+$ from the category of open subsets of $X$ to the category of abelian groups as follows.
\begin{equation}
    \mathcal{F}^+(U) := \{s: U \to \bigsqcup_{x \in U} \mathcal{F}_x \text{ such that } (*)\}
\end{equation}
where the conditions $(*)$ are:
\begin{enumerate}
    \item For each $x \in U$, $s(x) \in \mathcal{F}_x$ 
    \item For each $x \in U$, there exists an open neighborhood $V$ containing $x$ and a section $t \in \mathcal{F}(V)$ such that for all $y \in V$, $t_y = s_y$.
\end{enumerate}
Note that there exists a natural inclusion map $\theta: \mathcal{F} \to \mathcal{F}^+$ for any presheaf $\mathcal{F}$.
\end{definition}

\begin{remark}
By sheafifying the presheaf $C^0(i_{A,l}, \R^k)$, the collection of discrete deep learning techniques over a given locally compact Hausdorff topological space $X$ is a subset of the global section of the sheaf $C^{0,+}(i_{A,l}(X),\R^k)$.
\end{remark}

Now that one has constructed a sheaf $C^{0,+}(i_{A,l},\R^k)$ associated to the presheaf $C^0(i_{A,l},\R^k)$ it is a natural question to ask what cohomological data the constructed sheaf harbors. Cohomology groups of sheaves are of great importance in excavating underlying topological properties of a given space $X$. For example, the usual simplicial homology / cohomology groups of a topological space $X$ provides an easily computable mathematical machinery to encapsulate certain geometric invariants of $X$, such as the first cohomology group of a graph $G$, which contains all possible cyclic subgraphs of $G$. Unfortunately, the sheaf $C^{0,+}(i_{A,l},\R^k)$ does not harbor non-trivial higher cohomological data, as what one would expect to occur for simplicial cohomology groups. 

\begin{definition}[Flasque Sheaves and Cosheaves]
Let $\mathcal{F}$ be a sheaf of abelian groups over $X$. We say that $\mathcal{F}$ is flasque if for a pair of open subsets
\begin{equation}
    U \subset V \subset X
\end{equation}
the restriction map
\begin{equation}
    r: \mathcal{F}(V) \to \mathcal{F}(U)
\end{equation}
is surjective as group homomorphisms. Likewise, we say that a cosheaf $\widetilde{\mathcal{F}}$ is flasque if for a pair of open subsets
\begin{equation}
    U \subset V \subset X
\end{equation}
the inclusion map
\begin{equation}
    i: \widetilde{\mathcal{F}}(U) \to \widetilde{\mathcal{F}}(V)
\end{equation}
is a monomorphism.
\end{definition}

\begin{proposition}
If $\mathcal{F}$ is a flasque sheaf over $X$, then $\mathcal{F}$ is acyclic, i.e. $H^i(X,\mathcal{F}) = 0$ for all $i > 0$.
\end{proposition}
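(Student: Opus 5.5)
The plan is the standard Hartshorne argument (Chapter III, Proposition 2.5). Sheaf cohomology $H^i(X,-)$ is the right derived functor of $\Gamma(X,-)$ on sheaves of abelian groups, and I would use only three facts about it. First, $H^i(X,\mathcal{I})=0$ for $i>0$ whenever $\mathcal{I}$ is injective. Second, every injective sheaf is flasque. Third, any short exact sequence of sheaves yields a long exact cohomology sequence. The proof then proceeds by embedding $\mathcal{F}$ into a flasque injective sheaf and using induction on $i$ (dimension shifting).

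Concretely, choose an injective sheaf $\mathcal{I}$ with a monomorphism $\mathcal{F}\hookrightarrow\mathcal{I}$; one exists because sheaves of abelian groups have enough injectives. Let $\mathcal{G}:=\mathcal{I}/\mathcal{F}$, the sheaf quotient, so that $0\to\mathcal{F}\to\mathcal{I}\to\mathcal{G}\to 0$ is exact. The first lemma to establish is that $\mathcal{I}$ is flasque. For open $U\subset X$, let $j:U\to X$ be the inclusion and let $\mathbb{Z}_U=j_!\mathbb{Z}$ be the extension by zero. Then $\mathcal{I}(U)=\mathrm{Hom}(\mathbb{Z}_U,\mathcal{I})$. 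For $U\subset V$ the map $\mathbb{Z}_U\hookrightarrow\mathbb{Z}_V$ is injective, and injectivity of $\mathcal{I}$ makes $\mathrm{Hom}(\mathbb{Z}_V,\mathcal{I})\to\mathrm{Hom}(\mathbb{Z}_U,\mathcal{I})$ surjective. This surjection is exactly the restriction $\mathcal{I}(V)\to\mathcal{I}(U)$.

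The second lemma is the one I expect to be the main obstacle: if $0\to\mathcal{F}\to\mathcal{I}\to\mathcal{G}\to0$ is exact with $\mathcal{F}$ flasque, then $\mathcal{I}(U)\to\mathcal{G}(U)$ is surjective for every open $U$, and $\mathcal{G}$ is flasque whenever $\mathcal{I}$ is. The proof uses Zorn's lemma. Fix $s\in\mathcal{G}(U)$. Because the quotient map is surjective on stalks, $s$ lifts locally. Consider pairs $(W,t)$ with $W\subset U$ open and $t\in\mathcal{I}(W)$ mapping to $s|_W$, ordered by extension. Chains have upper bounds by the sheaf gluing axiom, so a maximal pair $(W,t)$ exists. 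Suppose $W\neq U$. Pick $x\in U\setminus W$ and a local lift $t'$ on a neighborhood $W'$ of $x$. On $W\cap W'$ the difference $t-t'$ lies in $\mathcal{F}(W\cap W')$. Since $\mathcal{F}$ is flasque, this difference extends to some $u\in\mathcal{F}(W')$. Replacing $t'$ by $t'+u$ makes the two sections agree on $W\cap W'$, so they glue to a lift on $W\cup W'$, contradicting maximality. For flasqueness of $\mathcal{G}$, compare the two squares for $U\subset V$: the composite $\mathcal{I}(V)\to\mathcal{I}(U)\to\mathcal{G}(U)$ is surjective, and it factors through $\mathcal{G}(V)\to\mathcal{G}(U)$, so the restriction on $\mathcal{G}$ is surjective.

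Finally, take the long exact sequence. Since $H^i(X,\mathcal{I})=0$ for $i>0$, we get $H^1(X,\mathcal{F})=\mathrm{coker}(\Gamma(X,\mathcal{I})\to\Gamma(X,\mathcal{G}))$, which vanishes by the second lemma. For $i\ge 1$ we also get $H^{i+1}(X,\mathcal{F})\cong H^i(X,\mathcal{G})$. Here $\mathcal{G}$ is flasque, so induction on $i$, with the statement quantified over all flasque sheaves, yields $H^i(X,\mathcal{F})=0$ for all $i>0$.
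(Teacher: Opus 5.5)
Your proposal is correct and follows the same route as the paper, whose proof simply cites Hartshorne's Proposition III.2.5. That argument embeds $\mathcal{F}$ into an injective sheaf, which is flasque via $\mathcal{I}(U)=\mathrm{Hom}(\mathbb{Z}_U,\mathcal{I})$; uses the Zorn's lemma argument of Exercise II.1.16 to get surjectivity on sections and flasqueness of the quotient; and then dimension-shifts through the long exact sequence, all of which you carry out correctly.
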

\begin{proof}
See Proposition 3.2.5 of \cite{Ha77}.
\end{proof}

\begin{theorem} \label{theorem:fourth_limitation}
Given any topological space $X$, the sheaves $C^{0,+}(i_{A,l},\R^k)$ and $\text{Hom}(i_{A,l},\R^k)$ over $X$ are flasque.
\end{theorem}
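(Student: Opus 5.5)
The plan is to check flasqueness directly from the definition: for open sets $U \subset V \subset X$, the restriction map $r:\mathcal{F}(V) \to \mathcal{F}(U)$ must be surjective. We treat the sheaf $\text{Hom}(i_{A,l},\R^k)$ first and then reduce the sheafified presheaf $C^{0,+}(i_{A,l},\R^k)$ to the same mechanism. Here $i_{A,l}$ is regarded as a direct sum of skyscraper cosheaves, so that it gives a presheaf (sheaf) of duals.

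\textbf{The sheaf $\text{Hom}(i_{A,l},\R^k)$.} For $U \subset V$ the structure map $i_{U,V}: i_{A,l}(U) = \R^{\sum_{x_i\in U} l_i} \to \R^{\sum_{x_i \in V} l_i} = i_{A,l}(V)$ is the coordinate inclusion $y \mapsto (y,0)$ from the proof of Proposition \ref{prop:sheaf_fail}. The restriction $\text{Hom}(i_{A,l}(V),\R^k) \to \text{Hom}(i_{A,l}(U),\R^k)$ is precomposition $g \mapsto g \circ i_{U,V}$. This map has an explicit section: given a linear $h: \R^{\sum_{x_i\in U} l_i} \to \R^k$, set $\tilde h := h \circ \pi$, where $\pi$ is the coordinate projection onto the $U$-coordinates. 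Then $\pi \circ i_{U,V} = id$ gives $\tilde h \circ i_{U,V} = h$. So restriction is surjective, and this holds for any space $X$ because only the finite point set $A$ matters. Equivalently, $i_{U,V}$ is a split injection of finite-dimensional vector spaces and $\text{Hom}(-,\R^k)$ is exact, as in the proof of Proposition 5.1.10 quoted from \cite{Br97}.

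\textbf{The presheaf $C^0(i_{A,l},\R^k)$ and its sheafification.} At the presheaf level the same trick works: a continuous $h$ on $i_{A,l}(U)$ extends to the continuous function $h\circ\pi$ on $i_{A,l}(V)$. So $C^0(i_{A,l},\R^k)$ itself has surjective restrictions. To pass to $C^{0,+}$, compute stalks. Every point $x$ has a neighborhood basis of open sets $W$ with $W\cap A \subseteq \{x\}$, using the Hausdorff property and finiteness of $A$. Hence $C^0(i_{A,l},\R^k)_x = C^0(\R^{l_x},\R^k)$ if $x \in A$ and $C^0(\R^0,\R^k)\cong\R^k$ otherwise.

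Next describe a section $s \in C^{0,+}(U)$. By condition $(*)$ it is locally represented by sections $t\in C^0(i_{A,l},\R^k)(W)$ whose germs agree with $s$. Away from $A$, these germs form a locally constant $\R^k$-valued function. At each $x_i\in U\cap A$ there is a germ in $C^0(\R^{l_i},\R^k)$. Given $s$ on $U$, we extend it to $V$ as follows. Choose pairwise disjoint small open neighborhoods $W_i$ of the points $x_i \in V\cap A$. Near $U$, keep the germs of $s$. On $V\setminus \overline{U}$ and at the new points $x_i \in (V\setminus U)\cap A$, assign zero germs.

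\textbf{Main obstacle.} The hard step is making this extension satisfy condition $(*)$ at boundary points of $U$ inside $V$, since the locally constant part of $s$ need not extend continuously across $\partial U$. Two approaches are available. The first is to exploit that $C^{0,+}$ is built from germs with no continuity required between distinct points beyond $(*)$, and to check whether local constancy can fail at boundary points. The second is to use the standard fact (\cite{Ha77}, Exercise II.1.16) that a sheaf whose sections on small enough neighborhoods come from a flasque presheaf with split restrictions is flasque. Concretely, we would show that $C^{0,+}(i_{A,l},\R^k)$ splits as a skyscraper-type part $\bigoplus_{x_i} (x_i)_* C^0(\R^{l_i},\R^k)$, which is flasque because skyscraper sheaves are flasque, plus a sheaf of locally constant $\R^k$-valued functions on $X\setminus A$. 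This second summand is where flasqueness is most delicate, and where we expect to have to rely on whatever interpretation of the sheafification makes restriction surjective.
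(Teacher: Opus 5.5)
Your treatment of $\text{Hom}(i_{A,l},\R^k)$ is correct and matches the paper's argument. The paper deduces everything from the skyscraper cosheaf being flasque, i.e.\ from the extension maps $i_{U,V}$ being (split) injections, so precomposition $g\mapsto g\circ i_{U,V}$ is onto; your explicit preimage $h\circ\pi$ is exactly this. The same trick shows that the \emph{unsheafified} presheaf $C^0(i_{A,l},\R^k)$ has surjective restrictions, and that is all the paper's one-line proof actually covers. The genuine gap is the passage to $C^{0,+}(i_{A,l},\R^k)$, which you leave open. The ``standard fact'' you propose for it is not Hartshorne's Exercise II.1.16: part (a) there only says a constant sheaf on an \emph{irreducible} space is flasque. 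Your fact is also false in general: the constant presheaf has identity restriction maps, yet its sheafification is not flasque.

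The obstacle you isolated cannot be overcome, because the $C^{0,+}$ half of the statement is false. On open sets missing $A$ the presheaf is $C^0(\R^0,\R^k)\cong\R^k$ with identity restrictions. Near $x_i$, a germ $g\in C^0(\R^{l_i},\R^k)$ forces the value $g(0)$ at all nearby points. So the locally constant part lives on all of $X$, not just on $X\setminus A$, and your proposed splitting must be corrected accordingly. The presheaf splits compatibly with restrictions via $f\mapsto (f(0),\,f-f(0))$, and this sheafifies to
\[
C^{0,+}(i_{A,l},\R^k)\;\cong\;\underline{\R^k}_X\;\oplus\;\bigoplus_{i=1}^N (x_i)_*\{g\in C^0(\R^{l_i},\R^k)\,:\,g(0)=0\}.
\]
The skyscraper summand is flasque. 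But a direct summand of a flasque sheaf is flasque, and the constant sheaf $\underline{\R^k}_X$ is not flasque on any connected Hausdorff space with more than one point.

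Concretely, take $X=\R$, $A=\{0\}$, $V=(1,4)$ and $U=(1,2)\cup(3,4)$. The restriction $C^{0,+}(V)\cong\R^k\to C^{0,+}(U)\cong\R^k\times\R^k$ is the diagonal map, which is not surjective. For $X=S^1$ one gets $H^1(S^1,C^{0,+}(i_{A,l},\R^k))\cong H^1(S^1,\underline{\R^k})\cong\R^k\neq 0$, so the sheaf is not even acyclic. With the paper's definitions, only the $\text{Hom}$ part (and flasqueness of the presheaf before sheafification) can be proved. Neither your proposal nor the paper's proof can establish the claim for $C^{0,+}$.
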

\begin{proof}
This follows from the fact that the skyscraper cosheaf $S_{x}$ over any topological space $X$ and a point $x \in X$ is flasque.
\end{proof}

In other words, there are no obstructions in gluing local sections to construct global sections over a topological space $X$, regardless of the underlying geometric or topological properties of $X$. Thus, we obtain that discrete deep learning techniques with neighborhood aggregating layers cannot effectively distinguish geometric or topological invariants of the underlying spaces $X$'s. We have therefore verified the last empirically verified property of discrete deep learning techniques: Topological data analysis or persistent homological techniques often enhances performances of such algorithms in classifying or analyzing geometric properties of image or graph data sets, because they incorporate cohomological data which cannot be derived from higher cohomological datas of the sheaves $C^{0,+}(i_{A,l},\R^k)$ and $\text{Hom}(i_{A,l},\R^k)$.

\section{Examples of Neighborhood Aggregating Discrete Deep Learning Techniques} \label{section:deep_learning_examples}

In this section, we demonstrate that a wide class of convolutional neural networks, message passing neural networks, and recurrent neural networks are discrete deep learning techniques comprised of neighborhood aggregating layers. We thereby provide a theoretical explanation on empirical results on the constructions and limitations of these discrete deep learning algorithms by using Theorems \ref{theorem:first_limitation}, \ref{theorem:second_limitation}, and \ref{theorem:third_limitation}.

\subsection{Convolutional Neural Networks}

We use the identification of discrete deep learning techniques as elements of global sections of $C^{0}(i_{A,l},\R^k)$ to redefine convolutional neural networks.

\begin{example}[Convolutional Neural Networks] \label{example:cnn}
Let $I^2 := [0,1] \times [0,1]$ be a grid endowed with the usual Euclidean metric. Fix a positive integer $N > 0$. One can define a collection of compact subsets 
\begin{equation}
    \left\{K_{m,n} := \left[ \frac{m}{N}, \frac{m+1}{N} \right] \times \left[ \frac{n}{N}, \frac{n+1}{N} \right] \right\}_{m,n=0}^{N-1}
\end{equation}
For each compact subset $K_{m,n}$, pick a point $x_{m,n}$ inside its interior. Let $A$ be the discrete set comprised of such points, endowed with the natural inclusion map $i:A \to I^2$.
\begin{equation}
    A := \{x_{m,n}\}_{m,n=0}^{N-1}.
\end{equation}
The vanilla convolutional neural network $CNN$ without sensor noises can be identified as the image of the products of the identity functions.
\begin{equation}
    CNN \in C^{0}(i_{A,l}, \R^k)(I^2)
\end{equation}
The associated sequence of finite open covers of $I^2$ is given by
\begin{equation}
    \left\{ \{U_{\alpha_0}^0\}_{\alpha_0=1}^N, \{U_{\alpha_1}^1\}_{\alpha_1 \in \mathcal{A}_1}, \cdots, \{U_{\alpha_m}^m\}_{\alpha_m \in \mathcal{A}_m}, I^2 \right\}
\end{equation}
whose elements satisfy the neighborhood aggregating axioms from Definition \ref{def:neighborhood_aggregate}. For instance, one may take $U_{\alpha_i}^i$ to be an open subset of $I^2$ which contains a collection of compact subsets
\begin{equation}
    U_{\alpha_i}^i \supset \cup_{m=i_1}^{i_2} \cup_{n=i_3}^{i_4} K_{m,n}
\end{equation}
for some integers $0 \leq i_1 < i_2 \leq N-1$ and $0 \leq i_3 < i_4 \leq N-1$.

The convolutional layer is constructed as a matrix multiplication
\begin{align}
    \begin{split}
        \prod_{\alpha_i \in \mathcal{A}_i} C^0(i_{A,l}^{op},\R^3)(U_{\alpha_i}^i) &\to \prod_{\alpha_{i+1} \in \mathcal{A}_{i+1}} C^0(i_{A,l}^{op},\R^{3})(U_{\alpha_{i+1}}^{i+1}) \\
        (f)_{\alpha_{i}} &\mapsto W \times (f)_{\alpha_i}
    \end{split}
\end{align}
where $W$ is a trainable $\R^{|\mathcal{A}_i| \times |\mathcal{A}_{i+1}|}$ matrix whose rows are all equal to a predetermined convolutional filter up to coordinate-wise permutations.

The pooling layer is constructed as 
\begin{align}
    \begin{split}
        \prod_{\alpha_i \in \mathcal{A}_i} C^0(i_{A,l}^{op},\R^3)(U_{\alpha_i}^i) &\to \prod_{\alpha_{i+1} \in \mathcal{A}_{i+1}} C^0(i_{A,l}^{op},\R^{3})(U_{\alpha_{i+1}}^{i+1}) \\
        (f)_{\alpha_{i}} &\mapsto F((f)_{\alpha_i})
    \end{split}
\end{align}
for some predetermined (possibly non-linear) pooling function $F: \R^{3|\mathcal{A}_i|} \to \R^{3 |\mathcal{A}_{i+1}|}$.

The fully connected layer is given by
\begin{align}
    \begin{split}
        \prod_{\alpha_i \in \mathcal{A}_i} C^0(i_{A,l}^{op},\R^3)(U_{\alpha_i}^i) \to C^0(\mathbb{R}^{3N^2}, \mathbb{R}^{3|\mathcal{A}_m|}) &\to C^0(i_{A,l}^{op},\R^{k})(I^2) \\
        (f)_{\alpha_{i}} \to (f)_{\alpha_i} &\mapsto F((f)_{\alpha_i})
    \end{split}
\end{align}
where $F: \R^{3|\mathcal{A}_m|} \to \R^k$ is a composition of trainable affine transformations and non-linear activation functions.
\end{example}

\begin{remark}
We note that vanilla convolutional neural network CNN is a discrete deep learning technique comprised of neighborhood aggregating layers. In addition, any convolutional layer and fully connected layer factor through inclusion (see Definition \ref{definition:fact_inc}), whereas maximum pooling layer does not.
\end{remark}

The example above immediately shows that convolutional neural networks (or image classifiers) comprised of convolutional layers, maximum pooling layers, and fully connected layers are subject to non-unique gluings of local features, adversarial attacks, and dependencies with respect to image datasets.
\begin{corollary}
Any convolutional neural network (or image data analyzing discrete deep learning technique) comprised of convolutional layers, maximum pooling layers, and fully connected layers satisfy Theorems \ref{theorem:first_limitation}, \ref{theorem:second_limitation}, and \ref{theorem:third_limitation}.
\end{corollary}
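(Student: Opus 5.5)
The plan is to treat the Corollary as a verification problem. The convolutional architecture of Example \ref{example:cnn} is already written as a discrete deep learning technique in the sense of Definition \ref{def:ml}. So what remains is to check, layer by layer, the hypotheses that Theorems \ref{theorem:first_limitation}, \ref{theorem:second_limitation} and \ref{theorem:third_limitation} place on individual layers, and then invoke each theorem. Throughout I will use the nested family of open sets from Example \ref{example:cnn}: each $U^{i+1}_{\alpha_{i+1}}$ is a union of at least two of the $U^i_{\alpha_i}$, since it covers a block of the $K_{m,n}$ with $i_1<i_2$, $i_3<i_4$. The number of blocks strictly decreases from layer to layer, and the blocks are distinct. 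This gives axioms (2), (3) and (4) of Definition \ref{def:neighborhood_aggregate} directly. For axiom (1), in every intermediate layer I will pick the blocks so that some $x_{m,n}$, say a corner point, is omitted; this is possible because the $U^i$ need not cover $I^2$.

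\textbf{Theorem \ref{theorem:first_limitation}.} It requires a single layer that is non-linear or does not factor through inclusion. The maximum pooling layer does both. It is non-linear, since $\max(a,b)+\max(-a,-b)\neq 0$ in general. By the Remark after Example \ref{example:cnn} it also does not factor through inclusion: a coordinatewise max over a block is not of the form $F\circ\sum\varphi_{\alpha}$ with $F$ continuous and fixed, because that form would force symmetric sum-decomposability in the sense of \cite{ZK17, WFE19}, and I will simply cite the Remark here. The fully connected layer with a non-linear activation also qualifies. With this in hand, Theorem \ref{theorem:first_limitation} applies for every open cover missing some $x_\alpha$.

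\textbf{Theorem \ref{theorem:second_limitation}.} It requires a layer $\psi_j$ satisfying axioms (2) and (3) that factors through inclusion. Any convolutional layer $f\mapsto W f$ works. Take $\varphi_{\alpha_j}$ to be multiplication by the filter entry attached to $\alpha_j$, take $F_{\alpha_{j+1}}=\mathrm{id}$ (or the activation that follows), and let the sum over $\alpha_j$ with $U_{\alpha_j}\subset U_{\alpha_{j+1}}$ realise the corresponding row of $W$. Axioms (2) and (3) follow from the block structure above.

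\textbf{Theorem \ref{theorem:third_limitation}.} It requires the last layer to factor through inclusion with an outer function $F$ satisfying one of conditions (1)--(3). The last layer is fully connected, so it is an affine map followed by an activation $\sigma$ applied coordinatewise, and I write it as $F\circ\sum_{\alpha_m} i_{U_{\alpha_m},I^2}\circ\varphi_{\alpha_m}$ with $F=\sigma^{[k]}$. I then split by cases on $\sigma$. If $\sigma$ is not surjective (ReLU, sigmoid, $\tanh$), condition (1) holds. If $\sigma$ is a surjective open map $\mathbb{R}\to\mathbb{R}$, it is a homeomorphism, as noted in the Remark following Theorem \ref{theorem:third_limitation}; then $\sigma^{[k]}$ is an open bijection and condition (3) holds. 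Otherwise $\sigma$ is not open and condition (2) holds, since a product of non-open maps is non-open.

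The main obstacle is the pooling layer: I must justify cleanly that max pooling does not factor through inclusion, or at least is non-linear. If the factorisation argument proves delicate, I will rely only on non-linearity, which is enough for Theorem \ref{theorem:first_limitation}. A second point to check is that a network containing pooling still has some convolutional layer meeting the axioms needed for Theorem \ref{theorem:second_limitation}, and that the last layer is a fully connected layer rather than a pooling layer. If the final pooled features feed into the fully connected layer, the last layer is indeed fully connected.
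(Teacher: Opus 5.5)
Your route is the paper's own. There the Corollary rests on the Remark after Example \ref{example:cnn} (convolutional and fully connected layers factor through inclusion, max pooling does not) and on the Remark after Theorem \ref{theorem:third_limitation}, and you verify the same hypotheses layer by layer. Your step for Theorem \ref{theorem:first_limitation}, via $\max(a,b)+\max(-a,-b)=|a-b|$, and your case split on $\sigma$ are fine.

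\textbf{The convolution does not factor as you write it.} Definition \ref{definition:fact_inc} attaches one map $\varphi_{\alpha_j}$ to each block $\alpha_j$ and uses it in every output block $U_{\alpha_{j+1}}\supset U_{\alpha_j}$. Only $F_{\alpha_{j+1}}$ depends on the output, and it acts after the sum. In an ordinary convolution the windows overlap, so one input block enters different output blocks with different filter entries. Hence ``the filter entry attached to $\alpha_j$'' is not a function of $\alpha_j$.
\begin{itemize}
\item With $F_{\alpha_{j+1}}=\mathrm{id}$, as you propose, two windows shifted by one step force all filter entries to be equal.
\item Even a separate linear rescaling $F_{\alpha_{j+1}}$ per output block only accommodates (in one dimension, for nonzero entries) filters whose entries form a geometric progression.
\item Falling back on the Remark does not help, since it gives no argument for this.
\end{itemize}
So for Theorem \ref{theorem:second_limitation} use a layer in which each $U_{\alpha_j}$ lies in exactly one $U_{\alpha_{j+1}}$. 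One option is a convolution whose stride equals its kernel size; a `same'-padded stride-one convolution would violate axiom (2) anyway. The other is the final fully connected layer, whose only target is $X$. There $\varphi_{\alpha_m}$ really can be the column block of the weight matrix belonging to $\alpha_m$, provided that layer satisfies axioms (2) and (3).

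\textbf{Your device for axiom (1) works against you.} Theorems \ref{theorem:first_limitation} and \ref{theorem:second_limitation} never use axiom (1). Theorem \ref{theorem:third_limitation}, however, requires the last layer $\psi_m$ itself to be neighborhood aggregating, and your restatement drops this hypothesis; Definition \ref{definition:fact_inc} also presupposes axiom (3) for $\psi_m$.
\begin{itemize}
\item If a corner point is missing from every $U^m_{\alpha_m}$, then $\bigcup_{\alpha_m}U^m_{\alpha_m}\neq X$, so axiom (3) fails for $\psi_m$.
\item The choice also alters the receptive fields of the network you are modelling, so you are no longer treating an arbitrary CNN.
\item Axiom (1) can never hold literally for a layer whose target collection is $\{X\}$, so you must say how you read that hypothesis.
\end{itemize}
When $F$ is an open bijection, the proof of Theorem \ref{theorem:third_limitation} only uses that $\sum_{\alpha_m} i_{U^m_{\alpha_m},X}$ is not surjective. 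This holds as soon as no single $U^m_{\alpha_m}$ contains all of $A$: a sum of functions each independent of some coordinate cannot equal the product of all coordinates. Drop the corner-point choice and check that condition instead.

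Finally, state explicitly that your case split assumes a head made of one affine map followed by a coordinatewise activation. Example \ref{example:cnn} allows a composite head, and then $F$ need not have the form $\sigma^{[k]}$. For $k\ge 2$ such an $F$ can be open, surjective and non-injective, in which case none of conditions (1)--(3) holds.
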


\begin{example}[Capsule Neural Networks]
Theorem \ref{theorem:first_limitation} verifies the motivation of capsule neural networks \cite{SFH17} that maximum pooling layers of conventional convolutional neural networks fail to observe how local components of images contribute to properties of global images. Instead of including pooling layers, the encoder of capsule neural networks are comprised of convolutional layers detecting geometric features (rather than RGB signals) of local components of images:
\begin{align}
    \begin{split}
        \prod_{\alpha_i \in \mathcal{A}_i} C^0(i_{A,l}^{op},\R^3)(U_{\alpha_i}^i) &\to \prod_{\alpha_{i+1} \in \mathcal{A}_{i+1}} C^0(i_{A,l}^{op},\R^{Ck})(U_{\alpha_{i+1}}^{i+1}) \to C^0(i_{A,l}^{op}, \R^{mk})(X)
    \end{split}
\end{align}
Here, the value $C$ corresponds to the number of capsules used to dissect geometric properties of local components of images, and the value $m$ corresponds to the number of classes the image dataset can be classified into. The layers provided above are trained from dynamic routings between $C$ primary capsules.
\end{example}

\begin{example}[Adversarial Attack]
Using the proposed framework, the effects of adversarial attack on the performance of convolutional neural networks can be easily analyzed \cite{GSS15}. Let $DL: \R^{3N^2} \to \R^k$ be the global section of $C^{0}(i_{A,l},\R^k)$ induced from the image of products of identity functions $(id, \cdots, id)$. Let $\widetilde{DL}: \R^{3N^2} \to \R^k$ be the global section $C^{0}(i_{A,l},\R^k)$ induced from the image of products of functions of form $(id + \nu_{m,n})_{m,n=1}^N$ where $\nu_{m,n}: \R^3 \to \R^3$ is a continuous function. The adversarial attack suggests that given any arbitrary input image $v \in \R^{3N^2}$, there exist constants $\epsilon, \delta > 0$ such that  
\begin{equation}
    |DL(v) - \widetilde{DL}(v)| > \epsilon
\end{equation}
even though $|\nu_{m,n}(x)| \leq \delta$ for every $x \in \R^3$ and $0 \leq m,n \leq N-1$.

Empirical evidences support that capsule neural networks are vulnerable to comparable forms of adversarial attacks imposed on classical convolutional neural networks \cite{MUUH19}. Because both architectures use neighborhood aggregating neural layers which factor through inclusion, Theorem \ref{theorem:second_limitation} implies that these techniques are subject to comparable adversarial attacks.
\end{example}

\begin{example} [Autoencoders]
The functorial formulation of autoencoders for image datasets can be defined as the following equation:
\begin{equation}
    \prod_{\alpha_0 = 1}^N C^0(i_{A,l}^{op}, \mathbb{R}^{3})(U^0_{\alpha_0}) \to C^0 (i_{A,l}^{op}, \mathbb{R}^k)(X) \to \prod_{\alpha_0 = 1}^N C^0(i_{A,l}^{op}, \mathbb{R}^{3})(U^0_{\alpha_0})
\end{equation}
where the first component of the composite function corresponds to compositions of layers used in convolutional neural networks, as shown in Definition \ref{example:cnn}, and the second component reconstructs the original image data by utilizing compositions of fully connected layers.
\end{example}

\subsection{Graph convolutional networks and Weisfeiler-Lehman kernels} \label{sec:GCN}

Before we formulate graph convolutional networks and Weisfeiler-Lehman kernels as global sections of a presheaf $C^{0}(i_{A,l},\R^k)$, we recall the theory of cell complexes and covering spaces, which can be utilized for analyzing the local topological properties of finite graphs. Readers who are interested in a rigorous treatment of this topic may refer to \cite{Ha02} or \cite{KV15}. 

We may consider a graph $G := (V,E)$ as a 1-dimensional CW complex, as constructed in Chapter 0 of \cite{Ha02}, where the nodes correspond to $0$-dimensional cells, and the edges correpond to $1$-dimensional cells. The open subsets of graphs consist of discrete sets of nodes, disjoint unions of open intervals defined over edges, and disjoint unions of open subsets rooted at a node $v$. Given a connected graph $G$, we denote by $\widetilde{G}$ its universal cover. We first construct a directed graph associated to an undirected graph without self-loops, as constructed in \cite{CPWC22}. 
\begin{definition}
\label{def:new_graph}
Given an undirected graph $G := (V,E)$ without self-loops, let $G' = (V,E')$ be a graph where every node has a self-loop of weight $1$. Let $G'' := (V,E'')$ be a directed graph where there exists a degree 2 projection map $p: G'' \to G'$ which is ramified at the set of nodes of $G'$. Denote by $\widetilde{G}''$ the universal covering space of $G''$.
\end{definition}
We now reformulate a well-known statement using sheaf theory that graph embedding frameworks over $G$ obtained from utilizing only the node labels are elements of the sections of $C^{0}(i_{A,l},\R^k)$ over $G$ or the universal cover of $G$, as carefully explored in \cite{KV15, XH19, B22, CPWC22}.
\begin{example} [Node-label based message passing neural networks]
Let $T$ be any graph embedding framework over a graph $G$ which is obtained from message passing neural networks utilizing only the node labels of $G$. Let $\widetilde{G}''$ be the universal covering space of $G''$, as constructed in Definition \ref{def:new_graph}. Let $U^k_x$ be the depth $k$ unfolding tree at the point $x \in \widetilde{G}''$. Fix a set of nodes $\widetilde{V}(G)$ consisting of lifts of nodes $v \in V(G)$ over $\widetilde{G}''$. Denote by $U^k \subset \widetilde{G}''$ the disjoint union of depth $k$ unfolding tree at each node in $\widetilde{V}(G)$.
\begin{equation}
    U^k := \bigsqcup_{v \in \widetilde{V}(G)} U^k_v
\end{equation}
Let $A_k subset \widetilde{G}''$ be the set of nodes given by
\begin{equation}
    A_k := U^{k} \cap V(\widetilde{G}'').
\end{equation}
Then there exists an integer $k_T > 0$ such that $T$ is the global section of the presheaf of continuous functions induced from direct sums of skyscraper cosheaves supported over $U^{k_T} \cap V(\widetilde{G}'')$.
\begin{equation}
    T \in C^{0} \left( i_{A_{k_T},l}^{op}, \mathbb{R}^{l'} \right)(\widetilde{G}'').
\end{equation}
\end{example}
We hence obtain that graph neural networks that are discrete deep learnign techniques with neighborhood aggregating layers possess limitations of the same kind to convolutional neural networks:
\begin{corollary}
Any graph neural network, embedding technique, or pooling methods utilizing only the node labels of a graph $G$ satisfy Theorems \ref{theorem:first_limitation}, \ref{theorem:second_limitation}, and \ref{theorem:third_limitation}.
\end{corollary}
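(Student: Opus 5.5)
The plan is to reduce the corollary to verifying that any node-label-based graph neural network, embedding technique, or pooling method is a discrete deep learning algorithm in the sense of Definition \ref{def:ml}. Concretely, this means realizing it on $\widetilde{G}''$ with layers satisfying the neighborhood aggregating axioms of Definition \ref{def:neighborhood_aggregate}. Once that is done, the hypotheses of Theorems \ref{theorem:first_limitation}, \ref{theorem:second_limitation}, and \ref{theorem:third_limitation} are checked one at a time, mirroring the treatment of convolutional neural networks in Example \ref{example:cnn}.

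First, I invoke the preceding example on node-label based message passing neural networks. It says that such a $T$ lies in $C^{0}(i_{A_{k_T},l}^{op},\R^{l'})(\widetilde{G}'')$ for some $k_T$. I take $\chi_X$ to be the finite set $A_{k_T}$ of lifted nodes and $X=\widetilde{G}''$.

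Next, I build the layer structure. For the $0$-th layer, I choose $U^0_{\alpha_0}$ to be small open stars around each lifted node; these separate points as required. For the $n$-th layer, I take $U^n_v$ to be an open neighborhood of the depth-$n$ unfolding tree $U^n_v$ at each $v\in\widetilde V(G)$. The key identity is $U^{n}_v=\bigcup_{w\in N(v)\cup\{v\}}U^{n-1}_w$, which holds because of the self-loops in $G'$; this gives Non-triviality. For Strictness and Distinctness, I would instead index the $n$-th layer by the subset of roots still needed to compute the final readout, so that the index set strictly shrinks, ending with the global readout over $X$. Locality follows because each finite unfolding tree misses some lift in $A_{k_T}$.

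With the layers in place, I check factorization. A message passing update has the form $h_v^{n}=F\bigl(\sum_{w} \varphi(h_w^{n-1})\bigr)$, which is exactly the factorization through inclusion of Definition \ref{definition:fact_inc}. Sum readouts factor the same way. Max, attention, and other pooling operations are non-linear and need not factor.

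The three theorems then apply as follows. Theorem \ref{theorem:first_limitation} holds because at least one layer carries a non-linear activation or a non-factoring pooling step. Theorem \ref{theorem:second_limitation} holds because the aggregation layers satisfy axioms (2) and (3) and factor through inclusion. Theorem \ref{theorem:third_limitation} holds because the final readout is a sum-aggregation followed by an activation $F$, and by the remark following that theorem, standard activations fall into case (1), (2), or (3).

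I expect the main obstacle to be reconciling the neighborhood aggregating axioms with message passing. Message passing naturally keeps the number of node representations constant across layers, which conflicts with Strictness, and neighboring unfolding trees may coincide after lifting, which conflicts with Distinctness. My first attempt is the pruning construction described above: at each depth retain only the roots whose trees are needed for the final readout, then discard duplicate trees. If that construction fails, the fallback is to treat each non-strict layer as a composite that is absorbed into the next strict aggregating layer, since the theorems only require the existence of one suitable layer.
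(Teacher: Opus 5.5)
Your reduction is the paper's. The paper offers nothing beyond the preceding example (node-label networks are global sections of $C^0(i^{op}_{A_{k_T},l},\R^{l'})(\widetilde{G}'')$) and the analogy with convolutional networks. However, two of the verifications you add on top do not hold as written.

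\textbf{Locality.} Locality in Definition \ref{def:neighborhood_aggregate} asks for a single $x_j\in\chi_X$ lying outside \emph{every} $U^n_{\alpha_n}$. Your argument that each finite unfolding tree misses some lift only gives a missed point for each $\alpha_n$ separately. It gets worse once Non-triviality is set up correctly. Your identity $U^n_v=\bigcup_w U^{n-1}_w$ holds only with $w$ ranging over the neighbours of $v$ in the tree $\widetilde{G}''$, not over the chosen lifts $\widetilde V(G)$. So layer $n$ must be indexed by all nodes within distance $k_T-n$ of $\widetilde V(G)$. This indexing is already strict, so your fallback is unnecessary, and it settles Distinctness, since distinct centres give distinct balls in the leafless tree $\widetilde{G}''$. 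But with this indexing the layer-$n$ sets cover all of $A_{k_T}=\chi_X$, so the stated axiom fails at every layer; for the readout into $X\supset\chi_X$ it can never hold. This affects only Theorem \ref{theorem:third_limitation}: Theorem \ref{theorem:second_limitation} needs only axioms (2) and (3), and Theorem \ref{theorem:first_limitation} needs none. The repair is to note that the proof of Theorem \ref{theorem:third_limitation} uses only two things: factorization through inclusion, and non-surjectivity of $\bigoplus_{\alpha_m}C^0(i^{op}_{A,l},\R^k)(U^m_{\alpha_m})\to C^0(i^{op}_{A,l},\R^k)(X)$. The latter holds as soon as each $U^m_{\alpha_m}$ misses some point of $\chi_X$, since one finite difference in every coordinate kills each summand but not the product of all coordinates. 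That per-set statement is exactly what you proved, and it is valid once $G$ has two nodes.

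\textbf{The word ``any''.} Your per-theorem checks describe a typical architecture; they do not follow from ``uses only node labels'', so they do not yield the corollary for every such method.
Theorem \ref{theorem:first_limitation} needs a non-linear or non-factoring layer. For an all-linear aggregation scheme its conclusion is actually false, and this is how the paper itself models Weisfeiler--Lehman, placing it in the sheaf $\text{Hom}(i^{op}_{A_{k_T},l},\R^l)$. In that case $DL_\nu(y)=\sum_i L_i(y_i+\nu_i(y_i))$ with each $L_i$ linear. Equal restrictions on the sets of an open cover force each $L_i\circ(\nu_i-\mu_i)$ to be constant, with the constants summing to zero, so $DL_\mu=DL_\nu$.
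Theorem \ref{theorem:third_limitation} needs the readout to factor through inclusion. You yourself concede this fails for max or attention readouts, i.e.\ for the ``pooling methods'' the corollary explicitly names.

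What your argument does establish is the corollary for networks satisfying these architectural hypotheses. That is effectively how the paper's lead-in sentence restricts it (``graph neural networks that are discrete deep learning techniques with neighborhood aggregating layers''). State these as hypotheses and verify them per network, rather than asserting them for every node-label method.
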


\begin{example}[WL Tests and GCNs]
It is a well-known fact that Weisfeiler-Lehman isomorphism tests can distinguish two non-isomorphic graphs $G, G'$ as long as their universal covers $\widetilde{G}, \widetilde{G'}$ are not isomorphic \cite{SS11}. Note that $k$ iterations of Weisfeiler-Lehman isomorphism tests are comprised of $k$ layers which factor through inclusion.

In fact, the representations obtained from Weisfeiler-Lehman isomorphism test are global sections of the sheaf $\text{Hom}(i_{A_{k_T},l}^{op}, \mathbb{R}^l)$, because the neighborhood aggregating layers used in the Weisfeiler-Lehman procedure are linear. The histogram of node labels obtained from the Weisfeiler-Lehman procedure is obtained from the set of local sections on $U^k_v$, the depth $k$ unfolding trees at $v \in V(\widetilde{G}'')$. Hence, Theorem \ref{theorem:first_limitation} suggests that the Weisfeiler-Lehman isomorphism test can distinguish two non-isomorphic graphs with non-isomorphic universal covers. On the other hand, the flasqueness of the sheaf $\text{Hom}(i_{A,l}^{op},\R^l)$ implies that Weisfeiler-Lehman isomorphism test fails in distinguishing two non-isomorphic graphs with isomorphic universal covers, as demonstrated in Figure \ref{fig:sheafification_fail} and carefully studied in \cite{KV15, B22, CPWC22}.

The representations obtained from graph convolutional networks or message passing neural networks \cite{KW17}, on the other hand, correspond to global sections of the presheaf $C^0 (i_{A_{k_T},l}^{op}, \mathbb{R}^{l'})$, thanks to the composition of non-linear activation functions. Nevertheless, similar to Weisfeiler-Lehman isomorphism tests, these representations are also obtained from the set of local sections on $U^k_v$, the depth $k$ unfolding trees at $v \in V(\widetilde{G}'')$. 
\end{example}

\begin{figure}
                \centering
                \includegraphics[width=135mm]{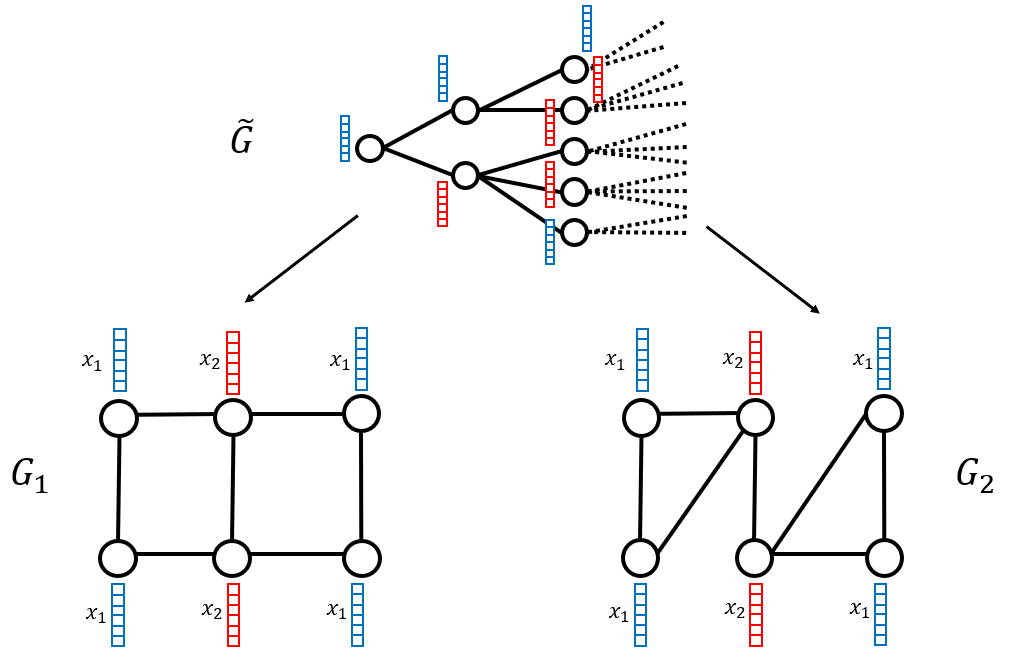}
                \caption{Sheafification: There are no obstructions imposed from topological characteristics of $X$ in obtaining global sections of $C^{0,+}(i_{A,l},\R^k)$. As such, non-trivial geometric differences among topological spaces cannot be effectively detected from neighborhood aggregating discrete deep learning techniques.}
                \label{fig:sheafification_fail}
\end{figure}

\begin{example}[Adversarial Attacks]
Similar to convolutional neural networks, empirical evidences suggest that graph neural networks also suffer from adversarial attacks arising from small graph perturbations or node feature perturbations \cite{ZG19}. This is not a surprising fact, because any graph neural network which includes a neighborhood aggregating layer factoring through inclusion can suffer from adversarial attacks, as suggested from Theorem \ref{theorem:second_limitation}.
\end{example}

\subsection{Recurrent Neural Networks} \label{sec:RNN}

One overarching insight obtainable from defining message passing neural networks as a global section of a presheaf is that the underlying topological space where the presheaf is defined is the universal cover of a graph $G$, considered as a 1-dimensional CW complex. This intuition can be generalized to devising a discrete deep learning technique which analyzes time series data over a given topological space $X$.

\begin{definition} [Time-Series data]
Given a topological space $X$, let $X_{S^1}$ be a principal $S^1$-bundle of $X$.
\begin{equation}
    S^1 \to X_{S^1} \to X
\end{equation}
The discrete deep learning technique $DL$ analyzing dynamic time series data over $X$ is a presheaf of continuous functions induced from skyscraper cosheaves defined over the universal cover of $X_{S^1}$, i.e. it is a function given as
\begin{equation}
    DL: \prod_{\alpha_0=1}^N C^0(i_{A,l}, \mathbb{R}^{k_0})(U_{\alpha_0}^0) \to C^0(i_{A,l}, \mathbb{R}^k)(\widetilde{X_{S^1}})
\end{equation}
comprised of compositions of layers
\begin{equation}
    \psi_n: \prod_{\alpha_{n-1} \in \mathcal{A}_{n-1}} C^0(i_{A,l}, \mathbb{R}^{k_{n-1}})(U_{\alpha_{n-1}}^{n-1}) \to \prod_{\alpha_n \in \mathcal{A}_n} C^0(i_{A,l}, \mathbb{R}^{k_n})(U_{\alpha_n}^n)
\end{equation}
with the associated sequence of collections of finitely many open subsets of $\widetilde{X_{S^1}}$.
\begin{equation}
    \left\{ \{U_{\alpha_0}^0\}_{\alpha_0=1}^N, \{U_{\alpha_1}^1\}_{\alpha_1 \in \mathcal{A}_1}, \cdots, \{U_{\alpha_m}^m\}_{\alpha_m \in \mathcal{A}_m}, \widetilde{X_{S^1}} \right\}
\end{equation}
\end{definition}
Namely, we associate the time variable $t$ to a fiber of the covering map $\pi: \widetilde{X_{S^1}} \to X_{S^1} \to X$ at a point $x \in X$ homeomorphic to $S^1$. 

\begin{example} [Recurrent Neural Network]
The real line $\mathbb{R}$ is the universal cover of the circle $S^1 \times \{x\}$, which can be considered as a principal $S^1$-bundle of a singleton space $\{x\}$. 

From this perspective, the discrete deep learning techniques specialized for analyzing time series data (such as recurrent neural networks $RNN$ or long short-term memory networks $LSTM$) is a global section of a presheaf of continuous functions induced from direct sums of skyscraper cosheaves over $\widetilde{S^1}$.
\begin{equation}
    DL: \prod_{\alpha_0=1}^N C^0(i_{A,l}, \mathbb{R}^{k_0})(U_{\alpha_0}^0) \to C^0(i_{A,l}, \mathbb{R}^k)(\widetilde{S^1})
\end{equation}
The set of points $\chi_\mathbb{R} := \{x_1, x_2, \cdots, x_N\}$ lie in the preimage of the base point $x \in S^1$ with respect to the covering map $\pi: \mathbb{R} \to S^1$. For recurrent neural networks $RNN$, the elements of the finite collection of open subsets $\{U_{\alpha_i}^i\}_{\alpha_i \in \mathcal{A}_i}$ consist of connected open subsets of $\mathbb{R}$ such that there exists an integer $1 \leq m \leq N$ and an index $\alpha_i \in \mathcal{A}_i$ such that
\begin{equation}
    U_{\alpha_i}^i \supset \{x_1, x_2, \cdots, x_m\}
\end{equation}
As for long short-term memory networks $LSTM$ \cite{HS97} or gated recurrent units $GRU$ \cite{CM14}, the elements of the finite collection of open subsets $\{U_{\alpha_i}^i\}_{\alpha_i \in \mathcal{A}_i}$ consist of connected open subsets of $\mathbb{R}$ such that there exists an integer $1 \leq m_1 < m_2 \leq N$ and an index $\alpha_i \in \mathcal{A}_i$ such that
\begin{equation}
    U_{\alpha_i}^i \supset \{x_{m_1}, x_{m_1+1}, \cdots, x_{m_2} \}
\end{equation}
represented by their memory cells.
\end{example}

\begin{example} [Positional Enconding in attention transformers]
The positional encoding used in the encoder layer of the attention-transformer implicitly uses the assumption that the time variable $t$ can be identified with the topological space $S^1$. Indeed, the position of an input data is defined in terms of trigonometric functions $f: \mathbb{R} \to S^1$.
\end{example}

Because vanilla RNNs, and LSTMs are neighborhood aggregating discrete deep learning techniques, we immediately obtain the following result.
\begin{corollary}
Any recurrent neural networks or long short term memory networks (LSTM) satisfy Theorems \ref{theorem:first_limitation}, \ref{theorem:second_limitation}, and \ref{theorem:third_limitation}.
\end{corollary}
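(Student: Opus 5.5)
The corollary is a direct consequence of Theorems \ref{theorem:first_limitation}, \ref{theorem:second_limitation}, and \ref{theorem:third_limitation}. To apply them, I would check that vanilla RNNs and LSTMs satisfy the hypotheses of each theorem. Concretely, I would exhibit such a network as a discrete deep learning algorithm in the sense of Definition \ref{def:ml}. The underlying space is $\widetilde{S^1}=\mathbb{R}$ and the sample points are $\chi_\mathbb{R}=\{x_1,\dots,x_N\}$, taken in the fibre over the base point, as in the Recurrent Neural Network example. For the layer structure I would unroll the recurrence in time. The $n$-th layer is the update producing the hidden state $h_n$ from $h_{n-1}$ and the input at $x_n$. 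Its associated open sets are $U^n$, a connected neighbourhood of $\{x_1,\dots,x_n\}$ (for RNNs), or of a window $\{x_{m_1},\dots,x_{m_2}\}$ encoded by the memory cell (for LSTM/GRU). The remaining singleton neighbourhoods of $x_{n+1},\dots,x_N$ are carried along.

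The first step is to verify the neighborhood aggregating axioms of Definition \ref{def:neighborhood_aggregate} for each unrolled layer, which I would do axiom by axiom. \emph{Locality} holds because $x_N\notin U^n$ for $n<N$. \emph{Strictness} holds because each step merges two open sets, namely the previous hidden-state set and the next singleton, into one, so $\#\mathcal{A}_{n}=\#\mathcal{A}_{n-1}-1$. \emph{Non-triviality} holds because $U^n=U^{n-1}\cup U^0_{n}$, a union over a proper subfamily. \emph{Distinctness} holds because the sets in each layer contain different sample points.

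The second step is to check the extra hypotheses of each theorem.
\begin{enumerate}
\item For Theorem \ref{theorem:first_limitation}, it suffices that some layer is non-linear. This holds because the RNN cell applies $\tanh$ (and the LSTM applies sigmoid gates), so the hypothesis is satisfied.
\item For Theorem \ref{theorem:second_limitation}, I need a layer that factors through inclusion in the sense of Definition \ref{definition:fact_inc}. The vanilla RNN step $h_n=\sigma(W h_{n-1}+U y_n+b)$ has exactly this form, with $\varphi$ the linear maps $W$ and $U$ applied on the two pieces, the sum given by the inclusions $i_{U^{n-1},U^n}$ and $i_{U^0_n,U^n}$, and $F=\sigma$.
\item For Theorem \ref{theorem:third_limitation}, the last layer is a fully connected readout with an activation. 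By the remark following that theorem, such an $F$ is non-surjective, non-open, or a homeomorphism, so one of the three conditions holds.
\end{enumerate}

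The main obstacle is the LSTM case of Theorem \ref{theorem:second_limitation}. There the gating terms multiply contributions, for instance $f_t\odot c_{t-1}$, so the cell update is not visibly a sum of separately transformed local sections followed by an activation. My first attempt would be to choose a particular layer of the unrolled network that does factor, namely the gate pre-activations $W[h_{t-1},y_t]+b$, which are sums of linear images of the two local pieces followed by sigmoid or $\tanh$. I would then treat the Hadamard products as belonging to a separate, non-factoring layer. Theorem \ref{theorem:second_limitation} only requires the existence of one factoring layer satisfying axioms (2) and (3), so this splitting should suffice. Similarly, for non-linearity in Theorem \ref{theorem:first_limitation}, any one non-linear layer is enough.
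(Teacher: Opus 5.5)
Your route is the paper's. Its only justification is the sentence that vanilla RNNs and LSTMs are neighborhood aggregating discrete deep learning techniques (with the open sets of the Recurrent Neural Network example), after which the three theorems are invoked. Your axiom-by-axiom check therefore makes explicit what the paper leaves implicit, but it contains a step that fails.

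\textbf{Locality.} Locality in Definition \ref{def:neighborhood_aggregate} asks for a sample point $x_j$ with $x_j\notin U^n_{\alpha_n}$ for \emph{every} $\alpha_n\in\mathcal{A}_n$, not just outside the hidden-state set. In your unrolling, layer $n<N$ is $\{U^n,U^0_{n+1},\dots,U^0_N\}$ and $x_N\in U^0_N$. So every sample point is covered at every layer, and Locality is false. This cannot be repaired inside your construction:
\begin{enumerate}
\item Deleting the carried-along singletons leaves one set per layer, which violates Strictness.
\item More fundamentally, Non-triviality makes each set of layer $n$ a union of sets of layer $n-1$. The point that Locality removes at layer $1$ therefore stays uncovered at every later layer. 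Its input can never enter through an inclusion such as $i_{U^0_n,U^n}$, which is exactly how your update step reads $y_n$.
\end{enumerate}
Other axiom checks also fail. Non-triviality fails at the last update $n=N$, where $U^N=U^{N-1}\cup U^0_N$ is a union over all of layer $N-1$. Strictness fails at $n=1$, where there is no hidden-state set to merge. It also fails for the Hadamard-product layers of your LSTM split, which preserve the number of sets.

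\textbf{What the corollary actually needs.} The corollary still follows once you drop the claim that every unrolled layer is neighborhood aggregating and check only what each theorem uses. Theorem \ref{theorem:first_limitation} needs only a non-linear layer. Theorem \ref{theorem:second_limitation} needs one factoring layer satisfying axioms (2) and (3), which your update step with $2\le n<N$ supplies when $N\ge 3$.

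Theorem \ref{theorem:third_limitation} is where your plan needs restructuring. A standalone readout from $\{U^N\}$ to $X$ maps one set to one set, so it is not neighborhood aggregating. Moreover, since $U^N$ contains all sample points, $i_{U^N,X}$ is a bijection. In case (3) (e.g.\ a leaky ReLU, which is a homeomorphism), the proof relies on the aggregation map being non-surjective, and that is not available here. So citing the remark that ``one of the three conditions holds'' is not enough.

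Instead, merge the final update with the readout, so that the last layer aggregates $\{U^{N-1},U^0_N\}$ into $X$. Each of these two sets misses a sample point, so the product function of Proposition \ref{prop:sheaf_fail} is not in the image of the aggregation map. Locality can never hold for a layer whose target is $X$. The hypothesis of Theorem \ref{theorem:third_limitation} therefore has to be read as this factored form over sets each missing a sample point, which is exactly what its proof uses.
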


\begin{example}
Empirical evidences suggest that adversarial attacks on recurrent neural networks and long short term memory networks can be achieved from small perturbations of input sequential data \cite{ZS20}, \cite{PD16}, which is precisely what the proof of Theorem \ref{theorem:second_limitation} suggests.
\end{example}

\section{Beyond Neighborhood Aggregating Discrete Deep Learning Techniques} \label{section:deep_learning_future}

In the previous sections, we observed how the fact that the presheaf $C^0(i_{A,l}, \mathbb{R}^k)$ is not a sheaf gave rise to limitations of discrete deep learning techniques. We end the paper with a formulation of deep learning techniques which do not necessarily fit in the class of discrete deep learning techniques with neighborhood aggregating layers, and propose future research directions on what novel deep learning algorithms may focus on.

\subsection{Non-neighborhood Aggregating Layers} 

As observed from Theorems \ref{theorem:second_limitation} and \ref{theorem:third_limitation}, the four neighborhood aggregating axioms make discrete deep learning techniques vulnerable to adversarial attacks, and prevents them from obtaining all possible vector representations of any arbitrarily given input datasets. Hence, it is natural to consider whether it is possible to devise a discrete deep learning algorithm whose layers do not satisfy the neighborhood aggregating axioms. Attention-transformers as constructed in Vaswani et al \cite{VS17} do not satisfy the neighborhood aggregating axioms.

\begin{example}[Attention-transformers]
Let $X := S^1 \times [0,1]$ be a cylinder. Fix a positive integer $N, d, w > 0$.
\begin{itemize}
    \item \textbf{Positional Encoding:} Denote by $A := \{x_{i,j}\}_{\substack{1 \leq i \leq N \\ 1 \leq j \leq d}}$ the set of points on $X$ whose locations are determined as
    \begin{equation}
        x_{i,j} = \begin{cases}
        \left( \sin\left( \frac{i}{10000^{\frac{2j}{d}}} \right), \frac{2j-1}{2d} \right) &\text{ if } i = 2k \\
        \left( \cos\left( \frac{i}{10000^{\frac{2j}{d}}} \right), \frac{2j-1}{2d} \right) &\text{ if } i = 2k+1
        \end{cases}
    \end{equation}
    \item \textbf{Skyscraper Cosheaf:} The pushforward cosheaf $i_{A,l}^{op}$ is given by the direct sum of skyscraper sheaves of real vector spaces of dimension $1$ supported at $x_{i,j} \in A$. Note that any set of $d$-dimensional $N$ vectors $\{v_1,\cdots,v_N\}$ induces a global section of $i_{A,l}^{op}$.
    \item \textbf{Open Cover:} Let $\{U_{\alpha_0}\}_{\alpha_0=1}^{Nd}$ be the set of finite open covers of $X$ such that for each $\alpha_0$ there exists a unique $1 \leq i \leq N$ and $1 \leq j \leq d$ such that $A \cap U_{\alpha_0} = \{x_{i,j}\}$. Let $\{U_{\alpha_1}\}_{\alpha_1 = 1}^N$ be the set of finite open covers of $X$ such that for each $\alpha_1$ there exists a unique $1 \leq i \leq N$ such that $A \cap U_{\alpha_1} = \{x_{i,j}\}_{j=1}^d$.
    \item \textbf{Attention-Transformer:} The attention transformer $TR$ \cite{VS17} can be identified as
    \begin{equation}
        TR: \prod_{\alpha_0=1}^{Nd} C^0(i_{A,l}^{op},\mathbb{R})(U_{\alpha_0}) \to C^0(i_{A,l}^{op},\mathbb{R}^{Nd})(X)
    \end{equation}
    comprised of compositions of encoder and decoder layers. Both are compositions of multi-head attention functions and feed forward networks.
    \item \textbf{Multi-head Attention:} Denote by $h$ the number of attention heads. Denote by $Q,K,V$ the querry, key, and the value matrix obtained from either the input data or the output of the encoder layer. Denote by $\{W_Q^i, W_K^i, W_V^i\}_{i=1}^h$ the collection of $N \times w$  weight matrices for all attention heads. Denote by $W_Z$ a $wh \times d$ weight matrix. A single multi-head attention layer is defined as the composition of feedforward networks and multihead attention functions.
    \begin{align}
    \begin{split}
        W_V: \prod_{\alpha_0}^{Nd} C^0(i_{A,l}^{op},\mathbb{R})(U_{\alpha_0}) &\to \prod_{\alpha_1=1}^N C^0(i_{A,l}^{op}, \mathbb{R}^{wh})(U_{\alpha_1}) \\
        V &\mapsto \left(VW_V^i\right)_{i=1}^h \\
        \text{MultiHead}: \prod_{\alpha_1=1}^N C^0(i_{A,l}^{op}, \mathbb{R}^{wh})(U_{\alpha_1}) &\to C^0(i_{A,l}^{op},\mathbb{R}^{Nwh})(X) \\
        (V_i)_{i=1}^h &\mapsto \left(\text{Softmax}\left( \frac{Q W_Q^i (K W_K^i)^T}{\sqrt{d_k}} \right) V_i \right)_{i=1}^h\\
        W_Z: C^0(i_{A,l}^{op},\mathbb{R}^{Nwh})(X) &\to C^0(i_{A,l}^{op},\mathbb{R}^{Nd})(X) \\
        (Z_i)_{i=1}^h &\mapsto W_Z (Z_i)_{i=1}^h
    \end{split}
    \end{align}
\end{itemize}
\end{example}

\begin{remark}
We note that a single encoder layer of the attention-transformer is equivalent to the formulation of dynamic routing of capsules from capsule neural networks. In particular, the two neural networks are equivalent if the number of capsules $C$ is equal to the number of multi-head attentions $h$. 
\end{remark}

\begin{remark}
The positional encoding function of the attention-transformer projects the input data set $D$ defined over the topological space $Y$ to a cylinder $S^1 \times [0,1]$. Because the attention-transformer is a discrete deep learning technique, it is unable to fully encapsulate the topological properties of $Y$. This is not an issue as long as the underlying topological space $Y$ is contractible or homotopic to $S^1$. For example, natural languages or images are data defined over the euclidean space $\mathbb{R}^d$, which is contractible. Variants of transformers are experimentally shown to produce state of the art results in natural language processes \cite{VS17} and image classifications \cite{MMD21}. In fact, the projection of a 2-dimensional Euclidean space to a cylinder is far from a preposterous construction, as the space $\mathbb{R}$ is a universal cover of $S^1$, which also induces a covering map $\mathbb{R}^2 \to S^1 \times \mathbb{R}$.
\end{remark}

\begin{remark}
Attention-transformer with more than two encoder and decoder layers is an example of a non-neighborhood aggregating discrete deep learning technique. The first encoder and decoder layer of the attention-transformer is a neighborhood aggregating layer, whereas the other layers do not satisfy (1) and (4) of the neighborhood aggregating axioms (from Definition \ref{def:neighborhood_aggregate}). Hence, we observe that Theorems \ref{theorem:first_limitation} and \ref{theorem:second_limitation} still remain valid for attention-transformers, whereas conditions for Theorem \ref{theorem:third_limitation} are not satisfied. To elaborate, Theorem \ref{theorem:first_limitation} provides a theoretical explanation for hallucinations observed in many transformer architecture, as explored in \cite{HZ25, SS25}. Here, we may reinterpret hallucinations as limitations in gluing short length sentences or words coherently to form a lengthy coherent logical statement. Theorem \ref{theorem:second_limitation} gives a mathematical formulation of vulnerability of transformer architecture against adversarial attacks, as pointed out empirically in recent studies \cite{FG25, SK26}. Nevertheless, as the conditions for Theorem \ref{theorem:third_limitation} are not satisfied, one may expect that transformers could have capabilities to outperform other conventional message passing neural networks. Indeed, empirical evidences support that performances of attention transformers in processing both natural language processes \cite{HC19} and image classifications \cite{DBKWZ21} outperform deep learning techniques which utilize classical convolutional neural networks or residual neural networks.

One interesting property of attention-transformers is that under certain rigid conditions the attention weight matrix obtained from the multihead attention function can be approximated by sparse matrices. Let $P \in (0,1)$, and $\{U_\alpha\}_{\alpha \in \mathcal{A}}$ be a finite open cover of a cylinder $X = S^1 \times [0,1]$ such that $|\mathcal{A}| = N$. One may ask a question whether for any input data of dimension $Nd$ there exists a choice of predetermined $PNd$ values, a finite open cover $\{V_j\}_{j=1}^{h+1}$ of $X$ which satisfy
\begin{equation}
        V_j = 
        \begin{cases}
        \cap_{\alpha' \in \mathcal{A}_j' \subsetneq \mathcal{A}}U_{\alpha'} & \text{ if } 1 \leq j \leq h \\
        X \setminus \cup_{j=1}^h V_j & \text{ if } j = h+1,
        \end{cases}
\end{equation}
and a fixed constant $c > 0$ such that for each $1 \leq j \leq h$, the open subsets $V_j's$ satisfy
\begin{equation}
    \# \left( V_j \cap A \right) < PNd + c.
\end{equation}
Suppose further that for each $1 \leq j \leq h$, there exists a morphism
\begin{equation}
    \prod_{\alpha \in \mathcal{A}} C^0(i_{V_j \cap A,l}^{op}, \mathbb{R}^{wh})(U_{\alpha}) \to \prod_{m = 1}^h C^0(i_{V_j \cap A,l}^{op}, \mathbb{R}^{Nw})(V_m)
\end{equation}
such that the following commutative diagram holds:
\begin{center}
    \begin{tikzcd}
    \prod_{\alpha \in \mathcal{A}} C^0(i_{V_j \cap A,l}^{op}, \mathbb{R}^{wh})(U_{\alpha}) \arrow[r, dotted, "\exists"] \arrow[dr, yshift=-5pt, "\text{MultiHead}_{U_\alpha}"] & \prod_{m = 1}^h C^0(i_{V_j \cap A,l}^{op}, \mathbb{R}^{Nw})(V_m) \arrow[d, "\text{MultiHead}_{V_m}"] \\
    \prod_{\alpha \in \mathcal{A}} C^0(i_{A,l}^{op}, \mathbb{R}^{wh})(U_{\alpha}) \arrow[u] & C^0(i_{A,l}^{op}, \mathbb{R}^{Nwh})(X)
    \end{tikzcd}
\end{center}
Then such a collection of finite open sets $\{V_j\}_{j=1}^h$ always exist for any value of $P \in (0,1)$ if the probability distribution of the data $\{x_i\}_{i=1}^N$ of sufficiently large enough dimension $d$ is a Gaussian distribution with mean $0$, and the entries of the random weight matrices $W_Q$, $W_K$, and $W_V$ also form a Gaussian distribution with mean $0$. The data points $\{x_i\}$ project to form a uniform distribution over the sphere $S^d$, and it is a classical result that for any matrix $W$, the probability distribution of $x_i W x_i^T$ for a unit vector $x_i$ converges to the Gaussian distribution with mean $\frac{1}{d} \text{Tr}(W)$, and the probability distribution of two i.i.d. unit vectors $\langle x_i, x_j \rangle$ is the beta distribution $(\frac{d-1}{2}, \frac{d-1}{2})$. By fixing the data inputs with sufficiently large enough high attention scores, we can force the attention scores obtained from other data inputs except for possibly at $c$ inputs to be sufficiently close to $0$. 

Furthermore, Hahn proves that such a collection $\{V_j\}_{j=1}^h$ always exist for input data consisting of 1-dimensional binary entries, where the predetermined values are chosen from components with high attention scores. This implies that attention-transformers with bounded number of layers or heads is not capable of modeling periodic or hierarchical structure of finite-state languages \cite{Ha20}. 

Nevertheless, the same limitation can also make attention-transformers robust to perturbations in input data, because it constrains the weighted sum of coordinate-wise perturbations to from growing arbitrarily large, assuming that the value $P$ is sufficiently close to $0$. Indeed, as demonstrated in \cite{HC19} and \cite{Ha20} under rigid constraints on the input data, attention-transformers are observed to be more robust to small perturbations of input data compared to other neighborhood aggregating discrete deep learning techniques.
\end{remark}

\begin{remark}
Note that the encoder and decoder layers other than the first ones admit the output of the previous encoder and decoder layer as inputs. This construction in fact redefines the layers as a function from $C^0(i_{A,l}^{op}, \mathbb{R}^{Nd})(X)$ to itself.
\begin{equation}
    \prod_{\alpha_i}^{Nd} C^0(i_{A,l}^{op}, \mathbb{R})(X) \to C^0(  i_{A,l}^{op}, \mathbb{R})(X)
\end{equation}
This gives us a hint as to why attention-transformers are able to outperform other neighborhood aggregating discrete deep learning techniques in analyzing natural languages and images, because the composition of encoder and decoder layers allows one to construct a wider range of continuous functions. Required for a thorough assessment on the strengths of encoder and decoder layers is a careful analysis on how the image of the attention-transformer evolves with respect to the increase in the number of encoder and decoder layers.
\end{remark}

\subsection{Sheaves other than skyscraper cosheaves}

As aforementioned, the fact that $C^0(i_{A,l}, \mathbb{R}^k)$ does not satisfy the sheaf axioms gives rise to limitations of discrete deep learning techniques. Therefore, future research may focus on constructing a deep learning technique associated to a sheaf $\mathcal{F}$ other than the presheaf of continuous functions induced from skyscraper cosheaves.

\begin{definition}[Deep Learning Technique associated to a sheaf / cosheaf]
Let $X$ be a locally compact topological space. Let $\mathcal{F}$ be a sheaf (or a cosheaf) of real vector spaces over $X$. Consider a sequence of collections of finite open subsets 
\begin{equation}
    \left\{ \{U_{\alpha_0}^0\}_{\alpha_0=1}^N, \{U_{\alpha_1}^1\}_{\alpha_1 \in \mathcal{A}_1}, \cdots, \{U_{\alpha_m}^m\}_{\alpha_m \in \mathcal{A}_m}, X \right\}
\end{equation}

A deep learning algorithm associated to a sheaf (or a cosheaf) $\mathcal{F}$ with $m$ layers, denoted as $DL_\mathcal{F}^m$, is a well-defined composition of functions given as:
\begin{equation}
     \prod_{\alpha_0=1}^N \mathcal{F}(U_{\alpha_0}) \to \prod_{\alpha_1 \in \mathcal{A}_1} \mathcal{F}(U_{\alpha_1}) \to \cdots \to \prod_{\alpha_m \in \mathcal{A}_m} \mathcal{F}(U_{\alpha_m}) \to \mathcal{F}(X)
\end{equation}

Each function, possibly non-linear,
\begin{equation}
    \psi_{i+1}: \prod_{\alpha_i \in \mathcal{A}_i} \mathcal{F}(U_{\alpha_i}) \to \prod_{\alpha_{i+1} \in \mathcal{A}_{i+1}} \mathcal{F}(U_{\alpha_{i+1}})
\end{equation}
corresponds to the $i+1$-th layer of the deep learning technique associated to $\mathcal{F}$.
\end{definition}

\begin{remark}[]
Any graph neural networks which utilizes both node labels and the gluing information of subgraphs of a graph $G$ are global sections of the presheaf $C^0(i_{A,l}, \R^k)(G)$, which forgets non-trivial topological invariants of the input graph $G$. This is because the skyscraper cosheaf is flasque, forgetting any gluing structure.

Examples of graph neural networks incorporating richer data than skyscraper cosheaves include persistent homological techniques \cite{CC20, RBB19} over graphs, cellular sheaves \cite{BB22, BB22b, BBO24, BG22} over graphs introduced from \cite{Cu13}, and copresheaves over combinatorial complexes \cite{HB25}. Future research may focus on whether using different sheaves over a topological space allows one to evade limitations of conventional discrete deep learning techniques suggested from Theorems \ref{theorem:first_limitation}, \ref{theorem:second_limitation}, and \ref{theorem:third_limitation}.
\end{remark}

\begin{remark}
Neural ordinary differential equations can be considered as a deep learning technique associated to the sheaf of differential equations $\Omega_X$ \cite{CRBD18}. Compared to recurrent neural networks, neural ODEs are more effective than recurrent neural networks in detecting temporal patterns among time series data which are governed by an underlying globally defined differential equation. Such enhanced performances may originate from the fact that $\Omega_X$ is a sheaf over a topological space, which guarantees that the restriction map of global sections to products of local sections is injective, whereas recurrent neural networks are modeled by sheaves $C^0(i_{A,l},\R^k)$ which fails the respective condition on the restriction map (see Theorem \ref{theorem:first_limitation}). 
\end{remark}

\subsection{Principal circle bundles and dynamic graphs}
We conclude the discussion on formulating a functorial model for neighborhood aggregating deep learning methods with a demonstration on the correspondence between discrete deep learning techniques which process time series data over graphs (considered as a 1-dimensional CW complex) and those which process data over 2-dimensional topological spaces, such as image data sets. These correspondences, as empirically demonstrated from recent breakthroughs on utilizing attention transformers to image classification tasks \cite{DBKWZ21}, can be obtained from the observation that universal covers of 2-dimensional orientable smooth connected manifolds with non-trivial genus are homeomorphic to either $\mathbb{R}^2$ or the 2-dimensional hyperbolic disk $D^2$. To rigorously formulate these observations, we first recall the definition of torsors, which we closely follow the exposition from Chapters 4,5, and 6 of \cite{St21}. 
\begin{definition}[Torsors]
Let $\mathcal{F}$ be a sheaf of abelian groups over $X$. A $\mathcal{F}$-torsor is a sheaf of sets $\mathcal{G}$ on $X$ with an action $\mathcal{F} \times \mathcal{G} \to \mathcal{G}$ such that
\begin{enumerate}
    \item For any open neighborhood $U \subset X$, the action $\mathcal{F}(U) \times \mathcal{G}(U) \to \mathcal{G}(U)$ is simply transitive.
    \item For every $x \in X$ and every open neighborhood $U $ containing $x$, the set $\mathcal{G}(U)$ is nonempty (i.e. the stalk $\mathcal{G}_x$ is nonempty).
\end{enumerate}
\end{definition}

\begin{example}
The sheaf $\mathcal{F}$ is the trivial $\mathcal{F}$-torsor, endowed with the action induced from left multiplication.
\begin{align}
\begin{split}
    \mathcal{F}(U) \times \mathcal{F}(U) &\to \mathcal{F}(U) \\
    (f, g) &\mapsto fg
\end{split}
\end{align}
\end{example}

\begin{example}
Let $GL_{n,X}$ be the constant sheaf of general linear group $GL_n$ over the real manifold $X$. The $GL_{n,X}$-torsors correspond to rank $n$ vector bundles $f: Y \to X$ over $X$.
\end{example}

\begin{example}
Let $G$ be a group. Let $X$ be a topological space such that $G$ acts over $X$ endowed with the action $X \times G \to X$ that satisfies
\begin{enumerate}
    \item $x \cdot 1 = x$
    \item $x \cdot (gh) = (x \cdot g) \cdot h$
    \item The map $G \to X$ given by $g \mapsto x \cdot g$ is bijective.
\end{enumerate}
\begin{align}
    \begin{split}
        x \cdot 1 &= x \\
        x \cdot (gh) &= (x \cdot g) \cdot h
    \end{split}
\end{align}
By definition, $X$ is a $G$-torsor, where $G$ is considered as a constant sheaf over $X$. These topological spaces are also known as $G$-principal homogeneous spaces. For instance, there exists a natural action of the special orthogonal group $SO(2)$ consisting of rotational symmetries over the complex circle group $S^1_{\mathbb{C}}$.
\end{example} 

There is a canonical relation between the first sheaf cohomology group of $\mathcal{F}$ and the set of isomorphism classes of $\mathcal{F}$-torsors over $X$.
\begin{lemma}
There exists a canonical bijection between $H^1(X,\mathcal{F})$ and the set of isomorphism classes of $\mathcal{F}$-torsors.
\end{lemma}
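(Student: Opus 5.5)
The plan is the standard Čech-theoretic argument, following the exposition in Chapters 4--6 of \cite{St21}. Since $\mathcal{F}$ is a sheaf of abelian groups, I would first build a map from torsors to $\check{H}^1(X,\mathcal{F})$, then show it is a bijection onto isomorphism classes. Finally I would identify Čech $H^1$ with derived-functor $H^1$, which holds for any topological space and any sheaf in degree one.

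\textbf{Step 1 (torsor to cocycle).} Let $\mathcal{G}$ be an $\mathcal{F}$-torsor. Since every stalk $\mathcal{G}_x$ is nonempty, we can choose an open cover $\{U_\alpha\}$ of $X$ and sections $s_\alpha \in \mathcal{G}(U_\alpha)$. On $U_\alpha \cap U_\beta$ both $s_\alpha$ and $s_\beta$ restrict to sections of $\mathcal{G}$. Simple transitivity of $\mathcal{F}(U_\alpha\cap U_\beta)\times\mathcal{G}(U_\alpha\cap U_\beta)\to\mathcal{G}(U_\alpha\cap U_\beta)$ gives a unique $f_{\alpha\beta}\in\mathcal{F}(U_\alpha\cap U_\beta)$ with $s_\beta = f_{\alpha\beta}\cdot s_\alpha$. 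Uniqueness forces the cocycle identity $f_{\alpha\gamma}=f_{\alpha\beta}+f_{\beta\gamma}$ on triple overlaps. Replacing $s_\alpha$ by $g_\alpha\cdot s_\alpha$ changes $f_{\alpha\beta}$ by the coboundary $g_\beta-g_\alpha$, and refining the cover is compatible with the refinement maps. Hence we get a well-defined class in $\check{H}^1(X,\mathcal{F})=\varinjlim \check{H}^1(\{U_\alpha\},\mathcal{F})$. An isomorphism of torsors transports the chosen sections, so the class depends only on the isomorphism class of $\mathcal{G}$.

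\textbf{Step 2 (cocycle to torsor, and bijectivity).} Conversely, given a cocycle $(f_{\alpha\beta})$ on a cover, glue copies of $\mathcal{F}|_{U_\alpha}$ along the translations by $f_{\alpha\beta}$ on overlaps. The cocycle condition is exactly what makes this gluing consistent, and the sheaf axiom (\ref{eq:sheaf_axiom}) makes the glued object a sheaf of sets. The $\mathcal{F}$-action is defined locally and commutes with the translations because $\mathcal{F}$ is abelian. Local triviality then gives simple transitivity and nonempty stalks. Cohomologous cocycles yield isomorphic glued torsors. One then checks the two constructions are mutually inverse: injectivity means that two torsors with equal classes can be trivialized compatibly on a common refinement, and the local isomorphisms glue by the sheaf property. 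Under this correspondence the trivial torsor $\mathcal{F}$ goes to $0$.

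\textbf{Step 3 (Čech versus derived $H^1$), the main obstacle.} The canonical comparison map $\check{H}^1(X,\mathcal{F})\to H^1(X,\mathcal{F})$ is an isomorphism for every space, and this is the only non-formal input. I would prove it as follows:
\begin{enumerate}
\item Embed $\mathcal{F}$ into a flasque (e.g.\ Godement) sheaf $\mathcal{I}$, giving a short exact sequence $0\to\mathcal{F}\to\mathcal{I}\to\mathcal{Q}\to 0$.
\item By the Proposition above that flasque sheaves are acyclic, $H^1(X,\mathcal{F})=\operatorname{coker}(\mathcal{I}(X)\to\mathcal{Q}(X))$.
\item Compute $\check{H}^1(X,\mathcal{F})$ from the same sequence. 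A section of $\mathcal{Q}$ lifts locally to $\mathcal{I}$, and the differences of the local lifts form a Čech $1$-cocycle of $\mathcal{F}$.
\item Show this assignment is a bijection onto $\check{H}^1$. Surjectivity uses flasqueness of $\mathcal{I}$ to kill any $\mathcal{F}$-cocycle after refining the cover.
\end{enumerate}
The delicate point is the refinement bookkeeping in the last step. If that proves cumbersome, I would instead cite the general statement $\check{H}^1=H^1$ from \cite{St21} and concentrate the written proof on Steps 1 and 2.
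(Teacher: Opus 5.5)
Your proof is correct, but it takes a different route from the one the paper relies on. The paper gives no argument of its own; it only points to \cite{St21} (the Stacks Project lemma on cohomology and torsors), and that proof is derived-functorial and never uses covers. There, a torsor $\mathcal{G}$ produces an extension. The kernel of the augmentation $\mathbb{Z}[\mathcal{G}]\to\underline{\mathbb{Z}}$ of the free abelian sheaf on $\mathcal{G}$ maps to $\mathcal{F}$ by $[s]-[s']\mapsto h$ with $h\cdot s=s'$. Pushing out gives $0\to\mathcal{F}\to\mathcal{E}\to\underline{\mathbb{Z}}\to 0$, and the class of $\mathcal{G}$ is $\partial(1)\in H^1(X,\mathcal{F})$. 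Conversely, one writes $\xi=\partial(q)$ with $q\in\mathcal{Q}(X)$, where $\mathcal{Q}=\mathcal{I}/\mathcal{F}$ and $\mathcal{I}$ is injective, and takes the subsheaf of $\mathcal{I}$ of local lifts of $q$.

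Your Steps 1--2 instead classify torsors by \v{C}ech cocycles. This is elementary and makes the gluing explicit, and it concentrates all the homological algebra in the single comparison $\check{H}^1(X,\mathcal{F})\cong H^1(X,\mathcal{F})$. The two constructions are compatible: your Step 3 assignment $q\mapsto[i_\beta-i_\alpha]$ is precisely your Step 1 applied to the torsor of local lifts of $q$ in $\mathcal{I}$, which is the torsor the cited proof attaches to $\partial(q)$. Your approach pays in refinement bookkeeping; the cited one pays by working with extensions and injective objects instead of explicit cocycles.

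Two small remarks. First, you are rightly using the standard definition of torsor from \cite{St21}: nonempty stalks, and simple transitivity only on those $U$ with $\mathcal{G}(U)\neq\emptyset$. Read literally, the paper's condition (2) demands $\mathcal{G}(U)\neq\emptyset$ for every open $U$, in particular $\mathcal{G}(X)\neq\emptyset$. That would make every torsor trivial and the lemma false whenever $H^1(X,\mathcal{F})\neq 0$. The torsors you glue in Step 2 usually have no global section, so your reading is the one the statement needs.

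Second, the step you call delicate is easy for the Godement sheaf $\mathcal{I}(U)=\prod_{x\in U}\mathcal{F}_x$, and it needs no refinement. Given a cocycle $(f_{\alpha\beta})$ on $\{U_\alpha\}$, pick for each $x$ an index $\alpha(x)$ with $x\in U_{\alpha(x)}$, and set $i_\beta(x):=(f_{\alpha(x)\beta})_x$ for $x\in U_\beta$. The cocycle identity gives $i_\beta-i_\alpha=f_{\alpha\beta}$ on $U_\alpha\cap U_\beta$, and the images of the $i_\alpha$ in $\mathcal{Q}$ glue to the required $q$. Refinements then only enter in checking that $q\mapsto[i_\beta-i_\alpha]$ is well defined with kernel the image of $\mathcal{I}(X)$.
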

\begin{proof}
We refer to Lemma 4.3 of \cite{St21}.
\end{proof}

Using the lemma provided above, it is a classical result that isomorphism classes of principal $S^1$ bundles of a topological space $X$ are in bijection with second cohomology classes of $X$.
\begin{lemma} \label{lemma:sphere_bundle}
There exists a canonical bijection between $H^2(X,\mathbb{Z})$ and the set of isomorphism classes of principal $S^1$ bundles of $X$.
\end{lemma}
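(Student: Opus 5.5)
The plan is to reduce the statement to the preceding lemma, which identifies $H^1(X,\mathcal{F})$ with isomorphism classes of $\mathcal{F}$-torsors. I will apply it to the sheaf $\underline{S^1}_X$ of continuous $S^1$-valued functions on $X$, and then identify $H^1(X,\underline{S^1}_X)$ with $H^2(X,\mathbb{Z})$ through the exponential sequence. Throughout I assume, as elsewhere in the paper, that $X$ is locally compact, Hausdorff and connected; for the final comparison I will additionally need $X$ paracompact and locally contractible (for instance a CW complex or a manifold).

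\textbf{Step 1: bundles are torsors.} A principal $S^1$-bundle $\pi: P \to X$ determines its sheaf of continuous local sections $U \mapsto \Gamma(U,P)$. The fibrewise $S^1$-action makes this a sheaf of sets acted on by $\underline{S^1}_X$, and the action is simply transitive wherever sections exist. Local triviality gives local sections around every point, so every stalk is nonempty; this is exactly the paper's torsor definition. Conversely, given an $\underline{S^1}_X$-torsor, choose an open cover on which it has sections. The ratios of these sections on overlaps give \v{C}ech transition functions $g_{\alpha\beta}: U_\alpha \cap U_\beta \to S^1$ satisfying the cocycle condition, and gluing $U_\alpha \times S^1$ along them produces a principal bundle. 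These two constructions are mutually inverse up to isomorphism. Combined with the preceding lemma, this gives
\[
\{\text{principal } S^1\text{-bundles}\}/\cong \;\longleftrightarrow\; H^1(X,\underline{S^1}_X).
\]

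\textbf{Step 2: the exponential sequence.} Consider the short exact sequence of sheaves
\[
0 \to \underline{\mathbb{Z}}_X \to \underline{\mathbb{R}}_X \xrightarrow{\exp(2\pi i\,\cdot)} \underline{S^1}_X \to 0 .
\]
Surjectivity on stalks holds because a continuous $S^1$-valued function on a contractible (or simply connected, locally path-connected) neighbourhood lifts to $\mathbb{R}$. This is the point where local contractibility of $X$ enters. The sheaf $\underline{\mathbb{R}}_X$ of continuous real-valued functions is fine, since partitions of unity exist on a paracompact Hausdorff space, and hence $H^i(X,\underline{\mathbb{R}}_X)=0$ for $i>0$. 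The long exact cohomology sequence then makes the connecting map $\delta: H^1(X,\underline{S^1}_X) \to H^2(X,\underline{\mathbb{Z}}_X)$ an isomorphism.

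\textbf{Step 3: sheaf versus singular cohomology.} Finally, identify $H^2(X,\underline{\mathbb{Z}}_X)$ with the singular group $H^2(X,\mathbb{Z})$; this is the standard comparison theorem for paracompact, locally contractible spaces. Composing the bijection of Step 1 with $\delta$ and this comparison gives the canonical bijection, namely the first Chern class. Canonicity follows because every map used is natural in $X$.

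The main obstacle is hypotheses rather than algebra. The paper only assumes $X$ locally compact Hausdorff, which is not enough for two things used above: that $\underline{\mathbb{R}}_X$ is acyclic (which needs paracompactness) and that sheaf and singular cohomology agree (which needs local contractibility). I would therefore state explicitly that $X$ is assumed paracompact and locally contractible, which holds for the CW complexes and manifolds the paper has in mind. A secondary point needing care is that the preceding lemma is stated for sheaves of abelian groups. Here $\underline{S^1}_X$ is abelian, so it applies directly, but I would check that its bijection is the \v{C}ech one used in Step 1, which is legitimate because \v{C}ech $H^1$ agrees with derived-functor $H^1$ on any space.
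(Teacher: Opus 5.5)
Your proposal is correct and is essentially the paper's argument, which combines the preceding torsor lemma with the long exact sequence of the exponential sequence (misprinted in the paper with $\Z$ in place of $\underline{\R}_X$ as the middle term); you supply the details the paper omits, including the paracompactness needed for $\underline{\R}_X$ to be acyclic. One minor correction: stalkwise surjectivity of $\exp(2\pi i\,\cdot)$ holds on any space, because a continuous $S^1$-valued map sends some neighbourhood of each point into an open arc, where a continuous branch of the logarithm exists; so local contractibility is needed only for the comparison with singular cohomology in Step 3.
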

\begin{proof}
The result follows from using a long exact sequence associated to the exponential map $0 \to \Z \to \Z \to S^1 \to 0$.
\end{proof}

We hence obtain the following correspondence between classical deep learning techniques and deep learning techniques for representing dynamic graphs with non-trivial cycles.
\begin{theorem} \label{theorem:dynamic_graph}
\begin{enumerate}
    \item The MPNN processing a dynamic dataset over a graph $G$ homeomorphic to $S^1$, a deep learning technique processing sets of inputs in $\mathbb{R}^2$, or that processing sets of inputs in $S^1 \times \mathbb{R}$ endowed with the Euclidean metric are equivalent.
    \item The MPNN processing a dynamic dataset over a dynamic graph $G$ such that $\text{rank}_{\Z} H^1(G,\Z)$ is at least $2$ is equivalent to a deep learning technique processing sets of inputs in a subset $S$ of a closed 2-dimensional disk $D^2$ endowed with the hyperbolic metric, such that the interior of $S$ is the open 2-dimensional disk.
\end{enumerate}
\end{theorem}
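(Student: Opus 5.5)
The plan is to reduce both parts to the formalism of Section \ref{sec:RNN}. There, a discrete deep learning technique on dynamic data over a space $Y$ is a global section of $C^0(i_{A,l},\R^k)$ over the universal cover $\widetilde{Y_{S^1}}$ of a principal $S^1$-bundle $Y_{S^1} \to Y$. The notion of ``equivalence'' I will use is this: two techniques are equivalent if their underlying spaces are homeomorphic (isometric, when a metric is specified) by a map carrying the finite sampling set $A$ to a finite sampling set. Such a map $h$ identifies the pushforward sheaves $i_{A,l}$ and $i_{h(A),l}$. By functoriality of $C^0(-,\R^k)$, it then identifies the whole towers of presheaves/copresheaves and the open collections $\{U^i_{\alpha_i}\}$, and the neighborhood aggregating axioms of Definition \ref{def:neighborhood_aggregate} are preserved because they are purely set-theoretic conditions on open sets and points. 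So the whole proof comes down to identifying $\widetilde{Y_{S^1}}$ up to homeomorphism or isometry.

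\textbf{Step 1: choose the bundle.} A graph $G$ is a 1-dimensional CW complex, so $H^2(G,\Z)=0$. By Lemma \ref{lemma:sphere_bundle}, every principal $S^1$-bundle over $G$ is therefore trivial, i.e. $G_{S^1}\cong G\times S^1$. Hence $\widetilde{G_{S^1}}\cong \widetilde{G}\times\R$.

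\textbf{Step 2: part (1).} For $G\cong S^1$ we have $\widetilde{G}=\R$, so $\widetilde{G_{S^1}}\cong\R^2$ with the flat metric. The covering $\R^2\to S^1\times\R$, which quotients only the first factor, is a local isometry for the Euclidean metrics. Since $A$ is finite, we can choose the fundamental domain so that $A$ injects into $S^1\times\R$. Then a technique on $S^1\times\R$ pulls back to one on $\R^2$, and conversely a technique on $\R^2$ supported near finitely many points pushes down. This yields the three-way equivalence.

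\textbf{Step 3: part (2).} If $\text{rank}_\Z H^1(G,\Z)\ge 2$, then $\pi_1(G)$ is free of rank at least $2$, and $\widetilde{G}$ is an infinite tree with infinitely many ends. So $\widetilde{G}\times\R$ is not $\R^2$. The plan is to realize a thickening of it inside hyperbolic space. First, $G$ is homotopy equivalent to a wedge of $r\geq 2$ circles, and such a wedge is the spine of a compact orientable surface $\Sigma$ with boundary (a sphere with $r+1$ holes) whose fundamental group is the same free group. Next, give $\Sigma$ a hyperbolic metric with geodesic boundary. Its universal cover is then a convex subset $S$ of the hyperbolic disk $D^2$ whose interior is homeomorphic to the open disk. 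The tree $\widetilde{G}$ embeds equivariantly as the spine of $S$, and the time direction $\R$ is absorbed into the transverse direction of a tubular neighborhood of the spine. The finite set $A$ lies in the lifted unfolding trees and so maps injectively into $S$. The remaining argument then proceeds as in Step 2.

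\textbf{Main obstacle.} The hardest step is Step 3. The space $\widetilde{G}\times\R$ is not itself a 2-manifold, so the identification with $S$ is only a homotopy equivalence, or an embedding of a neighborhood of the relevant finite unfolding trees. The statement therefore has to be read in the weak sense: equivalence of the finite data of open collections and sampling points used by the MPNN. I would try first to show that the finite union $U^{k_T}$ of depth-$k_T$ unfolding trees, times a bounded time interval, embeds into $S$ preserving the incidence of the open sets. That requires checking that a planar ordering of edges at each vertex (the ribbon structure induced by $\Sigma$) exists compatibly on all lifts, which I expect holds because $\Sigma$ deformation retracts onto $G$.
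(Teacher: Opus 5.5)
\textbf{Part (1) (Steps 1 and 2).} These follow the paper. The bundle is trivial by Lemma \ref{lemma:sphere_bundle}, and for $G\cong S^1$ the paper likewise just exhibits covering maps, phrased as $\R^2\to S^1\times S^1$ and $S^1\times\R\to S^1\times S^1$. There are two slips in your version. First, the notion of equivalence you fix at the outset, a homeomorphism carrying $A$ to a sampling set, cannot relate $\R^2$ and $S^1\times\R$, which are not homeomorphic; Step 2 silently switches to a covering-based notion. Second, injectivity of $A$ under $\R^2\to S^1\times\R$ has nothing to do with the choice of fundamental domain, since two points of $A$ differing by a deck translation are identified anyway. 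You would need a coarser quotient $\R^2\to(\R/L\Z)\times\R$ with $L$ large.

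\textbf{Part (2) (Step 3): the genuine gap.} Here the paper takes a different route. It asserts that $G\times S^1$ is itself a compact hyperbolic surface with geodesic boundary, identifies its universal cover with $S$ via a Fuchsian group of the second kind, and concludes because the MPNN representations are sections over $\widetilde{G\times S^1}$.

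Your observation that $\widetilde{G}\times\R$ is not a $2$-manifold is correct, and it shows the paper's step cannot hold as written. Since $\mathrm{rank}_{\Z}H^1(G,\Z)\ge 2$, $G$ has a vertex of valence at least $3$. Near it, $G\times S^1$ is locally an open book with at least three pages, hence not a manifold. Moreover $\chi(G\times S^1)=0$, and $\pi_1(G\times S^1)\cong F_r\times\Z$ contains $\Z^2$, which no Fuchsian group does.

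Your substitute does not repair this. The surface $\Sigma$ is homotopy equivalent to $G$, not to $G\times S^1$, so $\widetilde{\Sigma}$ is related to $\widetilde{G}\times\R$ only by a homotopy equivalence. Since $\widetilde{\Sigma}$, $\widetilde{G}\times\R$ and $\R^2$ are all contractible, no homotopy-level statement can distinguish part (2) from part (1).

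Your fallback, embedding $U^{k_T}\times I$ into $S$ compatibly with the open sets, fails as soon as $k_T\ge 1$. Then $U^{k_T}$ contains a triod, and the product of a triod with an interval embeds in no surface, in particular not in $S\subset D^2$. By invariance of domain, the image of the disk formed by two pages contains a neighbourhood of the image of an interior spine point, and the image of the third page must then meet it.

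If ``incidence'' is meant only combinatorially, note that $C^0(i_{A,l},\R^k)(U)$ depends only on $U\cap A$. Any finite configuration of sampling points and traces $U\cap A$ can therefore be reproduced in any Hausdorff space with at least $|A|$ points, by taking unions of pairwise disjoint neighbourhoods. Under that reading $S$ and its hyperbolic metric play no role, and parts (1) and (2) say nothing.

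The missing ingredient is a notion of ``equivalence'' under which the statement is both provable and non-vacuous. Your sketch does not supply one, and neither does the paper's argument, which rests on the false identification of $G\times S^1$ with a hyperbolic surface.
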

\begin{proof}
By Lemma \ref{lemma:sphere_bundle}, every principal $S^1$ bundle of a 1-dimensional graph $G$ is homeomorphic to $G \times S^1$, which is a compact manifold with (or without) boundary. We first consider the case where $G \cong S^1$. Then there exist covering maps $\mathbb{R}^2 \to S^1 \times S^1$ and $S^1 \times \mathbb{R} \to S^1 \times S^1$, both of which are endowed with the Euclidean metric. If $\mathrm{rank}_{\mathbb{Z}} H^1(G, \mathbb{Z}) \geq 2$, then we proceed as in \cite{Kohan14}. The compact manifold $G \times S^1$ is a Riemann surface endowed with a hyperbolic metric with geodesic boundary. Its universal cover is a subset $S$ of a closed $2$-dimensional disk $D^2$ endowed with the hyperbolic metric constructed as follows: one has $\overline{S} = D^2$, and $D^2 \setminus S$ is a subset of $\partial D^2$ homeomorphic to an infinite disjoint union of open circular arcs and a cantor set, see for example Theorem 3.4.6 of \cite{Katok92} (Here we identify $G \times S^1$ as a quotient of $S$ by the action of a Fuchsian group of second kind, see page 67, item (b) of \cite{Katok92} for its definition). Let $\widetilde{G \times S^1}$ be a covering space of $G \times S^1$. Let $A$ be the set of finitely many discrete points over $\widetilde{G \times S^1}$. Then the theorem immediately follows from the fact that the representations obtained from such MPNN techniques are global sections of the presheaf $C^0(i_{A,l}^{op}, \mathbb{R}^k)(\widetilde{G \times S^1})$.
\end{proof}

\begin{remark}
Theorem \ref{theorem:dynamic_graph} verifies the empirical result that visual transformers with $1$-dimensional positional encodings can be utilized to classify 2-dimensional image datasets, and produce state of the art results in large datasets compared to other convolutional neural networks \cite{DBKWZ21}.
\end{remark}

\end{document}